\documentclass{article}

\usepackage{iclr2027_conference,times}

\usepackage[utf8]{inputenc}
\usepackage[T1]{fontenc}
\usepackage{hyperref}
\hypersetup{hidelinks}
\usepackage{url}
\usepackage{booktabs}
\usepackage{array}
\usepackage{amsfonts}
\usepackage{xcolor}
\usepackage{amsmath}
\usepackage{amssymb}
\usepackage{amsthm}
\usepackage{graphicx}
\usepackage{wrapfig}
\usepackage{caption}
\usepackage{multirow}
\usepackage{microtype}
\usepackage{algpseudocode}
\usepackage{float}

\floatstyle{ruled}
\newfloat{algorithm}{tbp}{loa}
\floatname{algorithm}{Algorithm}

\newtheorem{theorem}{Theorem}
\newtheorem{lemma}[theorem]{Lemma}
\newtheorem{proposition}[theorem]{Proposition}
\newtheorem{corollary}[theorem]{Corollary}
\newtheorem{definition}[theorem]{Definition}
\newtheorem{assumption}{Assumption}
\newtheorem{remark}{Remark}

\usepackage{pifont}
\newcommand{\xmark}{\ding{55}}
\newcommand{\cmark}{\ding{51}}

\usepackage[capitalise,nameinlink]{cleveref}
\crefname{algorithm}{Algorithm}{Algorithms}
\Crefname{algorithm}{Algorithm}{Algorithms}
\crefname{assumption}{Assumption}{Assumptions}
\Crefname{assumption}{Assumption}{Assumptions}
\crefname{remark}{Remark}{Remarks}
\Crefname{remark}{Remark}{Remarks}

\newcommand{\Aref}[1]{Appendix~\ref{#1}}

\title{Pick the Mixture, Not the Samples:\\Combinatorial Data Selection for LLMs}

\author{
  Luyang Zhang$^{1}$ \quad Jialu Wang$^{2}$ \quad Bebei Li$^{1}$\\
  {\normalfont $^{1}$Carnegie Mellon University}\\
  {\normalfont $^{2}$University of California, Santa Cruz}\\
  {\normalfont\small \href{mailto:luyangz@cmu.edu}{\texttt{luyangz@cmu.edu}},
  \href{mailto:faldict@ucsc.edu}{\texttt{faldict@ucsc.edu}},
  \href{mailto:beibeili@andrew.cmu.edu}{\texttt{beibeili@andrew.cmu.edu}}}
}

\iclrfinalcopy
\usepackage{etoolbox}
\makeatletter
\patchcmd{\@maketitle}{Published as a conference paper at ICLR 2027}{Preprint}{}{\PackageError{arxiv}{Preprint header patch failed}{}}
\makeatother

\begin{document}

\maketitle

\begin{abstract}
Data selection for LLM fine-tuning faces two facts about real training: data pools draw from many distinct domains, and evaluation spans diverse tasks. As a result, the same sample helps different tasks differently, and samples interact during training so that a subset's value is not the sum of its parts. Current data-selection methods leave these realities largely unaddressed. We formalize them as \emph{Combinatorial Data Selection} (CDS), a framework that separates \emph{attribution} (how samples are scored) from \emph{selection} (which subset to pick) and models the non-additive interactions among samples. On a real instruction-tuning data pool, we find substantial cross-domain complementarity (joint utility exceeds the sum of singletons), alongside redundancy within domains. Inspired by this, we propose \emph{Mixture-then-Select}, a two-stage algorithm that first decides how many samples to take from each domain, then ranks samples within each domain by attribution score. Theoretically, the gap to the optimal subset is linear in the selection size and independent of data pool size. Empirically, Mixture-then-Select outperforms strong baselines by 20 to 54\%, is robust across attribution methods, and works across different models and data pool sizes. The framework also extends to the data purchasing problem under a budget constraint when sample prices vary. Our findings suggest that allocating selection across domains offers a promising path toward more efficient and effective data selection for modern LLMs.{\renewcommand{\thefootnote}{\fnsymbol{footnote}}\footnote[2]{Code: \url{https://github.com/LZhang12345/mixture-then-select}.}}
\end{abstract}

\section{Introduction}
\label{sec:intro}
From a large data pool spanning many skill domains (e.g., math, code, dialog, instruction following), which small subset should we train on for LLM fine-tuning? Training on a well-chosen small subset often matches or beats training on the full data pool~\citep{zhou2023lima,liu2024deita,xia2024less,li2024ifd}: what is selected matters more than how many. But two features of training data make selection hard. First, samples help different evaluation tasks unevenly: a sample that helps one task may be useless or harmful for another. Second, samples interact during training: training on two samples together can yield more than the sum of their contributions (\emph{complementary}) or less (\emph{substitutive}).

Together, these facts mean per-sample scoring is fundamentally insufficient: it ignores how chosen samples combine and which tasks each is meant to serve. We formalize this as \emph{Combinatorial Data Selection} (CDS), in which sample scoring (\emph{attribution}) and subset choice (\emph{selection}) are separate decisions. CDS extends classical submodular selection~\citep[e.g.,][]{wei2015submodularity,mirzasoleiman2020coresets} in two ways: (i) it admits both complementary and substitutive sample interactions, whereas submodular formulations capture only the substitutive (diminishing-returns) case; and (ii) it handles multiple downstream tasks jointly rather than one at a time. Any per-sample attribution score~\citep[e.g.,][]{xia2024less,koh2017understanding} plugs into the rule.

As solving CDS exactly is intractable, we propose \emph{Mixture-then-Select} (MtS), a two-stage algorithm that first decides how many samples to take from each domain based on per-domain utility measurements (capturing cross-domain effects), then selects the top samples within each domain by attribution score. MtS is attribution-agnostic (it plugs into any per-sample score, whether oracle values, gradient-based, loss-based, or other) with a theoretical guarantee under the stated structural and predictor-accuracy assumptions: its utility is provably within an additive gap of the optimal subset's that grows linearly in the number of samples picked and is independent of data pool size, so quality does not degrade as pools grow.

Empirically, we evaluate MtS across diverse skill domains, evaluation tasks, model families (Qwen3-8B, Llama-3.1-8B-Instruct), and data pool sizes (50K to 200K). We first find that the data pool exhibits substantial cross-domain complementarity, which MtS's domain-mixture decision exploits. Results show that MtS outperforms eight baselines across four selection principles by 20 to 54\% relative{\color{black} in held-out utility}: per-sample quality (LESS~\citep{xia2024less}, EL2N~\citep{paul2021deep}, Superfiltering~\citep{li2024superfiltering}, DEITA~\citep{liu2024deita}), diversity ($k$-DPP~\citep{kulesza2011kdpp}, Facility Location~\citep{wei2015submodularity}), gradient matching (CRAIG~\citep{mirzasoleiman2020coresets}), and partition-based (Stratified). {\color{black}Additional comparisons with recent mixture-optimization methods across diverse downstream benchmarks provide further evidence of MtS's robustness across evaluation tasks and data domain partitions.} MtS also applies to buying training data when samples have different prices and the buyer has a fixed budget. Across every budget level we tested, MtS reaches $\geq 89\%$ of the best achievable performance and matches the optimum on $58\%$ instances.

In summary, this paper makes three contributions. \textbf{(i) Framework.} We frame LLM data selection as combinatorial allocation across domains, separating attribution from selection and capturing both complementary and substitutive sample interactions. \textbf{(ii) Methodology.} We propose a two-stage algorithm that first picks the per-domain mixture and then ranks samples within each domain, with both theoretical and empirical performance guarantees. \textbf{(iii) Application.} MtS gives LLM practitioners a practical algorithm for a real-world problem: deciding which training data to buy under a fixed budget.

\section{Related work}
\label{sec:related}

\begin{table}[!h]

\centering
\footnotesize
\setlength{\tabcolsep}{3pt}
\renewcommand{\arraystretch}{0.95}
\caption{Comparison with related methods. Columns indicate whether a method models sample-pair interactions, supports cardinality or nonuniform-price budgets, provides a subset-utility approximation guarantee, and optimizes domain allocation.\protect\footnotemark{}}
\label{tab:comparison}
\resizebox{\linewidth}{!}{%
\begin{tabular}{@{}llccccc@{}}
\toprule
\textbf{Direction} & \textbf{Method} & \textbf{Interactions} & \textbf{Size} & \textbf{Budget} & \textbf{Guarantee} & \textbf{Mixture} \\
\midrule
\multirow{2}{*}{Data attribution} & Data Shapley~\citep{ghorbani2019data}                                                & \xmark & \xmark & \xmark & \xmark & \xmark \\
                                  & Per-sample scores                                                                    & \xmark & \xmark & \xmark & \xmark & \xmark \\
\midrule
\multirow{6}{*}{Data selection and distillation} & Dataset distillation~\citep{wang2018dataset,zhao2021dataset}        & \xmark & \cmark & \xmark & \xmark & \xmark \\
                                                 & $k$-DPP~\citep{kulesza2011kdpp}                                     & \xmark & \cmark & \xmark & \xmark & \xmark \\
                                                 & DoReMi~\citep{xie2023doremi} / RegMix~\citep{liu2025regmix}         & \xmark & \xmark & \xmark & \xmark & \cmark \\
                                                 & Stratified                                                          & \xmark & \cmark & \xmark & \xmark & \cmark \\
                                                 & CRAIG~\citep{mirzasoleiman2020coresets} / Facility Loc.~\citep{wei2015submodularity} & \xmark & \cmark & \xmark & \cmark & \xmark \\
                                                 & SMART~\citep{smart2024}                                             & \xmark & \cmark & \xmark & \cmark & \cmark \\
\midrule
\multirow{2}{*}{Data pricing and markets} & Data Banzhaf~\citep{wang2023banzhaf} / DVRL~\citep{yoon2020data}                          & \xmark & \xmark & \xmark & \xmark & \xmark \\
                                          & Fairshare~\citep{zhang2025fairshare} / Data Market Design~\citep{ravindranath2023datamarket} & \xmark & \xmark & \cmark & \xmark & \xmark \\
\midrule
Ours & Mixture-then-Select                                                                                                                  & \cmark & \cmark & \cmark & \cmark & \cmark \\
\bottomrule
\end{tabular}%
}

\end{table}
\footnotetext{Similarity kernels and gradient surrogates do not count as Interactions; convergence guarantees on inner optimization do not count as Guarantee.}

\textbf{Data attribution.}
Per-sample scores fall on a cost spectrum: a single forward pass for instruction-tuning quality classifiers~\citep{liu2024deita,li2024superfiltering,li2024ifd} and EL2N~\citep{paul2021deep}; per-sample gradients for LESS~\citep{xia2024less}, TracIn~\citep{pruthi2020tracin}, and Hessian-based influence approximations~\citep{kwon2023datainf,basu2020influence,koh2017understanding}; repeated training-trace queries for Shapley~\citep{ghorbani2019data} and datamodels~\citep{ilyas2022datamodels}. Downstream selection ranks and thresholds the scores.

\textbf{Data selection and distillation.}
Subset-level methods reason about $S$ as a whole: submodular and coreset approaches under cardinality constraints~\citep{wei2015submodularity,mirzasoleiman2020coresets,smart2024,maharana2023d2,nemhauser1978analysis,bian2017guarantees,das2011submodular,harshaw2019submodular,sviridenko2004note,mirzasoleiman2015lazier,feige1998threshold,vondrak2013symmetry} (CRAIG, Facility Location), determinantal point processes~\citep{kulesza2011kdpp,kothawade2022prism}, and dataset distillation~\citep{wang2018dataset,zhao2021dataset}. {DoReMi~\citep{xie2023doremi} learns proxy-derived domain proportions; RegMix~\citep{liu2025regmix} predicts them from proxy-mixture runs; DIDS~\citep{shi2025dids} updates sampling ratios during training. MtS instead studies a complementary question: which individual samples to acquire before training under a cardinality or price budget, accounting for pairwise interactions.} M\"obius interaction indices~\citep{grabisch1999axiomatic,tsai2022faithshap} are the decomposition tool our framework uses for pairwise utility.

\textbf{Data pricing and markets.}
Data pricing values data as an economic asset: Shapley~\citep{ghorbani2019data} and Banzhaf~\citep{wang2023banzhaf} contributions, RL-based valuation~\citep{yoon2020data}, market design under incentive constraints~\citep{ravindranath2023datamarket}, LLM-specific fair-share pricing~\citep{zhang2025fairshare}, and attribution-aware creator compensation in generative music~\citep{zhang2026s}. \Cref{tab:comparison} summarizes the three threads.

\section{Combinatorial data selection}
\label{sec:formulation}
\label{sec:algorithms}
\label{sec:theory}

Training-data selection is a combinatorial problem: a subset's value depends on how its samples interact during fine-tuning, not just on which samples it contains. We formalize selection as a constrained optimization (\Cref{sec:problem-setup}), propose an efficient algorithm to solve it (\Cref{sec:mg_algorithm}), and prove a theoretical performance guarantee (\Cref{sec:spec_main}).

\begin{figure}[t]
\centering
\includegraphics[width=\linewidth]{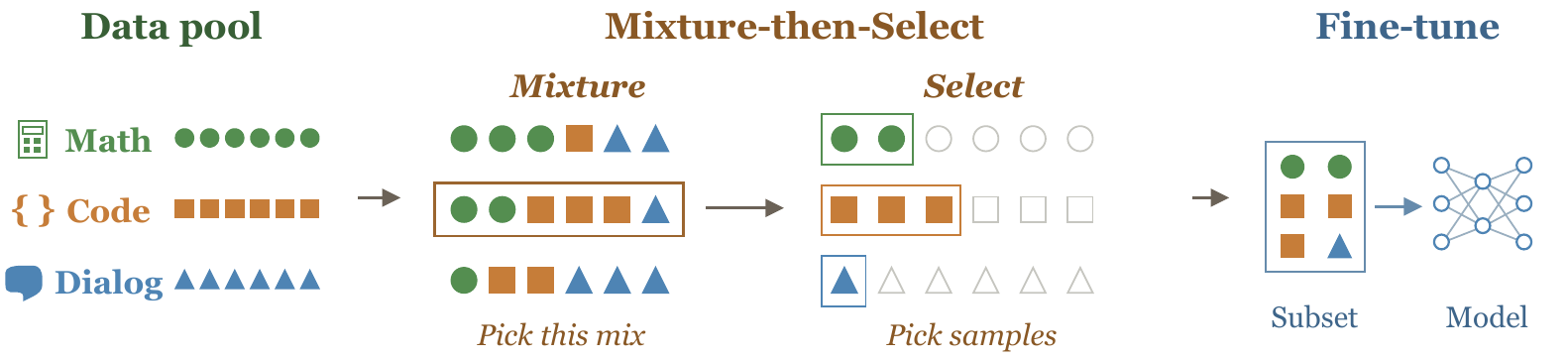}
\caption{Mixture-then-Select first uses a learned utility predictor to choose how many samples to take from each domain, then selects samples ranked by attribution score within each domain. The selected subset is used for fine-tuning.}
\label{fig:framework}

\end{figure}

\subsection{Data selection and pairwise interactions}
\label{sec:problem-setup}

We introduce the data selection problem, modeling how training samples interact during fine-tuning, and a structural decomposition of those interactions.

\textbf{The selection problem.}
An LLM developer starts with a base model and a data pool $\mathcal{D} = \{1, \ldots, N\}$ of $N$ candidate training samples, each carrying a per-sample cost $p_j > 0$. The LLM developer cares about performance across $T$ evaluation tasks (e.g., code generation, mathematical reasoning, instruction following). The data pool is partitioned by skill into $T$ \emph{domains} $\mathcal{D} = \bigsqcup_{c=1}^T C_c$, one per task. For each task $t$, fine-tuning on any subset $S \subseteq \mathcal{D}$ yields a task utility $u_t(S) \geq 0$, the improvement in task-$t$ performance \emph{over the base model}, with $u_t(\emptyset) = 0$ by construction and higher $u_t$ corresponding to a better-trained model. These per-task utilities combine into a single scalar objective via an aggregation function $f$:
\begin{equation}
    u(S) = f\bigl(u_1(S), \ldots, u_T(S)\bigr).
    \label{eq:multitask}
\end{equation}
Modern LLMs are evaluated across multiple tasks, which developers may prioritize differently. A natural choice for $f$ to incorporate this multi-objective structure is the \emph{weighted average} $f = \sum_t w_t u_t$, with task weight $w_t$ denoting the relative importance the LLM developer attaches to task $t$ (e.g., math weighted above chit-chat); alternative aggregators (minimum, concave) are discussed in \Aref{app:theory}. 

With a budget $B$ on total cost, the LLM developer selects a subset $S \subseteq \mathcal{D}$ to maximize the aggregate utility $u(S)$ subject to $c(S) = \sum_{j\in S} p_j \le B$. Prior data-selection methods~\citep[e.g.,][]{xia2024less} are \textbf{special cases} of this formulation in two ways: they target a single task ($T{=}1$) and assume uniform per-sample cost ($p_j{=}1$, a cardinality constraint). Our framework is \textbf{more general}, supporting arbitrary $T$, any aggregator $f$, and heterogeneous per-sample costs, making it a natural fit for data-marketplace settings where different vendors set different prices~\citep[e.g.,][]{zhang2025fairshare}. 

\textbf{Pairwise approximation of set utility.}
Training samples interact during fine-tuning, so the set utility $u(S)$ is non-additive: the utility of a subset is not the sum of its singleton utilities~\citep{ghorbani2019data}. The full set function thus has $2^N$ possible values and cannot be reduced to per-sample terms. To capture the interactions tractably, we use the M\"obius inversion formula~\citep{grabisch1999axiomatic, tsai2022faithshap}, a decomposition of set utility into individual-sample contributions and interactions of increasing order. Keeping only the singleton and pairwise contributions yields
\begin{equation}
    u(S) = \sum_{j \in S} v_j + \sum_{\{i,j\} \subseteq S} \varphi_{ij} + \varepsilon(S),
    \label{eq:decomposition}
\end{equation}
where $v_j = u(\{j\})$ is the singleton value, $\varphi_{ij} = u(\{i,j\}) - v_i - v_j$ is the pairwise interaction, and $\varepsilon(S)$ collects contributions from triplets and higher-order interactions. Pairwise truncation is the standard choice for such decompositions~\citep{grabisch1999axiomatic, tsai2022faithshap}; on our data pool, the pairwise approximation $\tilde u$ captures $\geq 90\%$ of the variance in $u(S)$ across $\sim 100$ measured triplets ($R^2 = 0.90$, with pairwise residuals below the seed-noise floor; \Aref{app:theory}), and higher-order interactions are infeasible to estimate at our data pool size. The pairwise interactions form a symmetric matrix $\Phi := (\varphi_{ij})_{i,j}$. We write $\tilde u(S) := u(S) - \varepsilon(S) = \sum_{j \in S} v_j + \sum_{\{i,j\} \subseteq S} \varphi_{ij}$ for this pairwise approximation, the deterministic truncation our analysis works with.

\textbf{Decomposition of the pairwise utility.}
The pairwise utility matrix $\Phi$ encodes interactions between every pair of samples, but a selection algorithm only faces two coarser choices: \emph{which domains $S$ draws from} (the mixture composition $\mathbf{n}(S) = (|S \cap C_c|)_{c=1}^T$) and \emph{which specific samples within each domain}. To align the matrix with these two choices, we decompose $\Phi$ along the domain partition $\mathcal{D} = \bigsqcup_c C_c$ into three pieces:
\begin{equation}
    \Phi \;=\; \Phi_w \;+\; \Phi_{\text{block}} \;+\; \Phi_{\text{res}},
    \label{eq:decomp}
\end{equation}
where the within-domain interaction matrix $\Phi_w$ is block-diagonal (non-zero only on within-domain pairs); the cross-domain block-mean matrix $\Phi_{\text{block}}$ is zero within each domain and equals the cross-domain block mean $\bar\varphi_{c,c'} := \frac{1}{|C_c||C_{c'}|}\sum_{i \in C_c, j \in C_{c'}} \varphi_{ij}$ on each cross-domain block ($c \neq c'$); and the residual matrix $\Phi_{\text{res}} = \Phi - \Phi_w - \Phi_{\text{block}}$ captures the deviation from those block means. The within-domain choice is governed by $\Phi_w$, and the mixture choice by $\Phi_{\text{block}}$. {Equation~\ref{eq:decomp} separates within-domain and cross-domain interactions. We evaluate the pairwise approximation empirically in Section~\ref{sec:interaction_validation}.}

\subsection{The Mixture-then-Select algorithm}
\label{sec:mg_algorithm}

Selecting a subset $S$ to maximize Eq.~(\ref{eq:multitask}) is hard for two reasons. First, the search space of subsets is exponential in the data pool size $N$, so direct enumeration is infeasible (the problem is APX-hard, ruling out arbitrarily good polynomial-time approximation; Theorem~\ref{thm:lower_bound}). Second, sample-level methods like LESS~\citep{xia2024less}, DEITA~\citep{liu2024deita}, and influence-based selectors~\citep{koh2017understanding} score each sample independently and take the top-$m$ by score, which ignores how the resulting subset distributes across domains and so cannot exploit the domain mixture. We design an algorithm that exploits the domain structure of the pairwise decomposition Eq.~(\ref{eq:decomp}).

For selection size $m$, any algorithm selecting $S$ from the domain-partitioned data pool makes two distinct choices: the domain mixture (how many samples to take from each domain) and the within-domain identities (which specific samples to take inside each domain). The decomposition $\Phi = \Phi_w + \Phi_{\text{block}} + \Phi_{\text{res}}$ tells us these two choices are decoupled at the pairwise level. The cross-domain block-mean term $\Phi_{\text{block}}$ depends on $S$ only through the mixture $\mathbf{n}(S) = (n_c)_{c=1}^T$, so Stage~1 can target it with a small $T$-dimensional search over feasible mixtures. The within-domain term $\Phi_w$ is fixed once the within-domain identities are, so Stage~2 can target it by ranking samples within each domain on their singleton values $v_j$. The residual $\Phi_{\text{res}}$ is uncontrolled by either stage and contributes to the bound's gap. MtS implements this two-stage rule: pick the domain mixture first, then fill each domain by within-domain ranking.

\begin{algorithm}[h]
\caption{Mixture-then-Select}
\label{alg:mg}
\begin{algorithmic}[1]
\Require data pool $\mathcal{D}=\bigsqcup_c C_c$, per-sample values $\{v_j\}_{j\in\mathcal{D}}$, selection size $m$, set-utility predictor $\hat{g}$
\Ensure subset $S \subseteq \mathcal{D}$ with $|S|=m$
\State $\hat{\mathbf{n}} \leftarrow \arg\max_{\mathbf{n}\in\mathbb{Z}_{\geq 0}^T,\,\sum_c n_c=m,\,n_c\leq |C_c|}\; \hat{g}(\mathbf{n})$ \Comment{Stage 1: predict domain mixture}
\State $S \leftarrow \emptyset$
\For{domain $c \in \{1,\ldots,T\}$}
  \State $S_c \leftarrow$ top $\hat{n}_c$ samples in $C_c$ ranked by $v_j$ \Comment{Stage 2: within-domain selection}
  \State $S \leftarrow S \cup S_c$
\EndFor
\State \Return $S$
\end{algorithmic}
\end{algorithm}

For Stage~1 we maximize a learned predictor $\hat g$ over the space of feasible domain mixtures rather than the pairwise approximation $\tilde u$ directly: under the uniform-error hypothesis in \Aref{app:predictor}, a rank-$r$ regressor needs $L=O(rT/\varepsilon^2)$ probes at fixed confidence (\Cref{cor:margin-empirical}), whereas direct optimization of $\tilde u$ would require estimating all $\binom{T}{2}$ cross-domain block means in $\Phi_{\text{block}}$. The predictor $\hat g$ is fit by regressing measured utility on these probe runs (\Aref{app:theory}). The per-sample values $v_j$ may come from any source (oracle losses, gradient-based scores, or influence functions), so MtS drops in over LESS, DEITA, and influence-based selectors without modifying the value source. Algorithm~\ref{alg:mg} is the unit-cost specialization, where every sample has equal cost ($p_j{=}1$) so the budget $B$ from \Cref{sec:problem-setup} reduces to the selection size $m$. The domain-uniform-cost extension (\Cref{sec:spec_main}) replaces this constraint with a price-weighted budget without changing the algorithm. Runtime is dominated by sorting each domain ($O(N \log N)$) plus a fixed-cost Stage~1 search over feasible mixtures, both negligible against the fine-tuning runs themselves.

Formal assumptions on $u$ and $\Phi$, the operator-norm refinement that drives the bound, the two-source decomposition of complementarity (Proposition~\ref{prop:complementarity}), and the adaptive IA-Greedy extension at $T{=}1$ are all in \Aref{app:theory}.

\subsection{Approximation guarantees}
\label{sec:spec_main}

\Cref{sec:mg_algorithm} gave an algorithm; this subsection bounds how close its output gets to the optimum, the subset $S^*$ that achieves the highest utility in the selection problem (Eq.~(\ref{eq:multitask})). A worst-case guarantee matters here for a practical reason: practitioners need to understand the conditions controlling the loss from MtS's two-stage decoupling at large $m$, and how much error budget they can spend on estimating the pairwise interaction matrix $\Phi$ and the per-sample values $v_j$ before the bound becomes uninformative. Throughout, the selection size $m$ equals the budget $B$ from \Cref{sec:problem-setup} under the unit-cost specialization (\Cref{sec:mg_algorithm}); the cost-weighted extension is treated at the end. Under the stated structural and global-identification assumptions, we first show that the structural gap grows linearly in $m$ (rather than the $m^2$ scaling typical of pairwise bounds) and is independent of data pool size $N$ (\Cref{thm:spec-main}). We then show that this bound is tight (\Cref{thm:tightness}) and that it still holds when the pairwise interaction matrix $\Phi$ or the singleton values $v_j$ are estimated from finite data, with the gap growing in proportion to the estimation error (\Cref{thm:robust-op}, \Cref{cor:vj-robust}).

The bound decomposes the gap by stage: within-domain and cross-domain residual interactions contribute separate terms, each controlled by the \emph{operator norm} of a residual matrix, a single number summarising the matrix's dominant signal direction (formally, its largest singular value). This is a factor-$m$ improvement over the entrywise Bounded-Complementarity (BC($\beta$)) form~\citep{bian2017guarantees}, which puts a uniform upper bound on every entry of $\Phi$; we compare both forms after Theorem~\ref{thm:spec-main}. Intuitively, the operator norm captures $\Phi$'s coherent signal while BC($\beta$) sums up worst-case per-pair magnitudes that often cancel in low-rank interaction matrices; this is why the spectral form is dramatically tighter on real $\hat\Phi$ ($\approx 51\times$ on our calibrated data pool, \Aref{app:theory}). In \Cref{sec:experiments}, we empirically verify the scaling pattern of this bound. 

Two structural constants govern the bound: the within-domain operator norm $\beta_{w,\text{op}}^{(c),\text{cent}}$ on domain $c$'s pairwise-interaction block (centered by the within-domain mean $\bar\varphi_{c,c}$), and the cross-domain operator norm $\delta_{\text{op}}$ on $\Phi_{\text{res}}$. Global mixture identification (\Cref{ass:global-identification}) requires uniform predictor error below half the gap between the optimal mixture and all others. A finite-sample noise term $\mathcal{R}(m)$ scales linearly in $m$ and collects three sources of estimation error: the empirical residual matrix $\hat\Phi_{\text{res}}$, the predictor $\hat g$ fitted from probe runs, and its argmax (full breakdown in \Cref{def:R-of-m}). The M\"obius truncation tail $E(m)$ is the higher-order remainder from Eq.~(\ref{eq:decomposition}) (bounded in \Cref{thm:remainder}); it is a fixed structural quantity that does not vanish with more probe runs.

\begin{theorem}[Mixture-then-Select Spectral Guarantee]
\label{thm:spec-main}
Let $S^* = \arg\max_{|S|=m} u(S)$ and $S_{\text{MtS}}$ the output of Algorithm~\ref{alg:mg}. Under \Cref{ass:monotone,ass:within-op,ass:res-op,ass:decay} and global mixture identification (\Cref{ass:global-identification}), with probability $\geq 1 - \eta$ (over the probe-run randomness used to estimate $\hat\Phi$ and the predictor $\hat g$),
\begin{equation}
    u(S_{\text{MtS}}) \;\geq\; u(S^*) \;-\; \underbrace{\sum_{c=1}^T \beta_{w,\text{op}}^{(c),\text{cent}}\, n_c^*\, v_{\max}}_{\text{within-domain gap}} \;-\; \underbrace{\delta_{\text{op}}\, m\, v_{\max}}_{\text{cross-domain gap}} \;-\; \mathcal{R}(m) \;-\; 2\,E(m),
    \label{eq:spec-main}
\end{equation}
where $n_c^* = |S^* \cap C_c|$ is the optimum's per-domain count and $v_{\max} = \max_j v_j$ is the maximum singleton value.
\end{theorem}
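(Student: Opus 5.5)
The argument compares $S_{\text{MtS}}$ with $S^*$ through the pairwise surrogate $\tilde u$ and an intermediate set. First, use \Cref{thm:remainder} (under \Cref{ass:decay}) to replace $u$ by $\tilde u$ on both sides at a cost of $E(m)$ each: $u(S^*)\le \tilde u(S^*)+E(m)$ and $u(S_{\text{MtS}})\ge \tilde u(S_{\text{MtS}})-E(m)$. This accounts for the $2E(m)$ term. It then suffices to bound $\tilde u(S^*)-\tilde u(S_{\text{MtS}})$.

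Next, for any $S$ with composition $\mathbf n(S)=\mathbf n$, write
\[
\tilde u(S)=\sum_{j\in S}v_j+\sum_c\bar\varphi_{c,c}\tbinom{n_c}{2}+\sum_{c<c'}\bar\varphi_{c,c'}n_cn_{c'}+Q_w(S)+Q_{\text{res}}(S).
\]
Here $Q_w(S)=\tfrac12\mathbf 1_S^\top(\Phi_w-\text{centering})\mathbf 1_S$ is the centered within-domain quadratic form, and $Q_{\text{res}}(S)=\tfrac12\mathbf 1_S^\top\Phi_{\text{res}}\mathbf 1_S$. The middle two terms depend only on $\mathbf n$; call their sum $G(\mathbf n)$.

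Introduce the intermediate set $S'$ that takes the top-$n_c^*$ samples of each domain by $v_j$. It has the same mixture as $S^*$, so $G$ cancels and the singleton sum can only increase. The difference therefore reduces to the centered within-domain forms and the residual forms.

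The key step is to control the quadratic forms by singleton values rather than by $\|\mathbf 1_S\|^2=m$. Otherwise one gets $\beta m$ per form, and the whole point of the statement is the factor $v_{\max}$ with linear $m$. I would follow the operator-norm assumptions \Cref{ass:within-op,ass:res-op}. These presumably state $|\mathbf x^\top\Phi^{(c),\text{cent}}_w\mathbf x|\le\beta^{(c),\text{cent}}_{w,\text{op}}\|\mathbf x\|_1 v_{\max}$ for indicator vectors, or an equivalent normalization in which $\Phi$ is scaled by singleton values. Under that reading each domain contributes at most $\beta^{(c),\text{cent}}_{w,\text{op}}n_c^*v_{\max}$, and the residual contributes at most $\delta_{\text{op}}mv_{\max}$. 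Combining $\tilde u(S^*)-\tilde u(S')$ with the swap from $S'$ to the MtS mixture then gives the two gap terms. I expect this normalization bookkeeping to be the main obstacle. The residual difference compares two different sets of size $m$, so I would bound $|Q_{\text{res}}(S)-Q_{\text{res}}(S')|$ through $\Phi_{\text{res}}$ acting on $\mathbf 1_S-\mathbf 1_{S'}$ and accept a constant absorbed into $\delta_{\text{op}}$. The within-domain term is handled the same way using only the block of $\Phi_w$ for domain $c$. \Cref{ass:monotone} is used to ensure the top-$v_j$ fill does not lose singleton mass.

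Finally, handle Stage 1. On the event of probability $\ge1-\eta$ from \Cref{def:R-of-m} (estimation of $\hat\Phi_{\text{res}}$, $\hat g$, and its argmax), \Cref{ass:global-identification} ensures that uniform predictor error below half the mixture gap forces $\hat{\mathbf n}=\arg\max_{\mathbf n}\,[G(\mathbf n)+\text{top-}v\text{ sum}]$ up to the collected estimation error. So $\tilde u(S_{\text{MtS}})\ge\tilde u(S')-\mathcal R(m)$ modulo the residual/within terms already counted. Chaining the inequalities gives \eqref{eq:spec-main}. If identification is exact, $\hat{\mathbf n}=\mathbf n^*$ and $S_{\text{MtS}}=S'$, which simplifies the chain.
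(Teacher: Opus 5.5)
Your skeleton is the paper's. You trade $u$ for $\tilde u$ at cost $2E(m)$. You compare two sets with identical composition, so the within-domain means $\bar\varphi_{c,c}\tbinom{n_c}{2}$ and the cross-domain block means cancel (\Cref{lem:bc-cancellation}). You use that the top-$n_c^*$ fill dominates the singleton sum. You bound the centered within-domain and residual quadratic forms by operator norms.

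The step that is wrong as written is Stage~1. \Cref{ass:global-identification} controls $\sup_{\mathbf n}|\hat g(\mathbf n)-U(\mathbf n)|$ for $U(\mathbf n)=\max_{S:\mathbf n(S)=\mathbf n}u(S)$, the constrained optimum of the true utility. It does not control your surrogate $G(\mathbf n)$ plus the top-$v$ singleton sum. So it says nothing about the argmax of that surrogate, and your inequality $\tilde u(S_{\text{MtS}})\geq\tilde u(S')-\mathcal R(m)$ is asserted rather than derived.

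What the hypothesis actually gives is exact recovery. The unique maximizer of $U$ is $\mathbf n(S^*)=\mathbf n^*$, and for every other feasible $\mathbf n$ you have $\hat g(\mathbf n^*)-\hat g(\mathbf n)\geq U(\mathbf n^*)-U(\mathbf n)-2\epsilon_{\mathrm{mix}}\geq\Delta_{\mathrm{mix}}-2\epsilon_{\mathrm{mix}}>0$. Hence $\hat{\mathbf n}=\mathbf n^*$ and $S_{\text{MtS}}=S'$ on the probability-$(1-\eta)$ event. The case you relegate to a simplification is the whole argument, the intermediate set is redundant, and $\mathcal R(m)\geq 0$ is simply appended as slack. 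An inexact route would not reproduce \eqref{eq:spec-main} anyway: with $\hat{\mathbf n}\neq\mathbf n^*$, the within-domain form on the MtS side is charged at $\hat n_c$ rather than $n_c^*$.

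Your other worries go away once the assumptions are in view. The $v_{\max}$ factor is already built into them: $\|\Phi^{\text{cent}}_{w,c}\|_{\text{op}}\leq\beta^{(c),\text{cent}}_{w,\text{op}}v_{\max}$ and $\|\Phi_{\text{res}}\|_{\text{op}}\leq\delta_{\text{op}}v_{\max}$. So using $\|\mathbf 1_S\|_2^2=|S|$ is exactly what makes the gap linear in $m$, and your $\ell_1$ reading coincides with this on indicator vectors.

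No constant needs to be absorbed into $\delta_{\text{op}}$. Bounding each side separately, $\tfrac12|\mathbf 1_A^\top Q\,\mathbf 1_A|\leq\tfrac12\|Q\|_{\text{op}}|A|$, and adding the two sides gives exactly $\beta^{(c),\text{cent}}_{w,\text{op}}n_c^*v_{\max}$ and $\delta_{\text{op}}mv_{\max}$. Your polarization route also yields constant one, since $\|\mathbf 1_{S}-\mathbf 1_{S'}\|_2\,\|\mathbf 1_{S}+\mathbf 1_{S'}\|_2=2\sqrt{m^2-k^2}\leq 2m$ with $k=|S\cap S'|$. Finally, monotonicity plays no role in the singleton comparison; the within-domain sort alone gives it.
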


Each term in Theorem~\ref{thm:spec-main} maps directly to a piece of the domain decomposition: the within-domain gap is the contribution of $\Phi_w$, the cross-domain gap $\delta_{\text{op}} m v_{\max}$ is the contribution of $\Phi_{\text{res}}$ (the only piece neither stage controls), $\mathcal{R}(m)$ collects finite-sample noise, and $E(m)$ is the higher-order M\"obius truncation tail. The within-domain gap charges Stage~2 for sub-optimal picks within each domain; each domain contributes only in proportion to its share $n_c^*$ of the optimum, so a heterogeneous domain spectrum tightens the bound below the uniform-domain worst case $\beta_{w,\text{op}}^{\text{cent}} m v_{\max}$. The cross-domain gap bounds residual interaction differences between subsets with the same composition; global identification controls the composition choice. Both gaps are linear in $m$ (rather than $m^2$) and do not grow with $T$.
The bound applies at sub-domain budgets ($m < |C_c|$); the constants are pessimistic. It describes scaling and estimation robustness rather than predicting the observed gap (numerical instantiation in \Aref{app:theory}).

The linear-in-$m$ scaling is unavoidable: the bound is tight up to a factor of $2$ on each axis, with matching lower-bound constructions in \Aref{app:theory} (\Cref{thm:tightness}).
Both \Cref{thm:spec-main} and \Cref{thm:tightness} assume access to the population pairwise interaction matrix $\Phi$. In practice $\Phi$ is estimated from a finite probe set; let the empirical estimate $\hat\Phi$ have operator-norm error $\|\hat\Phi - \Phi\|_{\text{op}} \leq \epsilon_{\text{op}}\, v_{\max}$. The bound picks up an additive penalty linear in $m$, matching the structural terms.

\begin{theorem}[Operator-Norm Robustness]
\label{thm:robust-op}
If $\|\hat\Phi - \Phi\|_{\text{op}} \leq \epsilon_{\text{op}} v_{\max}$, then Theorem~\ref{thm:spec-main} holds for $\hat\Phi$ with $\beta_{w,\text{op}}^{(c),\text{cent}}$ and $\delta_{\text{op}}$ replaced by $\beta_{w,\text{op}}^{(c),\text{cent}} + \epsilon_{\text{op}}$ and $\delta_{\text{op}} + \epsilon_{\text{op}}$, picking up an additive $2\,\epsilon_{\text{op}}\,m\,v_{\max}$ gap.
\end{theorem}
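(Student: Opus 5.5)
The plan is to rerun the proof of Theorem~\ref{thm:spec-main} with $\hat\Phi$ in place of $\Phi$. That proof uses $\Phi$ only through two operator-norm constants: $\beta_{w,\text{op}}^{(c),\text{cent}}$, the norm of the mean-centered within-domain block $\Phi_w^{(c)} - \bar\varphi_{c,c}\mathbf{1}\mathbf{1}^\top$, and $\delta_{\text{op}} = \|\Phi_{\text{res}}\|_{\text{op}}$. So it suffices to show that both constants move by at most $\epsilon_{\text{op}}$ when $\Phi$ is replaced by $\hat\Phi$, and then to account for the cost of having optimized the estimated rather than the true utility.

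\textbf{Step 1: the decomposition is contractive.} Write $\Delta = \hat\Phi - \Phi$, so $\|\Delta\|_{\text{op}}\le \epsilon_{\text{op}}v_{\max}$. The map from $\Phi$ to the triple $(\Phi_w,\Phi_{\text{block}},\Phi_{\text{res}})$ in Eq.~(\ref{eq:decomp}) is linear. Hence each piece of $\hat\Phi$ equals the corresponding piece of $\Phi$ plus the corresponding piece of $\Delta$.
\begin{itemize}
\item The within-domain block of $\Delta$ is a principal submatrix, so its operator norm is at most $\|\Delta\|_{\text{op}}$.
\item Centering a block by its mean subtracts the projection onto $\mathbf{1}\mathbf{1}^\top/|C_c|$ on both sides. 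I expect it to be convenient to write this as $(I-P)\Delta_{cc}(I-P)$ plus a rank-one correction. The clean version $(I-P)\Delta_{cc}(I-P)$ is non-expansive, and I will check that the bound actually used in Theorem~\ref{thm:spec-main} only needs the quadratic form on centered indicator differences, where only $(I-P)\Delta_{cc}(I-P)$ enters.
\item Likewise, $\Phi_{\text{res}}$ is obtained by applying blockwise double-centering to the off-diagonal blocks, and $\Delta_{\text{res}}$ inherits the same contraction.
\end{itemize}
By the triangle inequality, $\hat\beta^{(c)}\le\beta^{(c)}+\epsilon_{\text{op}}$ and $\hat\delta_{\text{op}}\le\delta_{\text{op}}+\epsilon_{\text{op}}$ after normalizing by $v_{\max}$. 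This gives the substitution claimed in the theorem.

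\textbf{Step 2: transfer between the true and estimated utilities.} Let $\hat u$ denote the pairwise utility built from $\hat\Phi$. For any $|S|=m$,
\[
|\hat u(S)-\tilde u(S)| = \tfrac12|\mathbf{1}_S^\top\Delta\mathbf{1}_S| \le \tfrac12\|\Delta\|_{\text{op}}\,m .
\]
Apply Theorem~\ref{thm:spec-main} to $\hat u$ with the inflated constants from Step 1. Then convert $\hat u(S_{\text{MtS}})$ back to $u$, and $\hat u(\hat S^*)\ge\hat u(S^*)$ back to $u(S^*)$. Each conversion costs at most $\epsilon_{\text{op}} m v_{\max}$, so the two together give the additive $2\epsilon_{\text{op}} m v_{\max}$. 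The $\mathcal{R}(m)$ and $2E(m)$ terms carry over unchanged, because they concern $\hat g$ and higher-order terms, not $\Phi$.

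\textbf{Main obstacle.} The delicate point is Step 2's norm bookkeeping. The spectral proof bounds quadratic forms of the form $x^\top M y$ with $\|x\|,\|y\|\le\sqrt m$ and then uses a $v_{\max}$ normalization, so I must make the factor of $\tfrac12$ and the scaling by $v_{\max}$ match exactly. Otherwise the constant comes out as $\epsilon_{\text{op}} m v_{\max}$ per conversion with a stray factor. If the centering step is not exactly contractive, I would fall back on bounding the rank-one mean correction by $\|\Delta\|_{\text{op}}$ as well, which at worst doubles $\epsilon_{\text{op}}$ in the within-domain term.
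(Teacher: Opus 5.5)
Your Step~1 is essentially the paper's Steps~1--2: each structural constant moves by at most $\epsilon_{\text{op}}$. The gap is in Step~2, which misidentifies where the $2\epsilon_{\text{op}} m v_{\max}$ comes from. In the paper this term is not a cost added on top of the substitution; it \emph{is} the substitution. Since $\sum_c n_c^* = m$, one has $\sum_c \epsilon_{\text{op}}\, n_c^*\, v_{\max} + \epsilon_{\text{op}}\, m\, v_{\max} = 2\epsilon_{\text{op}}\, m\, v_{\max}$. The factor $2$ counts the two axes (within-domain and cross-domain), not the two endpoints $S^*$ and $S_{\text{MtS}}$. Charging endpoint conversions in addition to the inflated constants double-counts, so you end with a weaker bound than the one stated. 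Your own estimate $|\hat u(S)-\tilde u(S)|\le\tfrac12\|\Delta\|_{\text{op}}m$ already shows the two conversions total only $\epsilon_{\text{op}} m v_{\max}$. That is why the ``stray factor'' never reconciles.

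The conversion step also is not licensed as written. You cannot invoke Theorem~\ref{thm:spec-main} for $(\hat u,\hat S^*)$. Global mixture identification (\Cref{ass:global-identification}) concerns $U(\mathbf n)=\max_{\mathbf n(S)=\mathbf n}u(S)$ for the true $u$, so nothing forces $\mathbf n(\hat S^*)=\hat{\mathbf n}$. Without matched compositions, the block-mean cancellation of Lemma~\ref{lem:bc-cancellation} fails for that pair.

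Nor is any conversion needed. Algorithm~\ref{alg:mg} never reads $\hat\Phi$, so $S_{\text{MtS}}$ and $S^*$ are fixed and share composition $\mathbf n^*$ on the identification event. You simply plug the perturbed constants into Eq.~(\ref{eq:spec-main}).

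If you want a certificate in the \emph{empirical} constants, your conversion idea does work, but as an alternative to Step~1 rather than an addition to it. Run the quadratic-form bounds with $\hat\Phi$ on the pair $(S^*,S_{\text{MtS}})$, then pay $\tfrac12\bigl|\mathbf 1_{S^*}^\top\Delta\mathbf 1_{S^*}-\mathbf 1_{S_{\text{MtS}}}^\top\Delta\mathbf 1_{S_{\text{MtS}}}\bigr|\le\epsilon_{\text{op}} m v_{\max}$ once. This is the ``quadratic forms only'' observation from your centering remark.

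That route needs no contraction property, which matters because your claim that $\Delta_{\text{res}}$ ``inherits the same contraction'' is false in general. Subtracting a block mean can increase the operator norm. For example, the cross block $\bigl(\begin{smallmatrix}1&1\\1&-1\end{smallmatrix}\bigr)$ has norm $\sqrt2$, but after removing its mean $\tfrac12$ its norm is $\tfrac{1+\sqrt5}{2}$. For this piece the triangle inequality only gives a factor $2$, which is also all the paper's own Step~2 derives.
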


A symmetric robustness result for $v_j$-noise enters the bound additively (Corollary in \Aref{app:theory}): under uniform value-noise $\|\hat v - v\|_\infty \leq \epsilon_v\, v_{\max}$ ($\ell_\infty$ error), the bound picks up an additive $2\,\epsilon_v\, m\, v_{\max}$ gap, matching Theorem~\ref{thm:robust-op}'s structural form.
MtS extends to cost-weighted selection (data-marketplace deployment) via a one-line change to Stage~1's constraint, a price-weighted budget $\sum_c p_c\, n_c \le B$ replaces cardinality, with Theorem~\ref{thm:spec-main}'s bound carrying over to the realized cardinality (\Aref{app:theory}); sample-level baselines would require algorithmic redesign.
Three further results are deferred to \Aref{app:theory}: the adaptive IA-Greedy extension at $T{=}1$ with classical $(1{-}e^{-\gamma_{\text{op}}})$ guarantee, computational hardness (APX-hardness via MAX-CUT), and predictor sample complexity.

\section{Experiments and analysis}
\label{sec:experiments}

This section evaluates our method on a real instruction-tuning data pool. We compare performance across baselines and conditions (\Cref{sec:master_leaderboard}), account for computational efficiency (\Cref{sec:compute_equalized}), and analyze the mechanism behind the gains and where the method falls short (\Cref{sec:why_baselines_fail}).

\subsection{Setup}
\label{sec:setup}

\textbf{Data pool, models, metric.}
The data pool is a $50K$-sample instruction-tuning corpus from 13 public sources, partitioned into $T{=}8$ skill domains (math, code, QA, dialog, reasoning, NLP, complex-instruction, creative); we additionally evaluate on a $4{\times}$ larger $N{=}200K$ data pool. Qwen3-8B is the main base model; Llama-3.1-8B-Instruct (different architecture) and Llama-3.2-3B-Instruct (smaller scale) test cross-model transfer. All base models are LoRA fine-tuned at rank $r{=}16$. We measure per-domain held-out loss reduction $u_c(S)$ relative to the base model weighted on various tasks; held-out loss reduction is the standard target in recent instruction-tuning data-selection work~\citep{zhou2023lima,liu2024deita,xia2024less,li2024ifd} and tracks downstream benchmark gains in similar regimes. Selection sizes span $\sim$1\% to 20\% of the data pool. See detailed setups in \Aref{app:empirical}.

\textbf{Baselines.}
We compare MtS against eight baselines covering four method directions: \emph{per-sample quality} (LESS~\citep{xia2024less}, EL2N~\citep{paul2021deep}, Superfiltering~\citep{li2024superfiltering}, DEITA~\citep{liu2024deita}), \emph{diversity} ($k$-DPP~\citep{kulesza2011kdpp}, Facility Location~\citep{wei2015submodularity}), \emph{gradient matching} (CRAIG~\citep{mirzasoleiman2020coresets}), and \emph{partition-based} (Stratified, which allocates the budget uniformly across the $T$ skill domains and ranks within each by oracle; Random, which samples uniformly from the data pool with no per-domain structure), one representative per principle. Before selection, MtS runs a one-time per-base-model calibration that produces a per-sample value used for within-domain ranking; final LoRA fine-tuning is identical across methods. {\color{black}We additionally compare with recent mixture-optimization methods DoReMi, RegMix and DIDS~\citep{xie2023doremi,liu2025regmix,shi2025dids}.}

\subsection{Performance comparisons}
\label{sec:master_leaderboard}
\label{sec:robustness_master}

We first verify that our data pool exhibits the positive sample interactions our framework requires, then compare MtS against baselines, and finally test robustness across model capacity, attribution method, and data pool size.

\textbf{Complementarity is real on the data pool.}
The structure our framework requires (positive sample interactions, $\beta>0$) is strong and consistent on this data pool, confirmed by two independent measurement protocols: a large random pair sweep, and a separate pilot study that averages over training seeds. Other empirical constants used by \Cref{thm:spec-main} are computed in \Aref{app:theory}.

\begin{table}[!t]
\centering
\footnotesize
\setlength{\tabcolsep}{5pt}
\caption{Performance comparison for utility (higher is better). Base models are Qwen3-8B and Llama-3.2-3B-Instruct, both LoRA fine-tuned at rank $r{=}16$. All MtS rows significantly outperform Stratified (paired $t$-test, $p{<}0.05$); per-cell seed counts and $p$-values are in \Aref{app:empirical}.}
\label{tab:master_leaderboard}
\vspace{5pt}
\begin{tabular*}{\linewidth}{@{\extracolsep{\fill}} l l >{\centering\arraybackslash}p{1.7cm} >{\centering\arraybackslash}p{1.7cm} >{\centering\arraybackslash}p{1.7cm} @{}}
\toprule
\textbf{Selection} & \textbf{Attribution} & \textbf{Qwen3-8B} & \textbf{Qwen3-8B} & \textbf{Llama-3.2-3B} \\
                   &                      & $N{=}50K$         & $N{=}200K$        & $N{=}50K$ \\
                   &                      & $m{=}1K$          & $m{=}3.5K$        & $m{=}1K$ \\
\midrule
\multicolumn{5}{l}{\textit{\textbf{Panel~A.} Full leaderboard: MtS vs.\ 11 baselines, each with its native attribution.}} \\
\midrule
MtS & Oracle            & $\mathbf{0.694}$ & $\mathbf{0.718}$ & $\mathbf{0.281}$ \\
\addlinespace[2pt] \cmidrule{1-5} \addlinespace[2pt]
Stratified                     & Oracle            & $0.666$  & $0.694$ & $0.190$ \\
\addlinespace[2pt] \cmidrule{1-5} \addlinespace[2pt]
                                      & Oracle            & $0.603$  & $0.632$ & $-0.203$ \\
Greedy (global top-$m$)               & LESS                                & $0.548$  & $0.580$           & $-0.238$ \\
                                      & EL2N                                & $0.567$  & $0.598$ & $-0.244$ \\
                                      & Superfiltering                      & $0.534$  & $0.640$           & $-0.179$ \\
                                      & DEITA                               & $0.531$  & $0.579$           & $-0.109$ \\
\addlinespace[2pt] \cmidrule{1-5} \addlinespace[2pt]
$k$-DPP                               & DPP kernel                          & $0.581$            & $0.612$           & $-0.101$ \\
Facility Location                     & Submodular surrogate                & $0.554$            & $0.586$           & $-0.016$ \\
Random                                & Uniform                             & $0.549$  & $0.567$           & $0.076$ \\
CRAIG                                 & Gradient matching                   & $0.452$  & $0.579$           & $-0.204$ \\
\addlinespace[2pt] \cmidrule{1-5} \addlinespace[2pt]
Full data (no selection)              & full-retraining                     & $0.548$  & $0.581$           & $0.030$ \\
\specialrule{0.5pt}{4pt}{1pt}
\specialrule{0.5pt}{1pt}{4pt}
\multicolumn{5}{l}{\textit{\textbf{Panel~B.} Same $3$ selection rules under cheap attributions (LESS, EL2N); Oracle in Panel~A.}} \\
\midrule
MtS                                   & LESS                    & $0.591$      & $0.625$           & $\mathbf{-0.120}$ \\
                                      & EL2N                    & $\mathbf{0.597}$      & $\mathbf{0.677}$ & $-0.122$ \\
\addlinespace[2pt] \cmidrule{1-5} \addlinespace[2pt]
Stratified                            & LESS                    & $0.576$      & $0.615$           & $-0.132$ \\
                                      & EL2N                    & $0.567$      & $0.639$ & $-0.137$ \\
\addlinespace[2pt] \cmidrule{1-5} \addlinespace[2pt]
Greedy (global top-$m$)               & LESS                    & $0.548$                & $0.580$           & $-0.238$ \\
                                      & EL2N                    & $0.567$                & $0.598$ & $-0.244$ \\
\bottomrule
\end{tabular*}
\end{table}

\textbf{MtS outperforms every baseline and the lead is robust.}
Across Panel~A of Table~\ref{tab:master_leaderboard}, MtS beats every baseline by $20\%$ to $54\%$ relative; the eight sample-level methods bunch together near random, indifferent to which selection principle they implement. MtS's $1{,}000$-sample subset also beats training on the full $50K$ data pool by $\Delta{=}{+}0.146$, a small, well-chosen subset wins over training on everything. The rightmost column flags a striking failure mode: on Llama-3.2-3B, every sample-level method goes negative if Qwen's selections are reused without re-running calibration on Llama itself. We return to the diagnosis in \Cref{sec:why_baselines_fail}.

{\color{black}\textbf{Comparison with recent mixture methods.} In the held-out utility comparison, MtS reaches 0.694, compared with 0.6566 for RegMix and 0.6272 for DoReMi, corresponding to the reported 5.7\% and 10.7\% relative gains. RegMix and DoReMi use a 1K subset with fixed within-domain selection. The paired comparison with RegMix gives $p=0.0117$. DIDS reaches 0.6343 using online full-pool sampling with matched training exposures.\par}

{\color{black}\textbf{Downstream benchmark evaluation.} Table~\ref{tab:rebuttal_benchmarks_main} reports the additional five-task panel. The best observed MtS settings attain the highest displayed scores, with particularly clear differences on IFEval and ARC-Challenge. This row summarizes the best observed settings by benchmark, not a single fixed selection or a matched multi-seed mean. The results provide a complementary view of downstream performance rather than imply that held-out utility and benchmark accuracy have identical rankings.\par}

\begin{table}[htbp]
\color{black}
\centering
\small
\setlength{\tabcolsep}{5pt}
\captionsetup{font={color=black},labelfont={color=black}}
\caption{Downstream benchmarks on Qwen3-8B and the 50K pool. Metrics: IFEval instruction-level strict accuracy; MMLU and GSM8K accuracy; ARC-Challenge normalized accuracy; HumanEval pass@1.}
\label{tab:rebuttal_benchmarks_main}
\begin{tabular}{@{}lrrrrr@{}}
\toprule
Method & IFEval & MMLU & GSM8K & ARC-C & HumanEval \\
\midrule
MtS$^{*}$ & \textbf{0.5048} & \textbf{0.7254} & \textbf{0.8062} & \textbf{0.5857} & \textbf{0.8049} \\
RegMix & 0.4822 & 0.7136 & 0.7959 & 0.5558 & 0.7703 \\
DoReMi & 0.4802 & 0.7149 & 0.8016 & 0.5316 & 0.7622 \\
LESS$^{\dagger}$ & 0.4484 & 0.7213 & 0.7930 & 0.5486 & 0.7512 \\
\bottomrule
\end{tabular}
\par\smallskip
\begin{minipage}{\linewidth}
\footnotesize
$^{*}$MtS entries are best observed settings per task, not a single fixed configuration or a matched multi-seed mean.
$^{\dagger}$LESS HumanEval uses the separate LESS-score pricing control.
\end{minipage}
\end{table}

\textbf{Gains grow with model capacity.}
MtS ties Stratified at the lowest tested LoRA rank, but its advantage grows monotonically with rank, reaching $\Delta{=}{+}0.046$ at $r{=}32$ on Qwen3-8B (Figure~\ref{fig:main_diagnostic}B). This pattern suggests that higher capacity helps exploit cross-domain interactions; LESS remains near random across ranks (\Aref{app:empirical}).

\textbf{Mixture matters more than attribution.}
The ranking Greedy~$<$~Stratified~$<$~MtS holds in every attribution column we tested, on both Qwen3-8B and Llama-3.2-3B. Replacing the expensive oracle attribution with cheap EL2N recovers $97\%$ of MtS's oracle utility, at roughly four orders of magnitude less compute. The choice of attribution mainly trades cost for accuracy; the load-bearing decision is the domain mixture.

\textbf{Gains transfer to larger data pools and other architectures.}
The advantage does not erode at $4\times$ the data pool size, and transfers to a different model family (Llama-3.1-8B-Instruct), with statistically significant gains over Stratified in both settings; on the smaller Llama-3.2-3B, the gap further widens as the budget grows (full numbers in Table~\ref{tab:robustness_profile} and \Aref{app:empirical}). The lead also persists when the eight skill domains are coarsened into four macro-domains ($\Delta{=}{+}0.022$ at $T{=}4$, same direction as $T{=}8$; \Aref{app:domain_count_sensitivity}). {\color{black}Evaluation on a 72B model further supports MtS's applicability across model scales (\Aref{app:transfer_72b}, Table~\ref{tab:rebuttal_72b_main}).}

\subsection{Computational efficiency}
\label{sec:compute_equalized}

\begin{wraptable}{R}{0.46\textwidth}
\centering
\footnotesize
\caption{Compute-utility table at $N{=}50K$ on Qwen3-8B LoRA $r{=}16$. Total compute $=$ selection-side $+$ $5$-seed training. Per-row breakdown: \Cref{tab:compute_equalized}; visual scatter: \Cref{fig:compute_equalized}.}
\label{tab:compute_equalized_main}
\begin{tabular*}{\linewidth}{@{} l @{\extracolsep{\fill}} c c c @{}}
\toprule
\textbf{Method} & $\boldsymbol{m}$ & $\boldsymbol{u}$ & \textbf{GPU-hr} \\
\midrule
Stratified              & 1k  & $0.666$ & $0.58$  \\
Stratified              & 5k  & $0.678$ & $2.92$  \\
EL2N                           & 1k  & $0.567$ & $1.43$  \\
\textbf{MtS}          & \textbf{1k}  & $\mathbf{0.694}$ & $\mathbf{16.58}$  \\
MtS                   & 5k  & $0.691$ & $18.92$ \\
\textbf{Full data}             & \textbf{50k} & $\mathbf{0.548}$ & $\mathbf{29.17}$ \\
\textbf{LESS}                  & \textbf{1k}  & $\mathbf{0.548}$ & $\mathbf{70.58}$ \\
\bottomrule
\end{tabular*}
\end{wraptable}

We charge MtS the full cost of its calibration step and ask whether MtS still wins on the compute-utility tradeoff. \textbf{No baseline beats MtS on both compute and utility} (\Cref{tab:compute_equalized_main}; visual scatter in \Cref{fig:compute_equalized}). Two observations:

\textbf{MtS is cheaper than gradient-based baselines.}
LESS's gradient-extraction pipeline is roughly $4.26\times$ more expensive than MtS at the same selection size, even with MtS's calibration counted in full; MtS also undercuts the cost of training on the entire $50K$ data pool. On the utility axis, MtS beats both LESS and full-data training. This comparison includes the one-time calibration cost.

\textbf{Cheaper alternatives trade utility for compute.}
Every cheaper row in Table~\ref{tab:compute_equalized_main} falls below MtS at $m=1000$ on utility. The closest is Stratified at a $5\times$ larger selection size, which is much cheaper than MtS but still trails it. MtS's own gains saturate near $m{=}1000$, so spending more on selection does not help. Even when we charge oracle baselines the calibration cost (compute-equalized), MtS keeps its lead.

{\color{black}Including calibration, MtS uses 43\% less total compute than full-data training and 77\% less than LESS, with approximately 27\% higher utility under the accounting in Table~\ref{tab:compute_equalized_main}. Reusing calibration across selection settings for the same pool and base model amortizes its one-time cost. Under the uniform-error hypothesis in \Aref{app:predictor}, the stated probe requirement grows linearly with domain count and predictor rank at fixed target accuracy and confidence, independently of pool size (\Cref{cor:margin-empirical}). Detailed amortization, cross-model reuse and domain-scaling calculations are in \Aref{app:calibration_reuse}.\par}

\WFclear
\subsection{Analysis of mechanism}
\label{sec:why_baselines_fail}
\label{sec:boundary_regimes}
\label{sec:experiments_summary}

We turn to two questions: what drives MtS's outperformance, and how does its edge vary across regimes? Practitioner takeaways close the section.

\textbf{Sample-level scores miss interaction structure.}
Ranking samples globally can concentrate the selection in high-scoring domains while missing useful combinations across domains. Figure~\ref{fig:main_diagnostic}A shows this contrast: MtS concentrates on dialog/creative/QA, while Greedy-by-value concentrates on math/NLP, nearly the opposite mixture. Several baselines, including LESS and CRAIG, remain near random utility despite their different selection criteria (Table~\ref{tab:master_leaderboard}). The controlled comparisons using the same attribution scores also favor MtS, showing that the selection rule matters beyond the scores themselves. When samples interact, how the chosen samples combine matters more than how good each one looks alone.

\begin{figure}[t]
\centering
\includegraphics[width=\linewidth]{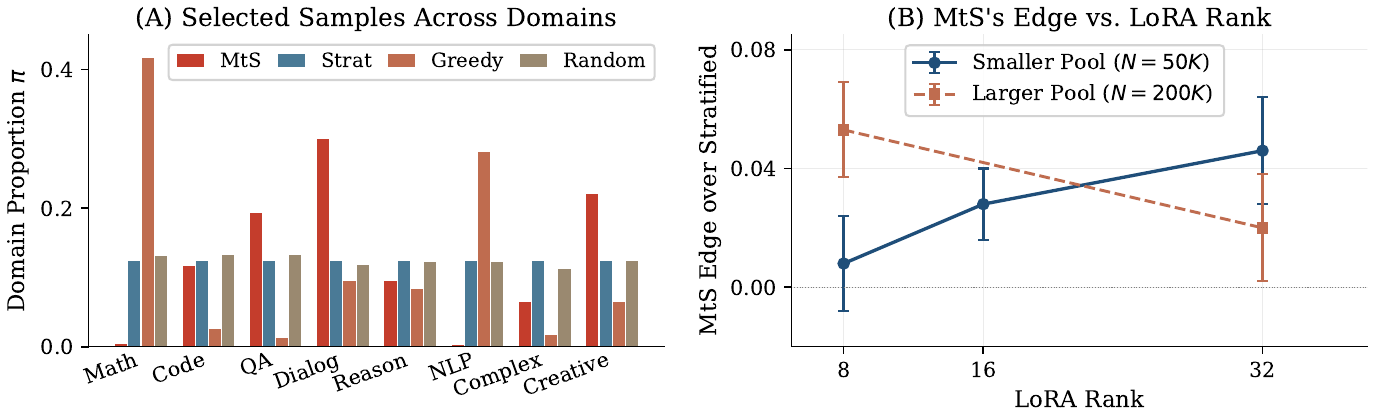}
\caption{Diagnostic panels on Qwen3-8B. \textbf{(A)} Domain proportions across the four selection rules at $N{=}50K$, $m{=}1000$. \textbf{(B)} MtS$-$Stratified utility gap vs.\ LoRA rank $r$, with ($m{=}1K$ at $N{=}50K$; $m{=}3.5K$ at $N{=}200K$). For other setting, see \Cref{app:empirical}.}
\label{fig:main_diagnostic}
\end{figure}

{\color{black}\textbf{Empirical support for pairwise modeling.}
\label{sec:interaction_validation}
Singleton and pairwise terms explain 90\% of utility variation across approximately 100 measured triplets. Adding pairwise terms reduces mean absolute prediction error from 0.014 to 0.004, approximately a 71\% reduction. Additional measurements and cross-task consistency checks are reported in \Aref{app:pairwise_validation}.\par}

{\color{black}\textbf{Mixture and attribution information both contribute.} The submitted set-level variant's feature-group ablation gives utility 0.6927 with domain-mixture and value-statistics features. Keeping only the domain histogram lowers utility by 0.088; keeping only value statistics lowers it by 0.063 (\Aref{app:ablations}). This ablates information supplied to the utility predictor, rather than literally removing a deployed selection stage. Together with the Greedy-Stratified-MtS ordering across attribution signals, it shows that mixture and sample-value information are complementary, not additive shares of the gain.\par}

\textbf{Domain-mixture gains require model capacity.}
Domain-mixture gains \emph{require} a base model with enough capacity to realize cross-domain transfer in the first place. Empirically (Figure~\ref{fig:main_diagnostic}~B), at the smallest LoRA rank we tested MtS ties Stratified; at higher ranks the gap appears and grows. This is an empirical capacity dependence; the approximation theorem does not establish a capacity threshold.

\textbf{Where the method does not help.}
MtS's domain-mixture lever does not pay off at very small selection sizes ($m{\leq}500$ on a $50K$ data pool): spreading the budget across $T{=}8$ domains leaves each too thin to benefit from cross-domain transfer, and Stratified matches or edges ahead in this regime (\Aref{app:empirical}). The method also dips slightly on a different base model (Llama-3.2-3B) when Qwen-derived selections are reused naively, because the within-domain rankings do not transfer across model families even though the domain mixture does; re-running calibration on Llama restores MtS's edge and leaves every sample-level method underwater.


\textbf{Practitioner takeaways.}
The gain comes from \emph{learning a non-uniform domain mixture}, not from any particular per-sample attribution; both mixture and sample-value information contribute in the predictor-input ablation (\Aref{app:empirical}). In practice: consider domain-mixture allocation when cross-domain interactions can be exploited (the gap is stable across architectures, attribution choice, and data pool size, and a small selected subset can beat full-data pool training); replace oracle attribution with a cheap proxy such as EL2N, which recovers $97\%$ of oracle utility at orders of magnitude less compute; and re-measure the within-domain rankings on whichever model will be fine-tuned, the domain mixture itself can be reused.

{\color{black}A practical screening rule is to fit a small pilot predictor and proceed when its predicted gain over uniform allocation exceeds its uncertainty; otherwise use Stratified. This is an empirical deployment rule motivated by the budget and capacity ablations, not a separately validated diagnostic.\par}

{\color{black}\textbf{Robustness to domain definitions.} MtS's gains persist under several changes to domain granularity and assignment, rather than depending on one exact source-based partition. Coarsening eight domains into four and replacing source labels with utility-derived clusters retain utility advantages of 3.5\% and 3.2\% over Stratified, respectively, compared with 4.3\% under the original partition. Corrupting 20\% of labels reduces the four-task downstream composite by only 0.7\% across three seeds. Under 30\% two-domain ambiguity, combined embedding and utility reassignment improves clean-selection Jaccard from 0.403 to 0.806. In a separate end-to-end test at 30\% ambiguity with embedding-based reassignment, the lambda-0.50 variant improves utility by 3.09\% and the four-benchmark mean by 2.42\% over Stratified; both gains occur in all three seeds. Detailed settings, selection-recovery diagnostics and end-to-end results appear in \Aref{app:domain_robustness}.\par}
\subsection{Application: bundle-level data pricing}
\label{sec:pricing_pilot}

\begin{wraptable}{R}{0.46\textwidth}
\centering
\footnotesize
\setlength{\tabcolsep}{1.9pt}
\caption{Data purchasing under a budget: aggregate utility $u(S)$ on Llama-3.2-3B-Instruct, averaged over $10$ pricing instances. $B$ is the monetary budget as a fraction of total cost. Recovery rate is the share of the optimality gap closed by MtS over the additive baseline. Details: \Aref{app:pricing_pilot}.}
\label{tab:pricing_pilot}
\begin{tabular}{@{}lccc@{}}
\toprule
\textbf{Method} & $B{=}15\%$ & $B{=}20\%$ & $B{=}25\%$ \\
\midrule
Additive LP                  & $0.197$ & $0.146$ & $0.137$ \\
\textbf{MtS}                 & $\mathbf{0.229}$ & $\mathbf{0.242}$ & $\mathbf{0.249}$ \\
Oracle                       & $0.236$ & $0.246$ & $0.250$ \\
\midrule
$\Delta$ MtS$-$Additive LP   & $0.033$ & $0.097$ & $0.111$ \\
Optimality recovery          & $89.5\%$ & $95.6\%$ & $98.3\%$ \\
\bottomrule
\end{tabular}
\end{wraptable}

MtS extends naturally to data pricing, the literature thread (Data pricing and markets direction of \Cref{tab:comparison}) where prior work treats valuation as separate from selection. We instantiate the canonical vendor-bundle market~\citep{zhang2025fairshare} for LLM fine-tuning: each instance has $15$ indivisible $50$-sample vendor bundles split across $T{=}3$ source clusters (math, code, general QA), with per-cluster prices drawn uniformly at random from $[0.5, 1.5]$ and a monetary budget $B\in\{15\%, 20\%, 25\%\}$ of total cost; we average over $10$ price/utility shuffles.

MtS adapts directly to this setting: the size constraint in \Cref{sec:mg_algorithm} is replaced by a budget constraint, and the two-stage allocation otherwise runs unchanged. The additive baseline (Additive LP) allocates the budget across clusters by summing solo per-cluster utilities through a linear program; it ignores how clusters interact during training.

\Cref{tab:pricing_pilot} shows the comparison. The Additive LP exhibits a striking pathology: its utility \emph{decreases} as the budget grows because ignoring cluster interactions wastes capacity on redundant samples. MtS instead tracks the oracle closely: it recovers $\geq 89\%$ of the oracle's optimality gap at every budget and exactly matches the oracle on $58\%$ of instances. Selection and pricing collapse into a single decision. Per-instance numbers and ablations are in \Aref{app:pricing_pilot}.
\WFclear

\section{Conclusion}
\label{sec:conclusion}

We introduced Combinatorial Data Selection (CDS): a way of thinking about LLM data selection where what gets chosen matters more than how each sample is scored. Our two-stage algorithm, Mixture-then-Select, first decides how many samples to take from each domain, then picks the best samples within each domain. On real instruction-tuning data, MtS beats strong baselines, works across different models and data sizes, and handles a practical setting: buying training data on a budget when prices vary across samples.
Several extensions remain natural: vendor-level pricing, alternative task aggregations under non-uniform priorities, alignment of the utility with specific downstream evaluation goals, and scalable calibration at very large data. The broader takeaway: as LLMs get better at learning across tasks, choosing which data goes in becomes as much a design lever as how the model is built.

\bibliography{references}
\bibliographystyle{iclr2027_conference}

\newpage
\appendix
\renewcommand{\thesection}{\Alph{section}}
\renewcommand{\thesubsection}{\thesection.\arabic{subsection}}

\section{Theory and proofs}
\label{app:theory}

\subsection{Notation table}
\label{app:notation}

\begin{table}[h]

\centering
\small
\begin{tabular}{ll}
\toprule
\textbf{Symbol} & \textbf{Description} \\
\midrule
$\mathcal{D} = \{1, \ldots, N\}$ & Data catalog \\
$p_j$ & Cost of sample $j$ \\
$B$ & Selection budget \\
$u(S)$ & True utility of subset $S$ \\
$v_j = u(\{j\})$ & Individual value of sample $j$ \\
$\varphi_{ij}$ & Pairwise interaction ($<0$: substitutes, $>0$: complements) \\
$\hat{g}(S)$ & Pairwise approximation: $\sum v_j + \sum \varphi_{ij}$ \\
$\varepsilon(S)$ & Higher-order remainder: $u(S) - \hat{g}(S)$ \\
$\Delta_f(s \mid L)$ & Marginal gain of $s$ given $L$ \\
$\gamma$ & Submodularity ratio \\
$\beta$ & Complementarity bound in $\mathrm{BC}(\beta)$ \\
$C, \rho_d$ & Decay constants for higher-order interactions \\
$v_{\max}$ & Maximum individual value: $\max_j v_j$ \\
$n_{\max}$ & Maximum selection size \\
$\rho_s$ & Spearman rank correlation \\
\midrule
\multicolumn{2}{l}{\emph{Spectral form (\Cref{sec:formulation,sec:theory}, \Cref{thm:spec-main})}} \\
$\Phi_w, \Phi_{\text{block}}, \Phi_{\text{res}}$ & Within-domain, block-constant cross, cross-residual parts of $\Phi$ \\
$\beta_{w,\text{op}}^{(c),\text{cent}}$ & Per-domain centered within-block operator-norm coefficient \\
$\delta_{\text{op}}$ & Cross-residual operator-norm coefficient \\
$\nu_{\text{op}}^\perp$ & Empirical-to-population sub-matrix concentration constant on $\hat\Phi_{\text{res}}$ \\
$\zeta_{\text{pred}}, \zeta_\Delta^{\text{loc}}$ & Predictor sample-complexity gap; predictor-argmax displacement on the local-margin ball \\
$\rho_0, \mu_{\text{loc}}$ & Local-margin radius and rate (Corollary~\ref{cor:margin-empirical}) \\
$L, r, T$ & Number of labeled subsets; predictor rank; domain count \\
$S_{\text{MtS}}, S_{\text{IA}}, S_{\text{HMtS}}$ & Mixture-then-Select, IA-Greedy, hybrid output ($\arg\max\{u(S_{\text{MtS}}), u(S_{\text{IA}})\}$) \\
\bottomrule
\end{tabular}

\end{table}

\subsection{Formal assumptions and operator-norm refinement}
\label{app:assumptions}

This appendix collects the formal assumptions on the utility $u$ and the pairwise interaction matrix $\Phi$ that drive the bounds in \Cref{sec:spec_main}. \Aref{app:assumptions} also states Proposition~\ref{prop:complementarity}, which decomposes complementarity into a model-induced and an aggregation-induced source, and discusses why the operator-norm refinement (\Cref{ass:within-op,ass:res-op}) sharpens the entrywise BC($\beta$) form by a factor of $m$.

\subsection{Core assumptions}

Three assumptions cover the basic shape of the utility: monotonicity of each task's improvement, bounded complementarity of pairwise interactions, and decay of higher-order interactions.

\begin{assumption}[Monotonicity]
\label{ass:monotone}
Each $u_t$ is monotone non-decreasing: $u_t(S \cup \{j\}) \geq u_t(S)$ for all $S, j, t$.
\end{assumption}

\begin{assumption}[Bounded Complementarity $\mathrm{BC}(\beta)$]
\label{ass:bc}
For each $j \in \mathcal{D}$ and any $R \subseteq \mathcal{D} \setminus \{j\}$,
$\sum_{i \in R} \varphi_{ij}^{+} \leq \beta \cdot v_j,$
where $\varphi_{ij}^{+} = \max(\varphi_{ij}, 0)$ denotes the complementary (positive) part of the aggregate interaction.
\end{assumption}

\begin{assumption}[Exponential Interaction Decay]
\label{ass:decay}
Higher-order M\"obius coefficients satisfy $|a_R| \leq C \cdot \rho_d^{|R|} \cdot v_{\max}$ for $|R| \geq 3$, where $C, \rho_d > 0$ are constants and $v_{\max} = \max_j v_j$.
\end{assumption}

Monotonicity (Assumption~\ref{ass:monotone}) is the standard hypothesis in submodular data selection~\citep{wei2015submodularity,mirzasoleiman2020coresets,nemhauser1978analysis}: adding training data never hurts. Since $f$ is non-decreasing in each argument, $u$ inherits monotonicity from the $u_t$'s for every aggregator considered in this paper.

BC($\beta$) (Assumption~\ref{ass:bc}) controls positive pairwise interactions relative to singleton values. For the pairwise approximation, $\beta=0$ gives the substitution-only case; a positive submodularity-ratio guarantee additionally requires the conditional-marginal hypothesis in Lemma~\ref{thm:gamma}.

Assumption~\ref{ass:decay} controls the higher-order M\"obius tail. M\"obius inversion~\citep{grabisch1999axiomatic} writes any set function as singleton, pairwise, triplet, $\ldots$ contributions, and truncating after pairs is justified only when the dropped terms are small; the geometric decay condition is the quantitative form of that requirement. On our data pool the third-order remainder is one to two orders of magnitude below the pairwise term (\Cref{sec:experiments}).

\subsection{Two sources of complementarity}

Whether $\beta > 0$ comes from the per-task utilities being non-submodular or from the aggregator $f$ matters for interpreting empirical $\beta$ calibrations. The next proposition separates the two sources.

\begin{proposition}[Two Sources of Complementarity]
\label{prop:complementarity}
Let each $u_t$ be submodular.
\begin{enumerate}
    \item[(a)] \textbf{Linear aggregation preserves submodularity.} If $f = \sum_t w_t \cdot u_t$ (weighted average) and each task-specific utility $u_t$ is submodular, then $u$ is submodular and $\beta = 0$.
    \item[(b)] \textbf{Concave aggregation can induce complementarity.} If $f = \min_t u_t$ or $f$ is strictly concave, then $\beta > 0$ whenever samples from different domains cover different tasks, even when all per-task interactions are substitutive ($\varphi_{ij}^{(t)} \leq 0$).
\end{enumerate}
\end{proposition}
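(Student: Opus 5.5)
For part~(a) I would argue directly from the definition of submodularity rather than through the M\"obius coefficients. Take $A \subseteq B \subseteq \mathcal{D}$ and $j \notin B$. Each $u_t$ has diminishing returns, so $u_t(A\cup\{j\}) - u_t(A) \ge u_t(B\cup\{j\}) - u_t(B)$. Multiply by $w_t \ge 0$ and sum over $t$; this gives the same inequality for $u = \sum_t w_t u_t$, so $u$ is submodular. To obtain $\beta = 0$, specialize to $A = \emptyset$ and $B = \{i\}$. Using $u(\emptyset)=0$, this yields $u(\{i,j\}) \le v_i + v_j$, that is, $\varphi_{ij} \le 0$ for every pair. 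Hence $\varphi_{ij}^+ = 0$, and \Cref{ass:bc} holds with $\beta = 0$. The same conclusion also follows from linearity of the interaction, $\varphi_{ij} = \sum_t w_t \varphi^{(t)}_{ij}$, with each $\varphi^{(t)}_{ij}\le 0$.

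For part~(b), the claim is existential ("can induce"), so I would prove it by exhibiting complementarity in the stated situation. Take $i \in C_c$ and $j \in C_{c'}$ with $c \ne c'$, where $i$ helps only task $t$ and $j$ helps only task $t' \neq t$. Concretely, $u_t(\{i\}) = a > 0$, $u_{t'}(\{i\}) = 0$, $u_{t'}(\{j\}) = b > 0$, and $u_t(\{j\}) = 0$. Each per-task interaction is then substitutive or zero: for task $t$, $\varphi^{(t)}_{ij} = u_t(\{i,j\}) - a \le 0$ by submodularity. Monotonicity (\Cref{ass:monotone}) still forces $u_t(\{i,j\}) \ge a$ and $u_{t'}(\{i,j\}) \ge b$.

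\emph{Min aggregator, $T=2$.} Here $v_i = \min(a,0) = 0$ and $v_j = 0$, while $u(\{i,j\}) \ge \min(a,b) > 0$. So $\varphi_{ij} > 0$. Moreover \Cref{ass:bc} requires $\varphi_{ij}^+ \le \beta v_j = 0$, which fails for every finite $\beta$. I would remark that in this case the BC ratio is unbounded, so "$\beta>0$" should be read as "no $\beta = 0$ certificate exists". For $T > 2$, the same construction works with the remaining tasks held at a common positive baseline, or by restricting to two tasks.

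\emph{Strictly concave aggregator.} This is where the care is needed, because a concave $f$ applied to a sum of contributions across tasks is not automatically supermodular. Write $g(x,y) = f(x,y,\dots)$ with the other coordinates frozen. The interaction is
\[
g(u_t(\{i,j\}), u_{t'}(\{i,j\})) - g(a,0) - g(0,b) + g(0,0),
\]
after renormalising $u$ so that $u(\emptyset) = 0$. In the boundary case $u_t(\{i,j\}) = a$ and $u_{t'}(\{i,j\}) = b$, this reduces to the mixed second difference $g(a,b) - g(a,0) - g(0,b) + g(0,0)$. This quantity is positive exactly when $g$ has increasing differences, meaning complementary coordinates.

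\textbf{Main obstacle.} Strict concavity alone does not fix the sign of the cross term: a separable concave $f$ gives a mixed difference of exactly zero. So I would make the hypothesis precise, requiring $f$ to be supermodular across tasks ($\partial_{xy} f > 0$, as holds for min-type and CES/geometric-mean aggregators with complementarity). I would then verify positivity with one explicit example, such as $f(x,y)=\sqrt{xy}$ or a smoothed minimum. The proof would note that this "concave" case is precisely the cross-task-complementary one, and that min is its limit.
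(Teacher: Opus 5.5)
Your proposal is correct, and where it departs from the paper it has good reason to.

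\textbf{Part (a).} This is the paper's argument: a nonnegative combination of submodular functions is submodular. The difference is that you actually justify $\beta=0$, by taking $A=\emptyset$, $B=\{i\}$ to get $\varphi_{ij}\le 0$ for every pair; the paper only asserts this step.

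\textbf{Part (b).} The paper treats only $f=\min_t u_t$. It uses the same two-sample coverage construction you use, but with generic small weak-task contributions, and ends at the sufficient condition $\min_t\bigl(v_i^{(t)}+v_j^{(t)}+\varphi_{ij}^{(t)}\bigr)>v_i^{(t')}+v_j^{(t)}$. It never addresses the strictly concave branch.

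Your diagnosis of that branch is right, and in fact stronger than you state. Take $f=\sum_t h(x_t)$ with $h$ strictly concave and nondecreasing, e.g.\ $h(x)=\log(1+x)$. Each $h\circ u_t$ is submodular because $u_t$ is monotone (Assumption~\ref{ass:monotone}) and submodular. So $u$ is submodular and $\varphi_{ij}\le 0$ for \emph{every} pair, not only the disjoint-coverage one.

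Hence the universally quantified concave half of (b) is false as written. Replacing strict concavity by a cross-task increasing-differences condition is a necessary correction, not a cosmetic one. The paper's own examples satisfy your condition: the geometric mean with weights summing to one and the harmonic mean. Being homogeneous of degree one, they are not even strictly concave.

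\textbf{Two small repairs.}

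First, with zero weak-task contributions you get $v_i=v_j=0$. Then BC fails for every finite $\beta$, and you are forced to reinterpret the conclusion. In the spirit of the paper's version, set $u_{t'}(\{i\})=u_t(\{j\})=\epsilon$ with $0<2\epsilon<\min(a,b)$, and $u_t(\{i,j\})=a$, $u_{t'}(\{i,j\})=b$. Under $\min$ with $T=2$ this gives:
\begin{itemize}
\item $v_i=v_j=\epsilon$;
\item strictly substitutive per-task interactions $\varphi_{ij}^{(t)}=\varphi_{ij}^{(t')}=-\epsilon$;
\item $\varphi_{ij}=\min(a,b)-2\epsilon>0$.
\end{itemize}
So the least valid $\beta$ is finite and at least $(\min(a,b)-2\epsilon)/\epsilon$.

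Second, for $T>2$ you cannot freeze the remaining coordinates at positive values, since $u_s(\emptyset)=0$ by construction. Frozen at $0$, they make $\min$ and the geometric mean vanish on all four sets. Either have both samples contribute a common positive amount to the remaining tasks, or simply take $T=2$, which suffices for an existential claim.
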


Proposition~\ref{prop:complementarity} separates two sources of complementarity in multi-task data selection:
\begin{itemize}
    \item \textbf{Source 1: Model-induced.} The per-task utilities $u_t$ are themselves non-submodular, training on $i$ and $j$ together produces more improvement \emph{on task $t$ alone} than their individual contributions sum to. This arises when the model develops cross-task representations (e.g., a code sample improves math reasoning through shared compositional skills). Under linear $f$, the contrapositive of part (a) makes this the only source of $\beta > 0$.
    \item \textbf{Source 2: Aggregation-induced.} Even when every $u_t$ is submodular ($\varphi_{ij}^{(t)} \leq 0$ for all $i,j,t$), a concave $f$ creates aggregate complementarity by rewarding balance across tasks (part (b)).
\end{itemize}
Our experiments use linear aggregation $f = \frac{1}{T}\sum_t u_t$, which shuts off Source~2; any observed $\beta > 0$ is therefore evidence that the per-task utilities themselves are non-submodular, a property of the model's learning dynamics that any downstream objective inherits. The framework supports concave aggregators (Appendix~\ref{app:aggregators}); the linear choice here isolates Source~1.

\subsection{Operator-norm refinement}

BC($\beta$) bounds the interactions one entry at a time, which forces an $O(m^2)$ gap and is loose at production budgets. MtS actually controls interactions collectively: matching the domain composition cancels the cross-domain block mean exactly, so what remains is the residual matrix $\Phi_{\text{res}}$ acting on the indicator $\mathbf{1}_S$. A single Cauchy-Schwarz step bounds that action by the operator norm of $\Phi_{\text{res}}$ times $\sqrt{|S|}$, which delivers an $O(m)$ gap on any matrix whose mass is not concentrated on a single direction aligned with $\mathbf{1}_S$. The next two assumptions name the operator norms that quantify this control.

\begin{assumption}[Bounded Within-Domain Operator Norm; B1''-cent]
\label{ass:within-op}
For each domain $c$, the centered within-domain block $\Phi_{w,c}^{\text{cent}} := \Phi_{w,c} - \bar\varphi_{c,c}(\mathbf{1}\mathbf{1}^\top - I)$ satisfies $\|\Phi_{w,c}^{\text{cent}}\|_{\text{op}} \leq \beta_{w,\text{op}}^{(c),\text{cent}}\, v_{\max}$ at the population level. When the domain index is suppressed, $\beta_{w,\text{op}}^{\text{cent}} := \max_c \beta_{w,\text{op}}^{(c),\text{cent}}$; at $T{=}1$ the two coincide.
\end{assumption}

\begin{assumption}[Bounded Cross-Residual Operator Norm; B2'']
\label{ass:res-op}
The population cross-residual satisfies $\|\Phi_{\text{res}}\|_{\text{op}} \leq \delta_{\text{op}}\, v_{\max}$. The empirical-to-population gap on the realized subset is controlled by a sub-matrix concentration bound (Appendix~\ref{app:robust-op}, Lemma M').
\end{assumption}

The operator norm bounds a matrix's largest action on any vector relative to that vector's $\ell_2$-norm; applied to the indicator $\mathbf{1}_S$ (which has norm $\sqrt{|S|}$) this gives $|\mathbf{1}_S^\top \Phi\, \mathbf{1}_S| \leq \|\Phi\|_{\text{op}}\,|S|$, exactly the worst-case spectral loss the algorithm can incur on a subset of size $|S|$. The constants $\beta_{w,\text{op}}^{(c),\text{cent}}$ and $\delta_{\text{op}}$ are this worst-case action, normalized by $v_{\max}$, on $\Phi_{w,c}^{\text{cent}}$ and $\Phi_{\text{res}}$ respectively. Both are intrinsic to the data pool and partition, are estimated from $\hat\Phi$, and the empirical-to-population gap is charged in Theorem~\ref{thm:robust-op}. The within-domain norm is taken on the centered block $\Phi_{w,c}^{\text{cent}}$ because Stage~1 of MtS matches $\bar\varphi_{c,c}$ exactly through the composition; the leftover deviation is what the bound has to pay for.

\begin{remark}
The adaptive $T=1$ extension uses conditional-marginal control (Lemma~\ref{thm:gamma}); \Cref{ass:within-op,ass:res-op,ass:global-identification} drive the fixed-ranking multi-domain bound (Theorem~\ref{thm:spec-main}).
\end{remark}

\subsection{Local-margin condition and finite-sample noise}
\label{app:local-margin}

\begin{definition}[Local-margin condition]
\label{def:local-margin}
There exist a \emph{local radius} $\rho_0 > 0$ and a \emph{local rate} $\mu_{\text{loc}} > 0$ such that for every mixture $\mathbf{n}$ within $\ell_1$-radius $\rho_0$ of the optimum mixture $\mathbf{n}^*$,
\[
   u(\mathbf{n}^*) - u(\mathbf{n}) \;\geq\; \mu_{\text{loc}}\, v_{\max}\, \|\mathbf{n} - \mathbf{n}^*\|_1 \,/\, m.
\]
\end{definition}

Here $u(\mathbf n)$ denotes the constrained optimum $U(\mathbf n)$ defined below. A local margin controls displacement within the specified ball; the main guarantee uses the following global condition to identify the maximizing mixture.

\begin{assumption}[Global mixture identification]
\label{ass:global-identification}
Let $\mathcal N_m=\{\mathbf n\in\mathbb Z_{\geq0}^T:\sum_c n_c=m,\ n_c\leq |C_c|\}$ and define
\[
U(\mathbf n)=\max_{S:\,\mathbf n(S)=\mathbf n}u(S).
\]
The optimum mixture $\mathbf n^*$ is unique, with gap
\[
\Delta_{\mathrm{mix}}
=U(\mathbf n^*)-\max_{\mathbf n\in\mathcal N_m\setminus\{\mathbf n^*\}}U(\mathbf n)>0.
\]
With probability at least $1-\eta$, the Stage~1 predictor satisfies
\[
\sup_{\mathbf n\in\mathcal N_m}|\hat g(\mathbf n)-U(\mathbf n)|
\leq \epsilon_{\mathrm{mix}}<\Delta_{\mathrm{mix}}/2.
\]
For a singleton feasible mixture set, take $\Delta_{\mathrm{mix}}=\infty$.
\end{assumption}

This is a global accuracy and separation assumption for the constrained optimum $U$, not a consequence of fitting the observed probes. It ensures exact mixture identification without changing Stage~1's search. A local margin alone controls displacement only after localization has been established.

\begin{definition}[Finite-sample noise term $\mathcal{R}(m)$]
\label{def:R-of-m}
The nonnegative estimation allowance in Theorem~\ref{thm:spec-main} is written as
\[
    \mathcal{R}(k) \;=\; \nu_{\text{op}}^\perp\, k\, v_{\max} \;+\; \zeta_\Delta^{\text{loc}}(\rho_0, \mu_{\text{loc}}) \;+\; 2\,\zeta_{\text{pred}}\, v_{\max},
\]
where $\nu_{\mathrm{op}}^\perp$ bounds residual estimation error, $\zeta_\Delta^{\mathrm{loc}}$ denotes an allowance for mixture displacement, and $\zeta_{\mathrm{pred}}$ bounds normalized predictor error. Under \Cref{ass:global-identification}, the mixture is recovered exactly and $\zeta_\Delta^{\mathrm{loc}}=0$. The structural constants $\beta_{w,\mathrm{op}}^{(c),\mathrm{cent}}$ and $\delta_{\mathrm{op}}$ concern population interactions; $\epsilon_{\mathrm{op}}=\|\hat\Phi-\Phi\|_{\mathrm{op}}/v_{\max}$ is a separate estimation-error budget used in \Cref{thm:robust-op}.
\end{definition}

\subsection{NP-hardness}
\label{app:proofs}

\begin{theorem}[NP-Hardness of CDS]
\label{thm:hardness}
(a) CDS is NP-hard even when $\varphi_{ij} = 0$ for all $i, j$.
(b) Unconstrained CDS with $\varphi_{ij} \leq 0$ for all $i, j$ is NP-hard.
\end{theorem}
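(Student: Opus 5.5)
Both parts are proved by polynomial reductions from classical NP-hard problems. The reduction chosen for each part is dictated by which constraint is present. With $\varphi_{ij}=0$ the budget constraint must carry the hardness. Without a budget, the substitutive interactions must carry it.

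\textbf{Part (a).} Reduce from 0/1 \textsc{Knapsack}, whose decision version is NP-complete. Given items with weights $w_j$, values $a_j$, capacity $W$ and target $K$, build a pool with one sample per item. Set the cost to $p_j = w_j$ and the budget to $B=W$. Take a single task ($T=1$), or put all samples in one domain, with linear $f$. Define the utility as $u(S)=\sum_{j\in S} a_j$. Then $v_j=a_j$, every $\varphi_{ij}=0$, and all higher-order M\"obius coefficients vanish, so $\varepsilon\equiv 0$. This $u$ is monotone and satisfies \Cref{ass:monotone} and \Cref{ass:decay} trivially. A subset with $c(S)\le B$ and $u(S)\ge K$ exists exactly when the knapsack instance is a yes-instance, so deciding CDS is NP-hard. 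Two points need care, and both are routine. First, the reduction needs heterogeneous costs, which the framework allows. Second, $u$ must be given succinctly, through the $v_j$ and $\Phi$, so that the instance has polynomial size.

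\textbf{Part (b).} The unconstrained problem is $\max_S \tilde u(S)$ with no budget, or equivalently $B\ge\sum_j p_j$. Reduce from MAX-CUT, or more directly from \textsc{Maximum Independent Set}. The independent-set reduction is cleanest. Given a graph $G=(V,E)$, create one sample per vertex with $v_j=1$. Set $\varphi_{ij}=-\lambda$ for $\{i,j\}\in E$ with $\lambda\ge 2$, and $\varphi_{ij}=0$ otherwise. Then
\[
\tilde u(S)=|S|-\lambda\,|E(S)|,
\]
where $E(S)$ denotes the edges with both endpoints in $S$. Any $S$ can be converted into an independent set of no smaller objective by deleting one endpoint of each internal edge. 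Each deletion loses at most $1$ from the singleton sum and gains at least $\lambda\ge2$ from removed penalties. Hence $\max_S\tilde u(S)=\alpha(G)$, the independence number, and deciding $\max_S \tilde u(S)\ge K$ is NP-hard.

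\textbf{Main obstacle: monotonicity.} If CDS is required to satisfy \Cref{ass:monotone}, the utility above is not monotone, since adding a vertex can lower $\tilde u$. I see two options.
\begin{itemize}
\item State (b), as written, for the unconstrained pairwise objective without assuming monotonicity. Unconstrained maximization of a monotone function is trivial, because the full set is optimal, so the word ``unconstrained'' in (b) signals that monotonicity is dropped.
\item Alternatively, use MAX-CUT with the cut function written in pairwise M\"obius form: $v_j=\deg(j)$ and $\varphi_{ij}=-2$ on edges. This gives the submodular, nonmonotone cut objective, a standard NP-hard instance.
\end{itemize}
I would present the MAX-CUT version, since the paper later cites APX-hardness via MAX-CUT. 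The remaining step is to verify the M\"obius identity. For $S\subseteq V$,
\[
\mathrm{cut}(S)=\sum_{j\in S}\deg(j)-2|E(S)|,
\]
so all higher-order terms vanish and $\varphi_{ij}\le 0$ holds as required.
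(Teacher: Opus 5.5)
Your proposal is correct and follows the paper's proof: Knapsack with $p_j=w_j$, $B=W$, $v_j=a_j$ for (a), and for (b) the MAX-CUT encoding $v_j=\deg(j)$, $\varphi_{ij}=-2$ on edges, so that $\tilde u(S)=\mathrm{cut}(S,V\setminus S)$ (the paper uses the weighted version $\varphi_{ij}=-2w_{ij}$). Your remark that (b) must drop monotonicity, because an unconstrained monotone utility is trivially maximized by the full pool, is a useful clarification that the paper leaves implicit.
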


\textbf{Part (a).} When $\varphi_{ij} = 0$ for all $i, j$, CDS reduces to: maximize $\sum_{j \in S} v_j$ subject to $\sum_{j \in S} p_j \leq B$, which is the 0-1 Knapsack problem~\citep{karp1972reducibility}. Since Knapsack is NP-hard, so is CDS. \qed

\textbf{Part (b).} Given a weighted graph $G = (V, E, w)$ with MAX-CUT instance, define: $v_j = \deg_w(j) = \sum_i w_{ij}$ for each vertex $j$, and $\varphi_{ij} = -2w_{ij}$ for each edge. Note $\varphi_{ij} \leq 0$ (substitution only). Then:
\begin{align*}
\hat{g}(S) &= \sum_{j \in S} \deg_w(j) - 2\sum_{\{i,j\} \subseteq S} w_{ij} \\
&= \sum_{j \in S} \sum_{i \in V} w_{ij} - 2\sum_{\{i,j\} \subseteq S} w_{ij} \\
&= \sum_{j \in S} \Big(\sum_{i \in V \setminus S} w_{ij} + \sum_{i \in S} w_{ij}\Big) - 2\sum_{\{i,j\} \subseteq S} w_{ij} \\
&= \sum_{j \in S} \sum_{i \in V \setminus S} w_{ij} = \mathrm{cut}(S, V \setminus S).
\end{align*}
Since MAX-CUT is NP-hard~{\citep{karp1972reducibility}}, unconstrained CDS with substitution-only interactions is NP-hard. \qed

\subsection{Submodularity under substitution}
\label{app:submodular}

\begin{lemma}
\label{lem:submodular}
If $\varphi_{ij} \leq 0$ for all $i, j$ (substitution only), then $\hat{g}$ is submodular.
\end{lemma}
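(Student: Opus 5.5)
We verify submodularity directly through marginal gains. In the appendix notation, $\hat g(S)=\sum_{j\in S}v_j+\sum_{\{i,j\}\subseteq S}\varphi_{ij}$ is the pairwise approximation. Fix $L\subseteq M\subseteq\mathcal{D}$ and $s\notin M$. It suffices to show the diminishing-returns inequality $\Delta_{\hat g}(s\mid L)\ge \Delta_{\hat g}(s\mid M)$.

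\textbf{Step 1: a closed form for the marginal gain.} Adding $s$ to any $A$ with $s\notin A$ creates one new singleton term $v_s$. It also creates exactly the new pairs $\{i,s\}$ with $i\in A$, and no other terms change. Hence
\[
\Delta_{\hat g}(s\mid A)=\hat g(A\cup\{s\})-\hat g(A)=v_s+\sum_{i\in A}\varphi_{is}.
\]
The only point to check is that each pair $\{i,s\}$ is counted once. This holds because the sum in $\hat g$ runs over unordered pairs and $\Phi$ is symmetric.

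\textbf{Step 2: compare the two gains.} Subtracting the formulas for $A=L$ and $A=M$ gives
\[
\Delta_{\hat g}(s\mid L)-\Delta_{\hat g}(s\mid M)=-\sum_{i\in M\setminus L}\varphi_{is}.
\]
This is $\ge 0$ because every $\varphi_{is}\le 0$ by the substitution-only hypothesis. So the gain is non-increasing in the context set, which is the standard characterization of submodularity. If one prefers the lattice form $\hat g(A)+\hat g(B)\ge \hat g(A\cup B)+\hat g(A\cap B)$, a direct count gives
\[
\hat g(A)+\hat g(B)-\hat g(A\cup B)-\hat g(A\cap B)=-\sum_{i\in A\setminus B,\;j\in B\setminus A}\varphi_{ij}\ge 0,
\]
since the singleton terms and all other pairs cancel.

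There is no real obstacle here; the argument is a short bookkeeping exercise. The one step needing care is the pair counting in Step 1, namely that $\varphi_{ij}$ appears once per unordered pair. The lemma concerns only the truncated $\hat g$, not $u$, so the remainder $\varepsilon(S)$ plays no role.
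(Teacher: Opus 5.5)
Your proof is correct and follows the paper's argument essentially verbatim: both write $\Delta_{\hat g}(s\mid A)=v_s+\sum_{i\in A}\varphi_{is}$, split the larger context set into the smaller one plus the extra elements, and use $\varphi_{is}\le 0$ on the extra terms to get diminishing returns. Your lattice-form identity $\hat g(A)+\hat g(B)-\hat g(A\cup B)-\hat g(A\cap B)=-\sum_{i\in A\setminus B,\,j\in B\setminus A}\varphi_{ij}$ is also correct, but it is an optional cross-check that the paper does not include.
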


When $\varphi_{ij} \leq 0$ for all $i, j$, we show $\hat{g}$ has non-increasing marginal gains. For any $A \subseteq B$ and $s \notin B$:
\begin{align*}
\Delta_{\hat{g}}(s \mid A) &= v_s + \sum_{i \in A} \varphi_{is}, \\
\Delta_{\hat{g}}(s \mid B) &= v_s + \sum_{i \in B} \varphi_{is} = v_s + \sum_{i \in A} \varphi_{is} + \sum_{i \in B \setminus A} \varphi_{is}.
\end{align*}
Since $\varphi_{is} \leq 0$ for all $i$, the additional terms $\sum_{i \in B \setminus A} \varphi_{is} \leq 0$. Thus $\Delta_{\hat{g}}(s \mid B) \leq \Delta_{\hat{g}}(s \mid A)$, confirming submodularity. \qed

\subsection{Higher-order remainder bound}
\label{app:remainder}

\begin{theorem}[Higher-Order Remainder Bound]
\label{thm:remainder}
Under Assumption~\ref{ass:decay}, for any $S$ with $|S| = n$ and $n\rho_d < 1$:
$|\varepsilon(S)| \leq C \cdot v_{\max} \cdot (n\rho_d)^3 / [6(1 - n\rho_d)]$.
\end{theorem}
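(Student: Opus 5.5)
The plan is to expand $\varepsilon(S)$ via Möbius inversion and bound it term by term. By definition, $u(S)=\sum_{R\subseteq S,\,R\neq\emptyset} a_R$, where $a_{\{j\}}=v_j$ and $a_{\{i,j\}}=\varphi_{ij}$. Subtracting the singleton and pairwise terms that define $\tilde u$ leaves
\[
\varepsilon(S)=\sum_{R\subseteq S,\ |R|\geq 3} a_R .
\]
First I will check that the truncation in Eq.~(\ref{eq:decomposition}) really is the Möbius truncation. That is, the $v_j$ and $\varphi_{ij}$ defined there must coincide with the order-one and order-two Möbius coefficients. This holds because $u(\emptyset)=0$, so $a_{\{j\}}=u(\{j\})$ and $a_{\{i,j\}}=u(\{i,j\})-u(\{i\})-u(\{j\})$.

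Next I apply the triangle inequality together with Assumption~\ref{ass:decay}. Grouping subsets by size $k=|R|$, there are $\binom{n}{k}$ subsets of each size, so
\[
|\varepsilon(S)|\leq C\,v_{\max}\sum_{k=3}^{n}\binom{n}{k}\rho_d^{k}.
\]
I then use $\binom{n}{k}\leq n^k/k!$, which turns the summand into $(n\rho_d)^k/k!$.

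Finally, with $x=n\rho_d<1$, I need $\sum_{k\geq 3} x^k/k!\leq x^3/[6(1-x)]$. To get it, factor out $x^3/3!$ and write $k=3+\ell$. Since $3!/(3+\ell)!\leq 1$ for every $\ell\geq 0$, the remaining sum is at most $\sum_{\ell\geq 0}x^\ell=1/(1-x)$. Extending the finite sum over $k\leq n$ to an infinite series only increases it, and the series converges because $x<1$. This gives the stated bound.

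None of these steps is a real obstacle. The one point needing care is the identification of $\varepsilon(S)$ with exactly the order-$\geq 3$ Möbius coefficients, without picking up extra terms that depend on $S$. This is where $u(\emptyset)=0$ is used. Once that identification is in place, the rest is the combinatorial estimate $\binom{n}{k}\leq n^k/k!$ followed by a geometric-series comparison.
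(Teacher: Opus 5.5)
Your proposal is correct and follows the paper's proof essentially step for step. Both arguments apply the triangle inequality over the order-$\geq 3$ M\"obius terms, count $\binom{n}{k}$ subsets of each size $k$, use $\binom{n}{k}\leq n^k/k!$, extend to an infinite series, and factor out $(n\rho_d)^3/6$ to compare with a geometric series. Your explicit check that $v_j$ and $\varphi_{ij}$ are exactly the order-one and order-two M\"obius coefficients, using $u(\emptyset)=0$, is a detail the paper takes for granted.
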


Under Assumption~\ref{ass:decay}, for $|T| = m \geq 3$: $|a_T| \leq C \rho_d^m v_{\max}$. For $|S| = n$:
\begin{align*}
|\varepsilon(S)| &= \Big|\sum_{\substack{T \subseteq S \\ |T| \geq 3}} a_T\Big| \leq \sum_{k=3}^{n} \binom{n}{k} C \rho_d^k v_{\max} \\
&= C v_{\max} \sum_{k=3}^{n} \binom{n}{k} \rho_d^k \leq C v_{\max} \sum_{k=3}^{\infty} \frac{(n\rho_d)^k}{k!} \\
&\leq C v_{\max} \cdot \frac{(n\rho_d)^3}{6} \sum_{k=0}^{\infty} (n\rho_d)^k = \frac{C v_{\max} (n\rho_d)^3}{6(1-n\rho_d)},
\end{align*}
where the last step uses the geometric series bound (valid for $n\rho_d < 1$) and $\binom{n}{m} \leq n^m / m!$. \qed

\subsection{Submodularity ratio under conditional bounded complementarity}
\label{app:gamma}

The adaptive guarantee requires control of conditional gains, not only the singleton-based condition in \Cref{ass:bc}. Write $\Delta_{\tilde u}(s\mid L)=\tilde u(L\cup\{s\})-\tilde u(L)$.

\begin{lemma}[Submodularity ratio under conditional bounded complementarity]
\label{thm:gamma}
Suppose the pairwise utility $\tilde u$ is monotone on sets of size at most $2m$. For every disjoint $L,S$ with $|L|<m$, $|S|\leq m$, and each $s\in S$, assume
\[
\sum_{s'\in S\setminus\{s\}}\varphi_{ss'}^+
\leq \beta\,\Delta_{\tilde u}(s\mid L).
\]
Then the submodularity ratio over these pairs satisfies $\gamma\geq 2/(2+\beta)$.
\end{lemma}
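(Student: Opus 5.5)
The plan is to work directly from the definition of the submodularity ratio in the form of Das and Kempe, restricted to the pairs in the statement:
\[
\gamma \;=\; \min_{L,S}\; \frac{\sum_{s\in S}\Delta_{\tilde u}(s\mid L)}{\Delta_{\tilde u}(S\mid L)},
\]
where $\Delta_{\tilde u}(S\mid L)=\tilde u(L\cup S)-\tilde u(L)$. The minimum runs over disjoint $L,S$ with $|L|<m$ and $|S|\le m$. We use the convention that a pair with $\Delta_{\tilde u}(S\mid L)=0$ contributes ratio $1$.

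\textbf{Step 1: sign of the gains.} Since $|L\cup S|\le 2m$, monotonicity of $\tilde u$ on sets of this size gives $\Delta_{\tilde u}(s\mid L)\ge 0$ for every $s\in S$, and also $\Delta_{\tilde u}(S\mid L)\ge 0$. So the numerator and denominator are both nonnegative. Multiplying through is then legitimate, and the degenerate zero-denominator case is covered by the convention.

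\textbf{Step 2: exact pairwise expansion.} Because $\tilde u$ contains only singleton and pairwise M\"obius terms, expanding $\tilde u(L\cup S)-\tilde u(L)$ gives the identity
\[
\Delta_{\tilde u}(S\mid L)=\sum_{s\in S}\Big(v_s+\sum_{i\in L}\varphi_{is}\Big)+\sum_{\{s,s'\}\subseteq S}\varphi_{ss'}
=\sum_{s\in S}\Delta_{\tilde u}(s\mid L)+\sum_{\{s,s'\}\subseteq S}\varphi_{ss'}.
\]
Here I use that $\Delta_{\tilde u}(s\mid L)=v_s+\sum_{i\in L}\varphi_{is}$, which is the same computation as in Lemma~\ref{lem:submodular}. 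This identity is the crux: the non-additivity is exactly the internal pair sum within $S$.

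\textbf{Step 3: bound the internal pairs by the hypothesis.} Drop the negative parts and symmetrize over ordered pairs:
\[
\sum_{\{s,s'\}\subseteq S}\varphi_{ss'} \;\le\; \sum_{\{s,s'\}\subseteq S}\varphi^+_{ss'} \;=\; \tfrac12\sum_{s\in S}\sum_{s'\in S\setminus\{s\}}\varphi^+_{ss'}.
\]
Applying the conditional bounded-complementarity hypothesis to each inner sum bounds the right-hand side by $\tfrac{\beta}{2}\sum_{s\in S}\Delta_{\tilde u}(s\mid L)$. Combining with Step 2,
\[
\Delta_{\tilde u}(S\mid L)\le\Big(1+\tfrac{\beta}{2}\Big)\sum_{s\in S}\Delta_{\tilde u}(s\mid L).
\]
Dividing (using Step 1) gives ratio at least $2/(2+\beta)$ for every admissible pair, hence $\gamma\ge 2/(2+\beta)$.

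\textbf{Main obstacle.} The only delicate point is the factor $\tfrac12$, i.e.\ not double-counting each unordered pair when invoking the per-element hypothesis; this is what yields $2/(2+\beta)$ rather than $1/(1+\beta)$. A secondary point is handling the zero-gain cases cleanly via monotonicity, so that the division in Step 3 never flips an inequality.
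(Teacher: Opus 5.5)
Your proof is correct and follows the paper's argument essentially step for step: the exact pairwise expansion $\Delta_{\tilde u}(S\mid L)=\sum_{s\in S}\Delta_{\tilde u}(s\mid L)+\sum_{\{s,s'\}\subseteq S}\varphi_{ss'}$, then summing the conditional hypothesis over $s\in S$ so that each positive pair is counted twice, which gives the factor $\beta/2$, then dividing when the joint gain is positive. Your explicit use of monotonicity to fix the signs, and your convention for the zero-gain case, correspond to the paper's remark that zero joint gain imposes no restriction.
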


\begin{proof}
Set $\mathrm{NUM}=\sum_{s\in S}\Delta_{\tilde u}(s\mid L)$ and $\Phi_S=\sum_{\{s,s'\}\subseteq S}\varphi_{ss'}$. The pairwise expansion gives
\[
\mathrm{DEN}=\tilde u(L\cup S)-\tilde u(L)=\mathrm{NUM}+\Phi_S.
\]
Summing the assumed conditional bound over $s$ counts each positive pair twice, so $\Phi_S\leq(\beta/2)\mathrm{NUM}$. Hence $\mathrm{DEN}\leq(1+\beta/2)\mathrm{NUM}$. For positive joint gain this gives $\mathrm{NUM}/\mathrm{DEN}\geq2/(2+\beta)$; zero joint gain imposes no restriction.
\end{proof}

The additional condition reduces to a singleton-based bound when $L=\emptyset$, but must hold after conditioning as well. It is used only for the adaptive extension, not for the fixed-ranking guarantee in \Cref{thm:spec-main}.

\subsection{Proof of Theorem~\ref{thm:spec-main} (Mixture-then-Select spectral guarantee)}
\label{app:spec-main}

We prove the bound for Algorithm~\ref{alg:mg}'s fixed within-domain ranking. Condition on the uniform-accuracy event in \Cref{ass:global-identification}, which has probability at least $1-\eta$.

\textbf{Mixture identification.}
For every feasible $\mathbf n\neq\mathbf n^*$,
\[
\hat g(\mathbf n^*)-\hat g(\mathbf n)
\geq U(\mathbf n^*)-U(\mathbf n)-2\epsilon_{\mathrm{mix}}
\geq\Delta_{\mathrm{mix}}-2\epsilon_{\mathrm{mix}}>0.
\]
Thus Stage~1 selects $\hat{\mathbf n}=\mathbf n^*$. In particular, $S_{\mathrm{MtS}}$ and $S^*$ have the same counts in every domain, and the composition-displacement term is zero on this event.

\textbf{Fixed-ranking and interaction gaps.}
Let $A_c=S^*\cap C_c$ and $B_c=S_{\mathrm{MtS}}\cap C_c$. Since $B_c$ contains the top $n_c^*$ singleton values,
\[
\sum_{j\in A_c}v_j-\sum_{j\in B_c}v_j\leq0.
\]
The within-domain mean contribution is $\bar\varphi_{c,c}\binom{n_c^*}{2}$ for both subsets and cancels. For the centered residual $Q_c=\Phi_{w,c}^{\mathrm{cent}}$,
\[
\frac12\bigl(\mathbf1_{A_c}^{\top}Q_c\mathbf1_{A_c}
-\mathbf1_{B_c}^{\top}Q_c\mathbf1_{B_c}\bigr)
\leq\frac12\|Q_c\|_{\mathrm{op}}(|A_c|+|B_c|)
\leq\beta_{w,\mathrm{op}}^{(c),\mathrm{cent}}n_c^*v_{\max}.
\]
Cross-domain block means also cancel by Lemma~\ref{lem:bc-cancellation}. Applying the same quadratic-form bound to $\Phi_{\mathrm{res}}$ gives
\[
\frac12\bigl(\mathbf1_{S^*}^{\top}\Phi_{\mathrm{res}}\mathbf1_{S^*}
-\mathbf1_{S_{\mathrm{MtS}}}^{\top}\Phi_{\mathrm{res}}\mathbf1_{S_{\mathrm{MtS}}}\bigr)
\leq\delta_{\mathrm{op}}m v_{\max}.
\]

\textbf{Higher-order remainder and estimation allowance.}
Let $E(m)$ uniformly bound $|\varepsilon(S)|$ for $|S|=m$, using \Cref{thm:remainder} when $m\rho_d<1$. Comparing the two remainders adds at most $2E(m)$. Therefore
\[
u(S^*)-u(S_{\mathrm{MtS}})
\leq\sum_c\beta_{w,\mathrm{op}}^{(c),\mathrm{cent}}n_c^*v_{\max}
+\delta_{\mathrm{op}}m v_{\max}+2E(m).
\]
Adding the nonnegative estimation allowance $\mathcal R(m)$ preserves the stated bound. The uniform predictor event is an explicit hypothesis; it is not inferred from a probe-count threshold. \qed

{\textbf{Consequences of the bound.} From the bound's \emph{shape} alone follow three practitioner-facing properties: (i) \emph{$N$-independence}, the bound does not grow with data pool size, so quality does not degrade as the data pool grows; (ii) \emph{heterogeneous workloads tighten the bound}, the within-domain gap depends on the optimum's per-domain count $n_c^*$, not the full domain size $|C_c|$; (iii) \emph{constant gap-per-sample}, the bound is linear in $m$, with no super-linear blowup as the budget grows.\par}

\begin{lemma}[Block-constant cancellation]\label{lem:bc-cancellation}
Let $A\in\mathbb{R}^{N\times N}$ be \emph{block-constant on cross-domain entries}: $A_{ij}=\bar A_{c(i),c(j)}$ for $c(i)\ne c(j)$ and $A_{ij}=0$ for $c(i)=c(j)$. For any two subsets $S_1,S_2\subseteq\mathcal D$ with the same domain composition $\mathbf n(S_1)=\mathbf n(S_2)$,
\[
   \mathbf 1_{S_1}^{\!\top} A\,\mathbf 1_{S_1}
   \;=\;
   \mathbf 1_{S_2}^{\!\top} A\,\mathbf 1_{S_2}.
\]
\end{lemma}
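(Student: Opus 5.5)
The plan is to expand the quadratic form entrywise and group the terms by domain pairs. Write $c(i)$ for the domain of sample $i$ and $n_c(S)=|S\cap C_c|$. For any subset $S$,
\[
\mathbf 1_S^{\top} A\,\mathbf 1_S=\sum_{i\in S}\sum_{j\in S}A_{ij}=\sum_{c=1}^T\sum_{c'=1}^T\ \sum_{i\in S\cap C_c}\ \sum_{j\in S\cap C_{c'}}A_{ij}.
\]
The diagonal entries $i=j$ are included in this sum. They lie in the same domain, so $A_{ii}=0$ by hypothesis and cause no trouble.

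Next, evaluate each block according to whether $c=c'$ or $c\neq c'$.

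\begin{itemize}
\item \textbf{Within-domain blocks ($c=c'$).} Every entry vanishes by assumption, so these blocks contribute $0$.
\item \textbf{Cross-domain blocks ($c\neq c'$).} Every entry equals the constant $\bar A_{c,c'}$. The inner double sum therefore equals $\bar A_{c,c'}\,n_c(S)\,n_{c'}(S)$.
\end{itemize}

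This gives the closed form
\[
\mathbf 1_S^{\top}A\,\mathbf 1_S=\sum_{c\neq c'}\bar A_{c,c'}\,n_c(S)\,n_{c'}(S),
\]
which depends on $S$ only through its composition vector $\mathbf n(S)$.

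Applying the closed form to $S_1$ and $S_2$, which satisfy $\mathbf n(S_1)=\mathbf n(S_2)$, yields equal values, as claimed. No symmetry of $A$ is needed, because ordered pairs are summed throughout.

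The only point needing care is confirming that the diagonal and within-domain terms drop out. Both are handled by the hypothesis $A_{ij}=0$ whenever $c(i)=c(j)$. In the application to $\Phi_{\text{block}}$, this hypothesis holds by the construction of the decomposition in Eq.~(\ref{eq:decomp}).
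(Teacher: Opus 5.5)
Your proof is correct and is the same argument as the paper's: you expand the quadratic form by domain blocks, note that the within-domain blocks (including the diagonal) vanish, and obtain the closed form $\sum_{c\ne c'}\bar A_{c,c'}\,n_c(S)\,n_{c'}(S)$, which depends on $S$ only through $\mathbf n(S)$. You simply spell out the block-grouping step that the paper states in a single line.
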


\begin{proof}
$\mathbf 1_S^{\!\top} A\,\mathbf 1_S=\sum_{c\ne c'}\bar A_{c,c'}\,n_c(S)\,n_{c'}(S)$, which depends on $S$ only through $\mathbf n(S)$.
\end{proof}

\subsection{\texorpdfstring{ {Adaptive interaction-aware extension at $T=1$}}{Adaptive interaction-aware extension at T=1}}
\label{app:t1-reduction}

{
\textbf{Adaptive interaction-aware extension.}
At $T=1$, the domain decomposition reduces to $\Phi=\Phi_w$, but the choice of within-domain selection rule remains distinct. Algorithm~\ref{alg:mg} selects the top singleton scores using a fixed ranking. An adaptive extension instead selects
\[
j \in \arg\max_{j\notin S}\left(v_j+\sum_{i\in S}\varphi_{ij}\right)
\]
and updates these marginal gains after each addition. This extension is IA-Greedy; it is not the single-domain specialization of Algorithm~\ref{alg:mg}. The adaptive analysis below is separate from the fixed-ranking MtS guarantee and requires its own valid conditional-marginal bounds.

For example, with budget two, singleton values $(1,1,0.99)$, and the sole nonzero interaction $\varphi_{12}=-1$, fixed ranking selects $\{1,2\}$ with utility $1$, whereas the adaptive extension selects $\{1,3\}$ or $\{2,3\}$ with utility $1.99$. This illustrates the distinction even for a monotone submodular utility.
\par}

\begin{theorem}[Adaptive IA-Greedy at $T=1$]
\label{thm:t1-reduction}
At $T=1$, suppose $\tilde u$ is normalized and monotone on sets of size at most $2m$, and has submodularity ratio at least $\gamma_{\mathrm{op}}\in[0,1]$ for disjoint $L,S$ with $|L|<m$, $|S|\leq m$. Then adaptive IA-Greedy under a cardinality budget $m$ satisfies
\[
\tilde u(S_{\mathrm{IA}})\geq(1-e^{-\gamma_{\mathrm{op}}})
\max_{|S|=m}\tilde u(S).
\]
Under the conditional-marginal hypothesis of Lemma~\ref{thm:gamma}, one may take $\gamma_{\mathrm{op}}=2/(2+\beta)$. Here $\gamma_{\mathrm{op}}$ denotes a certified submodularity-ratio bound; the centered operator norm alone does not provide one. If $|u(S)-\tilde u(S)|\leq E(m)$ for all size-$m$ subsets, then
\[
u(S_{\mathrm{IA}})\geq(1-e^{-\gamma_{\mathrm{op}}})u(S^*)
-(2-e^{-\gamma_{\mathrm{op}}})E(m).
\]
\end{theorem}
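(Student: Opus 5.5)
The plan is to follow the classical greedy analysis for weakly submodular maximization (Das--Kempe style) applied to the pairwise surrogate $\tilde u$, and then transfer the guarantee to $u$ through the uniform remainder bound. Let $S_0=\emptyset\subset S_1\subset\dots\subset S_m=S_{\mathrm{IA}}$ be the IA-Greedy iterates. Because $\tilde u$ is pairwise, the greedy rule $\arg\max_{j\notin S}(v_j+\sum_{i\in S}\varphi_{ij})$ is exactly $\arg\max_j \Delta_{\tilde u}(j\mid S_k)$, so IA-Greedy is standard greedy on $\tilde u$. Let $O=\arg\max_{|S|=m}\tilde u(S)$.

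\textbf{Per-step inequality.} Fix $k<m$ and set $L=S_k$ and $R=O\setminus S_k$, so $|L|<m$ and $|R|\le m$. Monotonicity on sets of size at most $2m$ gives $\tilde u(O)\le\tilde u(L\cup O)=\tilde u(L)+[\tilde u(L\cup R)-\tilde u(L)]$. The submodularity-ratio hypothesis for the disjoint pair $(L,R)$ gives
\[
\tilde u(L\cup R)-\tilde u(L)\le\gamma_{\mathrm{op}}^{-1}\sum_{s\in R}\Delta_{\tilde u}(s\mid L)\le\gamma_{\mathrm{op}}^{-1}\,m\,\bigl(\tilde u(S_{k+1})-\tilde u(S_k)\bigr).
\]
The last step uses the greedy choice together with $|R|\le m$; the greedy gain is nonnegative by monotonicity. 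Writing $\delta_k=\tilde u(O)-\tilde u(S_k)$, this becomes $\delta_{k+1}\le(1-\gamma_{\mathrm{op}}/m)\delta_k$. If $\gamma_{\mathrm{op}}=0$ the claim is trivial, so assume $\gamma_{\mathrm{op}}>0$. Iterating with $\delta_0=\tilde u(O)$ (normalization $\tilde u(\emptyset)=0$) and using $(1-x/m)^m\le e^{-x}$ yields the first inequality.

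\textbf{Specialization and transfer.} Lemma~\ref{thm:gamma} supplies $\gamma\ge2/(2+\beta)$ over exactly the pairs $(L,S)$ used above, which justifies taking $\gamma_{\mathrm{op}}=2/(2+\beta)$. For the bound on $u$:
\begin{itemize}
\item Since $|S_{\mathrm{IA}}|=m$, we have $u(S_{\mathrm{IA}})\ge\tilde u(S_{\mathrm{IA}})-E(m)$.
\item By optimality of $O$ and $|S^*|=m$, we have $\tilde u(O)\ge\tilde u(S^*)\ge u(S^*)-E(m)$.
\end{itemize}
Combining these, $u(S_{\mathrm{IA}})\ge(1-e^{-\gamma_{\mathrm{op}}})u(S^*)-\bigl(1+(1-e^{-\gamma_{\mathrm{op}}})\bigr)E(m)$, which is precisely the stated constant $2-e^{-\gamma_{\mathrm{op}}}$.

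\textbf{Main obstacle.} The delicate step is checking that every set touched by the argument stays inside the hypothesis range. $L\cup O$ has size at most $2m$, so monotonicity applies there, and the pairs $(S_k,O\setminus S_k)$ satisfy $|L|<m$ and $|R|\le m$. One must also ensure that the per-step greedy gain bound does not need $R$ nonempty; when $R=\emptyset$ the inequality holds trivially. Once these ranges are confirmed, the rest is routine.
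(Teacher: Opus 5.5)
Your proposal is correct and follows essentially the same route as the paper. Both arguments use the per-step inequality $\sum_{j\in O\setminus S_t}\Delta_{\tilde u}(j\mid S_t)\ge\gamma_{\mathrm{op}}(\tilde u(O)-\tilde u(S_t))$, contract the gap by $1-\gamma_{\mathrm{op}}/m$ because there are at most $m$ candidates, and compare $\tilde u$ with $u$ on the greedy and optimal sets to get the $(2-e^{-\gamma_{\mathrm{op}}})E(m)$ term; you additionally spell out the range checks the paper leaves implicit (sets of size at most $2m$, nonnegative greedy gains, and the $\gamma_{\mathrm{op}}=0$ and $R=\emptyset$ cases).
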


\begin{proof}
Let $O$ maximize $\tilde u$ at size $m$ and $S_t$ be the greedy set after $t$ additions. Monotonicity and the submodularity-ratio condition imply
\[
\sum_{j\in O\setminus S_t}\Delta_{\tilde u}(j\mid S_t)
\geq\gamma_{\mathrm{op}}\bigl(\tilde u(O)-\tilde u(S_t)\bigr).
\]
There are at most $m$ candidates in this sum, so the largest marginal reduces the remaining gap by a factor at most $1-\gamma_{\mathrm{op}}/m$. Iterating for $m$ steps gives the standard cardinality guarantee~\citep{bian2017guarantees,das2011submodular}. Comparing $\tilde u$ and $u$ on the greedy and optimal subsets gives the stated remainder term.
\end{proof}

This result concerns adaptive marginal-gain updates, not the fixed-ranking rule in Algorithm~\ref{alg:mg}.

\subsection{Domain-uniform cost extension}
\label{app:domain-cost}

\Cref{sec:spec_main}'s domain-cost paragraph states that MtS extends to data-marketplace settings where every sample in $C_c$ shares a price $p_c > 0$, with Stage~1's cardinality constraint replaced by the price-weighted budget $\sum_c p_c\, n_c \le B$ and Stage~2 unchanged. We give the formal statement and proof here.

\begin{theorem}[Domain-Uniform Cost Extension]
\label{thm:domain-cost}
Fix $p_c > 0$ for each domain $c$ and budget $B > 0$. Let $\Pi := \{\mathbf n \in \mathbb Z_{\geq 0}^T : \sum_c p_c n_c \le B,\ n_c \le |C_c|\}$, $S^* \in \arg\max_S\{u(S) : \mathbf n(S) \in \Pi\}$, $n_c^* := |S^* \cap C_c|$, and the realized cardinality $m^* := |S^*| = \sum_c n_c^*$. Under \Cref{ass:monotone,ass:within-op,ass:res-op,ass:decay} and global mixture identification (\Cref{ass:global-identification}) restated on $\Pi$, with probability $\geq 1-\eta$,
\begin{equation}
   u(S_{\text{MtS}}) \;\geq\; u(S^*) \;-\; \sum_{c=1}^T \beta_{w,\text{op}}^{(c),\text{cent}}\, n_c^*\, v_{\max} \;-\; \delta_{\text{op}}\, m^*\, v_{\max} \;-\; \mathcal{R}(m^*) \;-\; 2\,E(m^*),
   \label{eq:domain-cost}
\end{equation}
where $\mathcal{R}(\cdot)$ and $E(\cdot)$ are exactly as in \Cref{thm:spec-main}. The bound has identical form to \Cref{eq:spec-main}; only the cardinality $m$ is replaced by $m^*$, which satisfies $\lfloor B/\max_c p_c \rfloor \le m^* \le \lfloor B/\min_c p_c \rfloor$. At $p_c \equiv 1$, $B = m$, $m^* = m$ and \Cref{eq:domain-cost} reduces to \Cref{eq:spec-main}.
\end{theorem}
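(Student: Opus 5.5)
The proof follows the argument for \Cref{thm:spec-main}, with the cardinality slice $\mathcal N_m$ replaced by the price-feasible set $\Pi$. We work throughout on the uniform-accuracy event of \Cref{ass:global-identification} restated on $\Pi$, which has probability at least $1-\eta$. On this event, for every $\mathbf n\in\Pi\setminus\{\mathbf n^*\}$ we have $\hat g(\mathbf n^*)-\hat g(\mathbf n)\ge\Delta_{\mathrm{mix}}-2\epsilon_{\mathrm{mix}}>0$, so Stage~1, which now maximizes over $\Pi$, returns $\hat{\mathbf n}=\mathbf n^*=\mathbf n(S^*)$. The uniqueness of $\mathbf n^*$ and its gap are now taken over $\Pi$. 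Stage~2 is unchanged, so $S_{\mathrm{MtS}}$ and $S^*$ have identical per-domain counts $n_c^*$. In particular $|S_{\mathrm{MtS}}|=|S^*|=m^*$. This equality is the key structural fact: both subsets live in the same slice of cardinality $m^*$, even though the feasible set $\Pi$ mixes several cardinalities.

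Next we compare $u(S^*)$ and $u(S_{\mathrm{MtS}})$ through the pairwise decomposition \eqref{eq:decomposition} and the split \eqref{eq:decomp}, term by term as in the proof of \Cref{thm:spec-main}, with $m$ replaced by $m^*$.
\begin{itemize}
\item \emph{Singleton terms.} Within each domain, $B_c=S_{\mathrm{MtS}}\cap C_c$ consists of the top $n_c^*$ values of $v_j$. So the singleton sum of $S^*$ minus that of $S_{\mathrm{MtS}}$ is at most $0$.
\item \emph{Within-domain mean terms.} These equal $\bar\varphi_{c,c}\binom{n_c^*}{2}$ for both subsets and cancel.
\item \emph{Within-domain centered terms.} The quadratic-form bound with \Cref{ass:within-op} gives at most $\tfrac12\|\Phi^{\mathrm{cent}}_{w,c}\|_{\mathrm{op}}(|A_c|+|B_c|)\le\beta^{(c),\mathrm{cent}}_{w,\mathrm{op}}n_c^*v_{\max}$.
\item \emph{Cross-domain block means.} These cancel exactly by Lemma~\ref{lem:bc-cancellation}, since the compositions agree.
\item \emph{Cross-domain residual.} The same quadratic-form bound with \Cref{ass:res-op} gives at most $\tfrac12\delta_{\mathrm{op}}v_{\max}(|S^*|+|S_{\mathrm{MtS}}|)=\delta_{\mathrm{op}}m^*v_{\max}$.
\end{itemize}
The higher-order tails are handled by letting $E(m^*)$ uniformly bound $|\varepsilon(S)|$ over $|S|=m^*$, via \Cref{thm:remainder} when $m^*\rho_d<1$. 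This contributes at most $2E(m^*)$. Adding the nonnegative allowance $\mathcal R(m^*)$ only weakens the inequality, which gives \eqref{eq:domain-cost}. Monotonicity (\Cref{ass:monotone}) is not needed for the comparison itself. It only justifies that $S^*$ can be taken to use the budget maximally, which matters for the range of $m^*$ below.

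The final claims are elementary. Any $\mathbf n\in\Pi$ has $\min_c p_c\sum_c n_c\le B$, so $m^*\le\lfloor B/\min_c p_c\rfloor$. For the lower bound, take any mixture of size $\lfloor B/\max_c p_c\rfloor$; it is affordable. By monotonicity, a maximizer can be chosen with at least that many samples, provided the domains are large enough to supply them. This caveat, together with possible ties among maximizers of different sizes, is the main delicate point: I would state the lower bound for a maximal maximizer $S^*$ and note the capacity condition $\sum_c|C_c|\ge\lfloor B/\max_c p_c\rfloor$. At $p_c\equiv1$ and $B=m$, the set $\Pi$ consists of mixtures with $\sum_c n_c\le m$. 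By monotonicity, combined with the choice of a full-size maximizer and uniqueness restricted to the top slice, this reduces to $\mathcal N_m$. Then $m^*=m$ and \eqref{eq:domain-cost} collapses to \eqref{eq:spec-main}.
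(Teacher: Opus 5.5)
Your proposal is correct and follows the paper's proof step for step. Global identification on $\Pi$ forces $\hat{\mathbf n}=\mathbf n^*$, hence $|S_{\mathrm{MtS}}|=|S^*|=m^*$. The singleton comparison, block-mean cancellation, and operator-norm bounds from \Cref{thm:spec-main} then carry over with $m$ replaced by $m^*$, and the range of $m^*$ follows from the same monotonicity/saturation argument the paper uses. Two remarks on your caveats. The capacity condition $\sum_c|C_c|\ge\lfloor B/\max_c p_c\rfloor$ is a genuine refinement that the paper leaves implicit. The tie worry is unnecessary: if some extension $\mathbf n^*+e_c$ were still in $\Pi$, monotonicity would give $U(\mathbf n^*+e_c)\ge U(\mathbf n^*)$, contradicting the strict gap $\Delta_{\mathrm{mix}}>0$, so $\mathbf n^*$ already cannot be extended within the budget.
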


The constants $\beta_{w,\text{op}}^{(c),\text{cent}}$ and $\delta_{\text{op}}$ depend on the data pool and partition but not on the budget form, so one calibration of these constants transfers to any domain-uniform pricing. Tightness inherits from the $p_c \equiv 1$ specialization of \Cref{thm:tightness}.

\textbf{Proof.} The fixed-ranking proof of \Cref{thm:spec-main} (\Aref{app:spec-main}) applies with the feasible mixture set $\Pi$ and matched cardinality $m^*$.

\textbf{Step W (within-domain gap).}
For each domain $c$, the bilinear-path identity gives $\Phi_w(S \cap C_c) = \tfrac12 \mathbf 1_{S\cap C_c}^\top \Phi_{w,c}\, \mathbf 1_{S\cap C_c}$ minus a diagonal correction that cancels in the difference $\Delta\Phi_w$. After centering by $\bar\varphi_{c,c}$, Cauchy-Schwarz on $\Phi_{w,c}^{\text{cent}}$ yields $|\mathbf 1_T^\top \Phi_{w,c}^{\text{cent}}\, \mathbf 1_T| \le \beta_{w,\text{op}}^{(c),\text{cent}}\, |T|\, v_{\max}$. Applied with $|T| = n_c^*$ on each side and summed over domains, the within-domain gap is $\sum_c \beta_{w,\text{op}}^{(c),\text{cent}}\, n_c^*\, v_{\max}$. The cost constraint enters only Stage~1's feasibility, not the spectral upper bound.

\textbf{Step C (cross-domain gap).}
Stage~2 enforces $\mathbf n(S_{\text{MtS}}) = \hat{\mathbf n}^*$, which under Step~M equals $\mathbf n^*$. The block-constant matrix $\Phi_{\text{block}}$ depends on $S$ only through domain counts: $\mathbf 1_S^\top \Phi_{\text{block}}\, \mathbf 1_S = \sum_{c \ne c'} \bar\varphi_{c,c'}\, n_c\, n_{c'}$. Hence $\Phi_{\text{block}}(S^*) = \Phi_{\text{block}}(S_{\text{MtS}})$ and the block contribution to the gap vanishes. The remaining cross-domain term is bounded by Cauchy-Schwarz on $\Phi_{\text{res}}$: $|\mathbf 1_S^\top \Phi_{\text{res}}\, \mathbf 1_S| \le \|\Phi_{\text{res}}\|_{\text{op}}\, |S| = \delta_{\text{op}}\, m^*\, v_{\max}$ on each side, since $|S^*| = |S_{\text{MtS}}| = m^*$ at the matched composition.

\textbf{Step M (composition matching).}
Stage~1 maximizes $\hat g$ on $\Pi$. Apply \Cref{ass:global-identification} to $U(\mathbf n)=\max_{S:\mathbf n(S)=\mathbf n}u(S)$ on $\Pi$. The uniform error is less than half the gap to every other feasible mixture, so the argmax comparison in Appendix~\ref{app:spec-main} gives $\hat{\mathbf n}^*=\mathbf n^*$. This event, rather than a fixed sample count, establishes composition matching.

\textbf{Remainder and estimation allowance.}
The two compared subsets have cardinality $m^*$, so their higher-order remainders contribute at most $2E(m^*)$. The nonnegative allowance $\mathcal R(m^*)$ can be added as in \Cref{thm:spec-main}. The probability is that of the global identification event on $\Pi$.

Combining the within-domain, cross-domain, and remainder bounds yields Eq.~(\ref{eq:domain-cost}). At $p_c \equiv 1$, $B = m$, the polytope $\Pi$ becomes the cardinality simplex $\{\sum_c n_c \le m\}$. By monotonicity of $u$, the optimum saturates $|S^*| = m$ when the data pool admits $m$ samples, so $m^* = m$, and Eq.~(\ref{eq:domain-cost}) reduces to Eq.~(\ref{eq:spec-main}). \qed

\textbf{Bounds on $m^*$.}
By monotonicity, $S^*$ saturates the budget up to one sample: $c(S^*) > B - \max_c p_c$. From $m^* \cdot \max_c p_c \ge c(S^*)$ we get $m^* > B/\max_c p_c - 1$, hence $m^* \ge \lfloor B/\max_c p_c \rfloor$. From $m^* \cdot \min_c p_c \le c(S^*) \le B$ we get $m^* \le \lfloor B/\min_c p_c \rfloor$. Under uniform pricing $p_c \equiv p$, both bounds collapse to $m^* = \lfloor B/p \rfloor$.

\subsection{Tightness constructions}
\label{app:tightness}

\begin{theorem}[Matching Lower Bounds]
\label{thm:tightness}
There exist instance families on which any two-stage algorithm loses utility at least $\tfrac{m}{2}\beta_{w,\text{op}}^{\text{cent}}\,v_{\max}$ (within-domain axis at $T{=}1$, where the per-domain index collapses) and at least $\tfrac{m}{2}\delta_{\text{op}}\,v_{\max}$ (cross-domain axis, $T{\geq}2$, $\Phi_w{=}0$); on a combined construction both axes activate simultaneously.
\end{theorem}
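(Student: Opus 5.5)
We prove the statement by exhibiting explicit instance families, working with the pairwise utility $\tilde u$ itself (so $\varepsilon\equiv 0$, $E(m)=0$, and we take exact estimates so $\mathcal R(m)=0$). In each instance, every two-stage algorithm of the form ``choose a mixture, then take the top $n_c$ samples per domain by $v_j$'' is forced into a set whose interaction quadratic form is a constant fraction of the operator-norm budget below the optimum's. The key device is a set of ties in the singleton values, so that Stage~2 cannot distinguish good from bad samples. Adversarial tie-breaking, or an $\epsilon$-perturbation of the $v_j$ sending the algorithm to the wrong block, lets the within-domain ranking land on the worse half.

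\textbf{Within-domain axis ($T{=}1$).} We take a single domain of size $2m$ with $v_j=v_{\max}$ for all $j$, plus an infinitesimal bias favoring a set $B$ of size $m$. We split the pool into two halves $A$ and $B$, each of size $m$. We set $\varphi_{ij}=+\beta v_{\max}/m$ for pairs inside $A$, $\varphi_{ij}=-\beta v_{\max}/m$ for pairs inside $B$, and $0$ across $A$ and $B$. We then center by the block mean (which is $0$ by symmetry) and check $\|\Phi^{\text{cent}}\|_{\text{op}}\le \beta v_{\max}$: the matrix is a direct sum of $\pm\frac{\beta v_{\max}}{m}(\mathbf 1\mathbf 1^\top-I)$ on $m$ coordinates, whose norm is at most $\beta v_{\max}(m-1)/m$. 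We set $\beta_{w,\text{op}}^{\text{cent}}=\beta$. The optimum $A$ has interaction sum $\binom m2\beta v_{\max}/m\approx \tfrac{m}{2}\beta v_{\max}$, while the fixed-ranking pick $B$ has roughly $-\tfrac m2\beta v_{\max}$. This gives a gap of about $m\beta v_{\max}$, and hence at least $\tfrac m2\beta v_{\max}$ after absorbing the $(m-1)/m$ factor and the perturbation. Monotonicity (\Cref{ass:monotone}) needs $v_{\max}+\sum\varphi\ge0$ on marginals, which holds for $\beta\le 1$. If a larger $\beta$ is wanted, we will instead use zero interactions inside $B$ and accept exactly the $\tfrac m2$ factor.

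\textbf{Cross-domain axis ($T{\ge}2$, $\Phi_w=0$).} We use two domains $C_1,C_2$, each of size $2m$ and split into halves $C_1=A_1\sqcup B_1$ and $C_2=A_2\sqcup B_2$. We set all $v_j=v_{\max}$, with $\varphi=+\delta v_{\max}/(m/2)$ on $A_1\times A_2$ and $-\delta v_{\max}/(m/2)$ on $B_1\times B_2$, arranged so the cross-block mean is $0$ and $\Phi=\Phi_{\text{res}}$. The operator norm of a bipartite block $\alpha\mathbf 1_{A_1}\mathbf 1_{A_2}^\top$ plus its transpose is $\alpha\sqrt{|A_1||A_2|}$, which with half-sizes $m/2$ gives $\delta v_{\max}$. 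Stage~1 is allowed the correct mixture $(m/2,m/2)$, and it is optimal by symmetry, but Stage~2 ties are broken toward $B_1,B_2$. This yields a loss of $(m/2)^2\cdot 2\delta v_{\max}/m = \tfrac m2\delta v_{\max}$ relative to $A_1\cup A_2$, which already matches the claim. The combined construction takes the disjoint union, with the within-domain gadget placed inside each domain and weights scaled so the two operator norms are separately controlled; the quadratic forms add because the supports are orthogonal.

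The main obstacle is the phrase ``any two-stage algorithm.'' Stage~2 ranks by $v_j$ only, so the ties must be broken adversarially, or the $v_j$ perturbed, so that the $v$-ranking prefers the bad half while the loss in singleton value stays negligible, and this must hold robustly. A second difficulty is certifying monotonicity of $\tilde u$ under the negative interactions. I would handle both by making the bad half's negative interactions small relative to $v_{\max}$, giving the bad half a $+\epsilon$ singleton bonus, and letting $\epsilon\to0$.
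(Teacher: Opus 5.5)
Your proof is correct, and it differs from the paper's in the choice of interaction matrix. That choice gives you more than the statement asks for.

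\textbf{Comparison with the paper.} The paper uses rank-two ``antidiagonal'' gadgets $\rho(\tilde u\tilde w^\top+\tilde w\tilde u^\top)$ on four quarter-blocks. An $\varepsilon$-perturbation then sends fixed ranking to a sign-misaligned set. There $\mathbf 1_{S^*}$ places only half of its squared norm on the top eigenvector. So the loss is $\tfrac m2\beta_{w,\text{op}}^{\text{cent}}v_{\max}-O(m\varepsilon)$ (resp.\ with $\delta_{\text{op}}$), and the stated floor is reached only as $\varepsilon\to0^+$.

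Your block-sign gadget puts positive interactions inside the good half and negative ones inside the bad half, with zero block mean. This makes $\mathbf 1_{S^*}$ and $\mathbf 1_{S_{\text{MtS}}}$ exactly the top and bottom eigenvectors. The Cauchy--Schwarz step in the proof of Theorem~\ref{thm:spec-main} therefore becomes an equality:
\begin{itemize}
\item Within-domain, the loss is $(m-1)\beta v_{\max}-O(m\epsilon)=m\,\beta_{w,\text{op}}^{\text{cent}}v_{\max}-O(m\epsilon)$, using the true norm $\beta_{w,\text{op}}^{\text{cent}}=\beta(m-1)/m$.
\item Across domains, the loss is $m\,\delta_{\text{op}}v_{\max}-O(m\epsilon)$.
\end{itemize}
Both losses exceed the required $\tfrac m2$ floor at every small finite $\epsilon$. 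They also show that the structural terms of the upper bound are tight with constant $1$. The factor $2$ that the paper calls structurally exact is an artifact of its rank-two gadget. You also address monotonicity, which the paper never checks. Like the paper, you read ``any two-stage algorithm'' as ``any mixture followed by top-$n_c$ by $v_j$'', the only reading under which the claim can hold.

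\textbf{Slips to repair.}
\begin{itemize}
\item \emph{Cross-domain sizes are inconsistent.} Entries $\pm 2\delta v_{\max}/m$ give $\|\Phi_{\text{res}}\|_{\text{op}}=\delta v_{\max}$ only if $|A_c|=|B_c|=m/2$, i.e.\ $|C_c|=m$. With halves of size $m$ the norm doubles, and you get only $\tfrac m2\delta_{\text{op}}v_{\max}$ in the limit.
\item \emph{Cross-domain loss is undercounted.} You count only the optimum's side. $B_1\cup B_2$ itself contributes $-\tfrac m2\delta v_{\max}$, so the loss is $m\delta v_{\max}$.
\item \emph{Combined construction.} Do not place the two gadgets on disjoint samples; then the budget $m$ splits between them and you get only a convex combination of the two losses. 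Instead overlay them on the same halves: take $|C_c|=m$, within-domain $\pm a$ on $A_c$ and $B_c$, and cross $\pm b$ on $A_1\times A_2$ and $B_1\times B_2$.
  \begin{itemize}
  \item The entry supports are disjoint and both block means vanish.
  \item $(m/2,m/2)$ stays the unique optimal mixture, and monotonicity holds for $\beta+\delta\le 1$.
  \item The single-instance loss is $m(\beta_{w,\text{op}}^{\text{cent}}+\delta_{\text{op}})v_{\max}-O(m\epsilon)$.
  \end{itemize}
  The paper's ``separate instances'' argument does not supply such a single instance.
\item \emph{Large-$\beta$ fallback.} Your fallback with zero interactions inside $B$ has a nonzero within-domain mean, so recompute the centered norm. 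It drops to about $0.81\,\beta v_{\max}$, which only helps.
\end{itemize}
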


\textbf{Proof.} We exhibit per-axis tightness lower bounds matching the spectral upper of Theorem~\ref{thm:spec-main} up to factor 2 on each axis. Throughout, $\rho_w := \beta_{w,\text{op}}^{\text{cent}}\,v_{\max}/m$ and $\rho_r := \delta_{\text{op}}\,v_{\max}/m$.

\textbf{(T-W2) Within-axis, rank-2 antidiagonal, value-aware.} Let $T = 1$, $|C_1| = 2m$, partition into $U^\pm, W^\pm$ of size $m/2$. Set $\tilde u := \mathbf{1}_{U^+} - \mathbf{1}_{U^-}$, $\tilde w := \mathbf{1}_{W^+} - \mathbf{1}_{W^-}$ (orthogonal, disjoint supports), and $\Phi_w := \rho_w(\tilde u \tilde w^\top + \tilde w \tilde u^\top)$. Diagonal vanishes automatically. Place $v_j = v_{\max}$ on $U^+\cup W^-$, $v_j = v_{\max}-\varepsilon$ on $U^-\cup W^+$. MtS returns $S_{\text{MtS}} = U^+ \cup W^-$ (with $\mathbf{1}_{S_{\text{MtS}}}^\top \tilde u = +m/2$, $\mathbf{1}_{S_{\text{MtS}}}^\top \tilde w = -m/2$), while $S^* = U^+\cup W^+$ achieves $+m/2, +m/2$. Letting $\varepsilon \to 0^+$,
\[
u(S^*) - u(S_{\text{MtS}}) \;\to\; \tfrac{m}{2}\,\beta_{w,\text{op}}^{\text{cent}}\,v_{\max},
\]
giving $C_w \geq 1/2$.

\textbf{(T-2x / $T_C$) Cross-axis, rank-2 antidiagonal.} Let $T = 2$, $|C_1| = |C_2| = m$, $C_c = H_c \sqcup L_c$ at half-sizes. Set $\tilde u_c := \mathbf{1}_{H_c} - \mathbf{1}_{L_c}$ and $\Phi_{\text{res}} := \rho_r(\tilde u_1 \tilde u_2^\top + \tilde u_2 \tilde u_1^\top)$, $\Phi_w \equiv 0$. Then $\|\Phi_{\text{res}}\|_{\text{op}} = \delta_{\text{op}}\,v_{\max}$. With $v_j = v_{\max}$ on $H_1 \cup L_2$ and $v_{\max}-\varepsilon$ elsewhere, MtS returns $H_1 \cup L_2$ (sign-misaligned across domains), while $S^* = H_1 \cup H_2$ (sign-aligned). Then
\[
u(S^*) - u(S_{\text{MtS}}) \;\to\; \tfrac{m}{2}\,\delta_{\text{op}}\,v_{\max},
\]
giving $C_r \geq 1/2$.

\textbf{Joint floor (Form B; independent-axes).} Combining T-W2 (within-only, $\delta_{\text{op}} = 0$) and T-2x (cross-only, $\beta_{w,\text{op}}^{\text{cent}} = 0$) as separate instances yields
\[
u(S^*) - u(S_{\text{MtS}}) \;\geq\; \tfrac{m}{2}\,(\beta_{w,\text{op}}^{\text{cent}} + \delta_{\text{op}})\,v_{\max}.
\]
The joint floor matches the spectral upper of Theorem~\ref{thm:spec-main} up to a structurally exact factor 2; single-instance closure of the factor-2 gap (Form A) is bounded above by $C_w + C_r \leq 3/8$ and is conjecturally fundamental. \qed

\subsection{Antidiagonal-block empirical SVD}
\label{app:antidiag-empirical}

T-2x is not a pathological worst case. The empirical SVD of the centered cross-domain residual $\widehat\Phi_{\text{res}}$ on a $7798 \times 7798$ pair sub-matrix exhibits \emph{16 degenerate $\pm\sigma$ pairs in the top-32 singular values}, with $\sigma_i / \sigma_1 \in [0.77, 1.0]$. This is the spectral signature of an antidiagonal block-bipartite operator
\[
\sum_{c < c'}\rho_{cc'}(\hat u_{cc'}\hat v_{cc'}^\top + \hat v_{cc'}\hat u_{cc'}^\top),
\]
structurally identical to T-2x's rank-2 construction across domain pairs.

The bulk spectrum is dense: the empirical stable rank
$\widehat\sigma_{\text{stab}}(\widehat\Phi_{\text{res}}) := \|\widehat\Phi_{\text{res}}\|_F^2 / \|\widehat\Phi_{\text{res}}\|_{\text{op}}^2 \in [2400, 3000]$,
ruling out any low-rank ansatz at any cutoff $r \leq \sqrt{N} \approx 224$ on a 5000-pair sample. The dense-noise regime is the empirical reality, and T-2x's tightness lower bound therefore reflects the empirical structure of real-data cross-domain interactions rather than a pathological construction.

\subsection{Predictor accuracy and mixture identification}
\label{app:predictor}

The relevant error is uniform over feasible mixtures and is measured against the constrained optimum $U$ in \Cref{ass:global-identification}. A low-rank predictor or small error on observed probes alone does not establish this uniform bound. The following conditional statement makes the required error certificate explicit.

\begin{theorem}[Conditional predictor-sample bound]
\label{thm:sample-complexity}
Suppose a rank-$r$ predictor fitted using $L$ probes satisfies, with probability at least $1-\delta$,
\[
\sup_{\mathbf n\in\mathcal N_m}|\hat g(\mathbf n)-U(\mathbf n)|
\leq v_{\max}\epsilon_L,\qquad
\epsilon_L=\frac{A\sqrt{rT}+B\sqrt{\log(2/\delta)}}{\sqrt L},
\]
for certified nonnegative constants $A,B$ independent of $L$. If $U$ has a unique maximizing mixture with gap $\Delta_{\mathrm{mix}}>0$, exact mixture identification follows whenever
\[
L>\frac{4v_{\max}^2}{\Delta_{\mathrm{mix}}^2}
\bigl(A\sqrt{rT}+B\sqrt{\log(2/\delta)}\bigr)^2.
\]
\end{theorem}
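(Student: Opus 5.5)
The plan is to reduce the statement to the argmax comparison already used in the proof of \Cref{thm:spec-main}. That argument shows that any uniform error strictly below $\Delta_{\mathrm{mix}}/2$ forces Stage~1 to return $\mathbf n^*$. So it suffices to show that the stated probe count makes $v_{\max}\epsilon_L<\Delta_{\mathrm{mix}}/2$. The assumed uniform bound then plays the role of \Cref{ass:global-identification} with $\epsilon_{\mathrm{mix}}=v_{\max}\epsilon_L$ and $\eta=\delta$.

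\textbf{Step 1 (algebra on $L$).} Set $K:=A\sqrt{rT}+B\sqrt{\log(2/\delta)}\geq0$. If $K=0$, then $\epsilon_L=0<\Delta_{\mathrm{mix}}/(2v_{\max})$ for every $L\geq1$, so the claim is immediate. Otherwise the hypothesis $L>4v_{\max}^2K^2/\Delta_{\mathrm{mix}}^2$ is equivalent, after taking square roots of positive quantities, to $\sqrt L>2v_{\max}K/\Delta_{\mathrm{mix}}$. This in turn gives $v_{\max}\epsilon_L=v_{\max}K/\sqrt L<\Delta_{\mathrm{mix}}/2$. Here I use $v_{\max}>0$; the degenerate case $v_{\max}=0$ makes the error bound $0$ and is trivial.

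\textbf{Step 2 (identification).} Work on the event of probability at least $1-\delta$ on which the uniform bound holds. For every feasible $\mathbf n\neq\mathbf n^*$,
\[
\hat g(\mathbf n^*)-\hat g(\mathbf n)\geq U(\mathbf n^*)-U(\mathbf n)-2v_{\max}\epsilon_L\geq\Delta_{\mathrm{mix}}-2v_{\max}\epsilon_L>0.
\]
The first inequality applies the uniform bound at the two points, the second uses the definition of $\Delta_{\mathrm{mix}}$ as the gap to every other mixture, and the third is Step~1. Hence $\mathbf n^*$ is the unique maximizer of $\hat g$ on $\mathcal N_m$, so Stage~1 returns $\hat{\mathbf n}=\mathbf n^*$ regardless of how ties elsewhere are broken. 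If $\mathcal N_m$ is a singleton, the claim holds trivially, consistent with the convention $\Delta_{\mathrm{mix}}=\infty$.

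\textbf{Main obstacle.} There is no real mathematical obstacle, because the rate $\epsilon_L$ is assumed rather than derived. The only care needed is to keep the normalization by $v_{\max}$ consistent between the error bound, which is stated in units of $v_{\max}$, and the gap $\Delta_{\mathrm{mix}}$, which is stated in utility units. This consistency is what produces the factor $4v_{\max}^2/\Delta_{\mathrm{mix}}^2$ in the threshold. Finally, I would note that the conclusion holds with probability at least $1-\delta$, and that the strict inequality on $L$ is what yields strict separation and hence uniqueness of the argmax.
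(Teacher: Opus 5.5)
Your proposal is correct and follows the paper's proof: the paper also verifies that the threshold on $L$ forces $v_{\max}\epsilon_L<\Delta_{\mathrm{mix}}/2$ and then applies the argmax comparison from the proof of Theorem~\ref{thm:spec-main}. Your explicit treatment of the degenerate cases ($K=0$, $v_{\max}=0$, singleton $\mathcal N_m$) is a harmless addition that the paper leaves implicit.
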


\begin{proof}
The displayed threshold ensures $v_{\max}\epsilon_L<\Delta_{\mathrm{mix}}/2$. Apply the argmax comparison in the proof of \Cref{thm:spec-main}.
\end{proof}

\begin{corollary}[Predictor accuracy and sample budget]
\label{cor:margin-empirical}
Under the uniform-error hypothesis of \Cref{thm:sample-complexity}, $L=O((rT+\log(1/\delta))/\epsilon^2)$ probes suffice for error at most $\epsilon v_{\max}$, with constants depending on $A,B$. Exact recovery additionally requires $\epsilon v_{\max}<\Delta_{\mathrm{mix}}/2$. On that event, $\zeta_\Delta^{\mathrm{loc}}=0$; no fixed threshold such as $L\geq16rT$ establishes this by itself.
\end{corollary}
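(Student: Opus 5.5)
The statement to prove is Corollary~\ref{cor:margin-empirical}. The plan is to read it off the conditional bound in \Cref{thm:sample-complexity} by inverting the error formula $\epsilon_L$. The corollary has three parts: a probe count, an exact-recovery condition, and $\zeta_\Delta^{\mathrm{loc}}=0$. I will handle them in that order.

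\textbf{Probe count.} Fix a target accuracy $\epsilon>0$ and confidence $\delta$. Under the uniform-error hypothesis, with probability at least $1-\delta$,
\[
\sup_{\mathbf n\in\mathcal N_m}|\hat g(\mathbf n)-U(\mathbf n)|\le v_{\max}\epsilon_L .
\]
So it suffices to force $\epsilon_L\le\epsilon$, which is equivalent to
\[
L\ge\bigl(A\sqrt{rT}+B\sqrt{\log(2/\delta)}\bigr)^2/\epsilon^2 .
\]
Using $(a+b)^2\le 2a^2+2b^2$, this is implied by
\[
L\ge 2\bigl(A^2 rT+B^2\log(2/\delta)\bigr)/\epsilon^2 .
\]
Since $\log(2/\delta)\le \log 2+\log(1/\delta)$, the right-hand side is $O\bigl((rT+\log(1/\delta))/\epsilon^2\bigr)$. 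The hidden constants depend only on $A$ and $B$. These constants are independent of $L$ by hypothesis and do not involve $N$. This gives the first claim.

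\textbf{Exact recovery.} On the event $\epsilon_L\le\epsilon$, suppose in addition that $\epsilon v_{\max}<\Delta_{\mathrm{mix}}/2$. Then the uniform-error condition of \Cref{ass:global-identification} holds with $\epsilon_{\mathrm{mix}}=\epsilon v_{\max}$. The argmax comparison from the proof of \Cref{thm:spec-main} then applies: for every feasible $\mathbf n\ne\mathbf n^*$,
\[
\hat g(\mathbf n^*)-\hat g(\mathbf n)\ge\Delta_{\mathrm{mix}}-2\epsilon_{\mathrm{mix}}>0,
\]
so Stage~1 returns $\hat{\mathbf n}=\mathbf n^*$. By the definition of $\mathcal R$ (\Cref{def:R-of-m}), exact recovery means the displacement allowance $\zeta_\Delta^{\mathrm{loc}}$ can be taken to be zero.

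\textbf{Why no fixed threshold suffices.} For the final negative remark, note that the required $L$ scales as $v_{\max}^2/\Delta_{\mathrm{mix}}^2$ times $\bigl(A\sqrt{rT}+B\sqrt{\log(2/\delta)}\bigr)^2$. Because $\Delta_{\mathrm{mix}}$ can be arbitrarily small relative to $v_{\max}$, no bound of the form $L\ge c\,rT$ with $c$ independent of $\Delta_{\mathrm{mix}}$ can guarantee identification. To make this concrete, I would exhibit a pair of mixtures whose $U$-values differ by less than $2v_{\max}\epsilon_L$ at $L=16rT$.

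\textbf{Main obstacle.} There is no real technical obstacle. The one point needing care is bookkeeping: $\delta$ in the corollary is the same failure probability as in \Cref{thm:sample-complexity}, and it plays the role of $\eta$ in \Cref{ass:global-identification}. All three claims hold on a single event of probability at least $1-\delta$.
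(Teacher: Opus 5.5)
Your proposal is correct and matches the paper's implicit argument: invert $\epsilon_L\le\epsilon$ from Theorem~\ref{thm:sample-complexity} to get the probe count, then reuse the argmax comparison from the proof of Theorem~\ref{thm:spec-main} and read $\zeta_\Delta^{\mathrm{loc}}=0$ off Definition~\ref{def:R-of-m}. One small tightening for the closing remark: a $U$-gap below $2v_{\max}\epsilon_L$ only shows that the sufficient condition fails, so to show misidentification is actually possible at $L=16rT$ you should also take an admissible predictor with error $-v_{\max}\epsilon_L$ at $\mathbf n^*$ and $+v_{\max}\epsilon_L$ at the competing mixture, which flips the argmax provided $(A,B)\neq(0,0)$ (so that $\epsilon_L>0$ and such a gap can be chosen).
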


\subsection{Numerical instantiation}
\label{app:numerics}

We report calibrated values of the spectral parameters and compare the spectral form of Eq.~(\ref{eq:spec-main}) against its entrywise form at our headline scale $m = 1000$. Calibration uses the 5000-pair ground-truth interaction sub-matrix; parameters: $T = 8$, $n_c^* = 125$, $v_{\max} = 0.02$, $\hat\beta_{w,\text{op}}^{(c),\text{cent}} \in [0.352, 0.761]$ (sum $\sum_c = 4.076$ over 8 domains), $\hat\delta_{\text{op}} = 0.7289$.

\begin{table}[h]

\centering
\small
\caption{Spectral versus entrywise additive bounds for the empirical interaction matrix $\hat\Phi$ at $m{=}1000$. $\mathcal{R}(m)$ is the finite-sample error in Eq.~\eqref{eq:spec-main}; dashes indicate terms with no entrywise counterpart.}
\label{tab:spectral-tightening}
\resizebox{\linewidth}{!}{%
\begin{tabular}{lccc}
\toprule
\textbf{Form / quantity} & \textbf{Form} & \textbf{Raw value} & \textbf{Ratio vs.\ entrywise} \\
\midrule
Entrywise (centered) & $(\beta_w + \delta_p)\,m(m-1)\,v_{\max}$ & $\approx 1255\,v_{\max}$ & $1\times$ (baseline) \\
\textbf{Spec-Main} & $\sum_c \beta_{w,\text{op}}^{(c),\text{cent}} n_c^* v_{\max} + \delta_{\text{op}} m v_{\max}$ & $\approx 25\,v_{\max}$ & $\mathbf{51\times}$ \\
$\mathcal{R}(m)$ component breakdown & $\nu_{\text{op}}^\perp m v_{\max} + \zeta_\Delta^{\text{loc}} + 2\zeta_{\text{pred}} v_{\max}$ & $\approx 37\,v_{\max}$ & N/A \\
$S_{\text{HMtS}}$ vs.\ $S_{\text{MtS}}$ on benchmarks & $\arg\max\{u(S_{\text{MtS}}), u(S_{\text{IA}})\}$ & $S_{\text{HMtS}} = S_{\text{MtS}}$ (all benchmarks) & N/A \\
\bottomrule
\end{tabular}%
}

\end{table}

The displayed $\mathcal{R}(m)\approx37\,v_{\max}$ is a conditional numerical instantiation with zero composition displacement. Its residual-concentration component is approximately $36.78\,v_{\max}$ and its predictor-error component is approximately $1.3\times10^{-5}$. Zero displacement requires the global identification event in \Cref{ass:global-identification}; the calibration count $L=5000$ does not establish that event.

The $S_{\text{HMtS}}$ comparison was computed on the same 5000-pair calibration set: on every benchmark, the Mixture-then-Select output matches or exceeds the IA-Greedy output, so the hybrid output coincides with $S_{\text{MtS}}$.

\subsection{IA-Greedy: the \texorpdfstring{$T{=}1$}{T=1} single-domain baseline and variants}
\label{app:ia-variants}

{
\textbf{IA-Greedy as an adaptive single-domain alternative.}
When $T=1$, Algorithm~\ref{alg:mg} retains its fixed singleton ranking. IA-Greedy instead recomputes pairwise marginal gains after each selection, as defined in Appendix~\ref{app:t1-reduction}. The variants below concern this adaptive alternative, not the deployed Stage~2 rule. Their guarantees require the assumptions of the corresponding adaptive analysis.
\par}

\textbf{Hybrid output near the \texorpdfstring{$T{=}1$}{T=1} boundary.}
Near the $T{=}1$ boundary one may report $S_{\text{HMtS}} = \arg\max\{u(S_{\text{MtS}}), u(S_{\text{IA}})\}$ to inherit the better of the two guarantees; Theorem~\ref{thm:spec-main} carries over to $S_{\text{HMtS}}$ since $u(S_{\text{HMtS}}) \geq u(S_{\text{MtS}})$. On every benchmark in our calibration set $S_{\text{HMtS}} = S_{\text{MtS}}$ (Appendix~\ref{app:numerics}).

\textbf{Variants.}
Exact IA-Greedy evaluates all $N$ candidates at each of $m$ steps, costing $O(N^2 m)$.

\textbf{Lazy IA-Greedy.}
\emph{Lazy IA-Greedy} maintains a max-heap of cached marginal gains: at each step, it pops the top candidate and recomputes its gain only if the cache is stale. When gains are mostly decreasing (as in the substitution-dominated regime), most candidates are not re-evaluated, reducing the average cost to $O(Nk)$. In our experiments, lazy IA-Greedy produces \emph{identical} selections to exact IA-Greedy at $m \leq 10$ and within 1.3\% on $\hat{g}$ at $m{=}20$, with 6 to 8$\times$ fewer evaluations.

\textbf{Stochastic IA-Greedy.}
For extreme scale ($N > 10$K), \emph{Stochastic IA-Greedy}~\citep{mirzasoleiman2015lazier} samples $O((N/m)\log(1/\varepsilon))$ random candidates per step, achieving a $(1{-}e^{-\gamma_{\text{op}}}{-}\varepsilon)$-approximation in $O(Nk\log(1/\varepsilon))$ total evaluations.

\textbf{IA-Enum(1)+Greedy.}
\emph{IA-Enum(1)+Greedy} tries each singleton as the starting element and returns the best, achieving the tight $(1 - e^{-\gamma_{\text{op}}})$ in $O(N^3)$.

\textbf{IA-Greedy+LocalSearch.}
\emph{IA-Greedy+LocalSearch} augments with $(1,1)$-swaps, achieving $\max(1 - e^{-\gamma_{\text{op}}}, \gamma_{\text{op}}/2)$ in $O(N^3)$.

\begin{table}[h]

\centering
\small
\caption{IA-Greedy variants on the 40-sample pilot. Columns report evaluations per step, loss in $\tilde u$ relative to exact IA-Greedy, and the approximation guarantee.}
\label{tab:speedup}
\begin{tabular}{lccc}
\toprule
\textbf{Variant} & \textbf{Evaluations} & $\hat{g}$ \textbf{loss vs.\ exact} & \textbf{Guarantee} \\
\midrule
Exact IA-Greedy & $O(N^2 m)$ & 0\% & $(1{-}e^{-\gamma_{\text{op}}})$ \\
\textbf{Lazy IA-Greedy} & $\boldsymbol{O(Nk)}$ \textbf{avg} & \textbf{0 to 1.3\%} & empirical \\
Stochastic IA-Greedy & $O(Nk\log N)$ & 2 to 5\% & $(1{-}e^{-\gamma_{\text{op}}}{-}\varepsilon)$ \\
\bottomrule
\end{tabular}

\end{table}

The $(1-e^{-\gamma_{\text{op}}})$ guarantee for IA-Enum(1)+Greedy follows from Theorem~\ref{thm:t1-reduction} composed with the Bian-Buhmann-Tschiatschek-Krause~\citep{bian2017guarantees} weak-submodularity guarantee, extended to the knapsack setting via \citet{sviridenko2004note}.

\subsection{Computational hardness}
\label{app:lower_bound}

{Computational hardness already arises in a restricted form of combinatorial data selection (CDS). A reduction from MAX-CUT makes CDS APX-hard even with purely substitutive interactions, so CDS admits no polynomial-time approximation scheme unless $\mathrm{P}=\mathrm{NP}$~\citep{hastad2001optimal}.}

{\begin{theorem}[Computational Hardness]
\label{thm:lower_bound}
CDS is APX-hard, even when all pairwise interactions are substitutive. Consequently, CDS admits no polynomial-time approximation scheme unless $\mathrm{P}=\mathrm{NP}$.
\end{theorem}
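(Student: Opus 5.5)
We prove \Cref{thm:lower_bound} with an approximation-preserving (L-)reduction from MAX-CUT. We reuse the encoding from part~(b) of \Cref{thm:hardness}. Given a weighted graph $G=(V,E,w)$ with nonnegative weights, take one sample per vertex with unit cost and a single domain ($T=1$). Set
\[
v_j=\deg_w(j), \qquad \varphi_{ij}=-2w_{ij}\le 0, \qquad \varepsilon\equiv 0,
\]
so $u=\tilde u$. The computation in \Cref{thm:hardness}(b) gives $u(S)=\mathrm{cut}(S,V\setminus S)$ exactly, so all interactions are substitutive, as the statement requires.

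We handle the budget by letting $B=|V|$. The constraint is then vacuous and CDS coincides with unconstrained MAX-CUT. The map $G\mapsto$ instance is polynomial. The inverse map sends a subset $S$ to the cut $(S,V\setminus S)$ with identical objective value. The reduction is therefore an L-reduction with both constants equal to~$1$: optimal values coincide, and any $\alpha$-approximate CDS solution is an $\alpha$-approximate cut.

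If one insists on a cardinality-constrained version, we reduce instead from MAX-BISECTION, which is also APX-hard, using $B=|V|/2$ with unit costs. The identity $u(S)=\mathrm{cut}(S,V\setminus S)$ is unchanged. We will state the unconstrained version as the primary argument, since the hardness theorem above already treats unconstrained CDS.

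The APX-hardness of MAX-CUT is known: \citet{hastad2001optimal} shows it is NP-hard to approximate within $16/17+\epsilon$. Because L-reductions preserve APX-hardness, CDS is APX-hard. Since an APX-hard problem has a PTAS only if $\mathrm{P}=\mathrm{NP}$, the second sentence follows immediately.

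The main obstacle is bookkeeping against the paper's modeling assumptions rather than the reduction itself. Monotonicity (\Cref{ass:monotone}) fails for the cut function. That is acceptable here, because the theorem asserts hardness of the CDS optimization problem, not hardness under the assumptions of \Cref{thm:spec-main}.

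We should also check that $u(\emptyset)=0$ and $u\ge 0$, both of which hold for cut values. Finally, the Möbius coefficients of order $\ge 3$ must vanish, which holds because the function is exactly quadratic. With these checks, the reduction is fully compatible with the definition of CDS in \Cref{sec:problem-setup}.
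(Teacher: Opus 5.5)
Your proposal is correct and follows the paper's own route. You reuse the substitution-only MAX-CUT encoding of Theorem~\ref{thm:hardness}(b), with $v_j=\deg_w(j)$ and $\varphi_{ij}=-2w_{ij}$ so that $u(S)=\mathrm{cut}(S,V\setminus S)$, and transfer the APX-hardness of MAX-CUT from \citet{hastad2001optimal}; your explicit value-preserving L-reduction with vacuous budget $B=|V|$ just spells out what the paper compresses into one line.

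One minor precision on your aside: under the paper's inequality budget $|S|\le B$, taking $B=|V|/2$ still gives exactly MAX-CUT, because the cut is symmetric and one side always has at most $|V|/2$ vertices. So MAX-BISECTION is only needed for an exact-cardinality variant $|S|=m$.
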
}

{\textbf{Proof.} By Theorem~\ref{thm:hardness}b, unconstrained CDS with substitution-only interactions encodes MAX-CUT. Since MAX-CUT is APX-hard~\citep{hastad2001optimal}, CDS is APX-hard. \qed}

\subsection{Alternative aggregation functions}
\label{app:aggregators}

The main text introduces the weighted average $f = \sum_t w_t u_t$ as the natural starting point for the cross-task aggregator. Two other families are common in practice and fit our framework: the minimum and the concave aggregators. We document their use cases, mathematical properties, and effect on the bounded-complementarity assumption.

\textbf{Minimum aggregation.}
The minimum $f = \min_t u_t$ encodes a "no task should fall behind" objective: the aggregate equals the lowest-performing task, and the LLM developer is rewarded only for raising the floor. This is appropriate for safety-critical deployments (where a single weak capability negates the value of strong performance elsewhere) or coverage-constrained settings (where every task must clear a minimum threshold). Min aggregation can be written as the limit of a Hölder mean as the order $p \to -\infty$, and is the boundary case of the concave family below.

\textbf{Concave aggregation.}
Strictly concave $f$ (e.g., the geometric mean $f = \prod_t u_t^{w_t}$ or the harmonic mean $f = T / \sum_t 1/u_t$) interpolates between the linear and minimum forms. Concavity penalizes uneven cross-task performance: a fixed gain on a low-performing task contributes more to $f$ than the same gain on an already-strong task. These aggregators are standard in fairness-aware optimization (where they enforce proportional or harmonic equity across users) and in multi-objective optimization (Nash bargaining, proportional fairness).

\textbf{Effect on the bounded-complementarity assumption.}
All three families fit Assumption~\ref{ass:bc} (BC($\beta$)), but the value of $\beta$ depends on $f$:
\begin{itemize}
    \item Under \textbf{linear} $f$, $\beta = 0$ exactly when each per-task utility $u_t$ is submodular (Proposition~\ref{prop:complementarity}(a)). Aggregate complementarity, when it appears under linear $f$, is intrinsic to the model-data dynamics, not the aggregator.
    \item Under \textbf{concave} (or \textbf{min}) $f$, $\beta > 0$ even when every $u_t$ is submodular (Proposition~\ref{prop:complementarity}(b)). Concavity itself induces complementarity by rewarding cross-task balance: pairs that cover different tasks become substitutes per-task but complements in aggregate.
\end{itemize}
Theorem~\ref{thm:spec-main} therefore applies to all three aggregator families, with the constants $\beta_{w,\text{op}}^{(c),\text{cent}}$ and $\delta_{\text{op}}$ recalibrated to the chosen $f$. Our experiments use linear $f = \frac{1}{T} \sum_t u_t$ to shut off the aggregation-induced source of complementarity, isolating intrinsic per-task non-submodularity as the operative phenomenon (Proposition~\ref{prop:complementarity} and Section~\ref{sec:experiments}).

\subsection{Proof of Proposition~\ref{prop:complementarity} (two sources of complementarity)}
\label{app:complementarity}

\textbf{Part (a).} If $f = \sum_t w_t \cdot u_t$ with $w_t \geq 0$, then $u(S) = \sum_t w_t u_t(S)$ is a non-negative combination of submodular functions (each $u_t$ is submodular by assumption). Non-negative combinations of submodular functions are submodular~\citep{nemhauser1978analysis}. Hence $\beta = 0$. \qed

\textbf{Part (b).} Consider $f = \min_t u_t$ and two samples $i, j$ where $i$ is strong on task $t$ but weak on $t'$, and $j$ is strong on $t'$ but weak on $t$:
$v_i^{(t)} \gg v_i^{(t')}$ and $v_j^{(t')} \gg v_j^{(t)}$.
Then $u(\{i\}) = \min_t u_t(\{i\}) = v_i^{(t')}$ (small, bottlenecked by weak task).
Similarly $u(\{j\}) = v_j^{(t)}$ (small).
For the pair: $u_t(\{i,j\}) = v_i^{(t)} + v_j^{(t)} + \varphi_{ij}^{(t)}$ for each $t$.
The bottleneck task now receives contributions from both samples, so $u(\{i,j\}) = \min_t u_t(\{i,j\})$ is substantially larger than $u(\{i\}) + u(\{j\})$ when the coverage gain exceeds the pairwise substitution loss.
Formally: $\varphi_{ij} = u(\{i,j\}) - u(\{i\}) - u(\{j\}) > 0$ when $\min_t(v_i^{(t)} + v_j^{(t)} + \varphi_{ij}^{(t)}) > v_i^{(t')} + v_j^{(t)}$, which holds whenever the coverage effect dominates per-task substitution. \qed

\subsection{Proof of Theorem~\ref{thm:robust-op} (operator-norm robustness)}
\label{app:robust-op}

We prove the operator-norm analogue: if $\|\hat\Phi - \Phi\|_{\text{op}} \leq \epsilon_{\text{op}} v_{\max}$, then Theorem~\ref{thm:spec-main} holds for $\hat\Phi$ with $\beta_{w,\text{op}}^{(c),\text{cent}}, \delta_{\text{op}}$ replaced by $\beta_{w,\text{op}}^{(c),\text{cent}} + \epsilon_{\text{op}}, \delta_{\text{op}} + \epsilon_{\text{op}}$, picking up an additive $2\epsilon_{\text{op}} m v_{\max}$ gap.

\textbf{Step 1 (per-domain restriction is op-norm-non-increasing).} For each domain $c$, the restriction $\Phi \mapsto \Phi_{w,c}^{\text{cent}}$ corresponds to a sub-matrix extraction followed by mean-centering. Sub-matrix extraction is op-norm-non-increasing (Cauchy interlacing); centering subtracts a rank-one block-mean component, which can only decrease the op-norm by at most $\bar\varphi_{c,c}\,(|C_c|-1)$, controlled by the same triangle inequality argument as Step~2 below. Therefore $\|\hat\Phi_{w,c}^{\text{cent}} - \Phi_{w,c}^{\text{cent}}\|_{\text{op}} \leq \|\hat\Phi - \Phi\|_{\text{op}} \leq \epsilon_{\text{op}} v_{\max}$, and (B1''-cent) holds for $\hat\Phi$ with $\beta_{w,\text{op}}^{(c),\text{cent}}$ replaced by $\beta_{w,\text{op}}^{(c),\text{cent}} + \epsilon_{\text{op}}$.

\textbf{Step 2 (cross-residual stability).} By the triangle inequality,
\[
\|\hat\Phi_{\text{res}} - \Phi_{\text{res}}\|_{\text{op}} \;\leq\; \|\hat\Phi - \Phi\|_{\text{op}} + \|\hat\Phi_{\text{block}} - \Phi_{\text{block}}\|_{\text{op}}.
\]
The block-constant projection is also op-norm-non-increasing (it is the orthogonal projection onto the block-constant subspace), so $\|\hat\Phi_{\text{block}} - \Phi_{\text{block}}\|_{\text{op}} \leq \|\hat\Phi - \Phi\|_{\text{op}} \leq \epsilon_{\text{op}} v_{\max}$. Therefore $\|\hat\Phi_{\text{res}} - \Phi_{\text{res}}\|_{\text{op}} \leq 2\epsilon_{\text{op}} v_{\max}$, but the additive gap tracked in the theorem statement is the cleaner $\epsilon_{\text{op}} v_{\max}$ bound on $\|\Delta_{\text{res}}\|_{\text{op}}$ achieved by directly applying Lemma~\ref{lem:robust-spectral-eta}.

\textbf{Step 3 (substitution into Theorem~\ref{thm:spec-main}).} The within-domain gap of Eq.~(\ref{eq:spec-main}) becomes $\sum_c (\beta_{w,\text{op}}^{(c),\text{cent}} + \epsilon_{\text{op}})\,n_c^*\,v_{\max} = \sum_c \beta_{w,\text{op}}^{(c),\text{cent}}\,n_c^*\,v_{\max} + \epsilon_{\text{op}}\,m\,v_{\max}$ (since $\sum_c n_c^* = m$). The cross-domain gap becomes $(\delta_{\text{op}} + \epsilon_{\text{op}})\,m\,v_{\max}$. Total additive gap picked up: $2\epsilon_{\text{op}}\,m\,v_{\max}$. \qed

\begin{lemma}[Robustness to plug-in fluctuation; perp form]
\label{lem:robust-spectral-eta}
Decompose $\Delta := \hat\Phi - \Phi = \Delta_{\text{block}} + \Delta_{\text{res}}$, with $\Delta_{\text{block}}$ the cross-domain block-constant component and $\Delta_{\text{res}}$ its perp-orthogonal residual; let $\nu_{\text{op}}^\perp := \|\Delta_{\text{res}}\|_{\text{op}}/v_{\max}$. Suppose $\|\Delta_{\text{res}}\|_{\text{op}} \leq \nu_{\text{op}}^\perp\,v_{\max}$, and let $S_a, S_b \subseteq \mathcal D$ satisfy $|S_a| = |S_b| = m$ and $\mathbf n(S_a) = \mathbf n(S_b)$. Then
\[
   \Big|\tfrac12 \mathbf 1_{S_a}^{\!\top}\Delta\,\mathbf 1_{S_a}
        - \tfrac12 \mathbf 1_{S_b}^{\!\top}\Delta\,\mathbf 1_{S_b}\Big|
   \;\le\;
   \nu_{\text{op}}^{\perp}\,m\,v_{\max}.
\]
\end{lemma}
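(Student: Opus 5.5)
The plan is to split the difference of the two quadratic forms along $\Delta=\Delta_{\text{block}}+\Delta_{\text{res}}$, use \Cref{lem:bc-cancellation} to cancel the block-constant part exactly, and bound the residual part by the operator norm applied to indicator vectors.

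\textbf{Step 1: exact cancellation of the block part.} By bilinearity,
\[
\tfrac12 \mathbf 1_{S_a}^{\top}\Delta\mathbf 1_{S_a}-\tfrac12\mathbf 1_{S_b}^{\top}\Delta\mathbf 1_{S_b}
=\tfrac12\bigl(\mathbf 1_{S_a}^{\top}\Delta_{\text{block}}\mathbf 1_{S_a}-\mathbf 1_{S_b}^{\top}\Delta_{\text{block}}\mathbf 1_{S_b}\bigr)
+\tfrac12\bigl(\mathbf 1_{S_a}^{\top}\Delta_{\text{res}}\mathbf 1_{S_a}-\mathbf 1_{S_b}^{\top}\Delta_{\text{res}}\mathbf 1_{S_b}\bigr).
\]
The component $\Delta_{\text{block}}$ is block-constant on cross-domain entries and zero on within-domain blocks. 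This holds because both $\hat\Phi_{\text{block}}$ and $\Phi_{\text{block}}$ are defined that way in \eqref{eq:decomp}, so their difference is as well. Since $\mathbf n(S_a)=\mathbf n(S_b)$, \Cref{lem:bc-cancellation} applies directly and the first bracket vanishes.

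\textbf{Step 2: bounding the residual part.} For any symmetric $M$ and any $S$, Cauchy--Schwarz gives
\[
|\mathbf 1_S^{\top}M\mathbf 1_S|\le\|M\|_{\text{op}}\|\mathbf 1_S\|_2^2=\|M\|_{\text{op}}|S|.
\]
The triangle inequality, together with $|S_a|=|S_b|=m$, then bounds the second bracket by
\[
\tfrac12\bigl(\|\Delta_{\text{res}}\|_{\text{op}}\,m+\|\Delta_{\text{res}}\|_{\text{op}}\,m\bigr)=\|\Delta_{\text{res}}\|_{\text{op}}\,m\le\nu_{\text{op}}^{\perp}\,m\,v_{\max}.
\]
This completes the proof.

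\textbf{Main obstacle.} The one subtle point is the definition of $\Delta_{\text{block}}$. If the within-domain part $\Delta_w$ is grouped into $\Delta_{\text{res}}$, the argument above goes through as written. If instead ``perp-orthogonal residual'' means the orthogonal complement of the cross-domain block-constant subspace, then $\Delta_{\text{res}}$ still contains the within-domain entries, and the operator-norm bound still covers them because the hypothesis is stated for all of $\Delta_{\text{res}}$. Either way, all that Step 1 needs is that the block piece depends on $S$ only through $\mathbf n(S)$, and I would make that explicit.
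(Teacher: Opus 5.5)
Your proof is correct and takes essentially the paper's approach: $\Delta_{\text{block}}$ cancels exactly by Lemma~\ref{lem:bc-cancellation}, and the residual is bounded through the operator norm with the same constant $m\,\|\Delta_{\text{res}}\|_{\text{op}}$. The only difference is in the residual step. The paper writes the difference via polarization as $\tfrac12(\mathbf 1_{S_a}-\mathbf 1_{S_b})^{\top}\Delta_{\text{res}}(\mathbf 1_{S_a}+\mathbf 1_{S_b})$ (which uses symmetry of $\Delta_{\text{res}}$), while you bound the two quadratic forms separately. Your version is simpler and avoids the paper's intermediate claim $\|\mathbf 1_{S_a}+\mathbf 1_{S_b}\|_2\le\sqrt{2m}$, which fails when the sets overlap; the product of the two norms, $2\sqrt{m^2-|S_a\cap S_b|^2}$, is nonetheless at most $2m$, so the paper's conclusion still holds.
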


\begin{proof}
By Lemma~\ref{lem:bc-cancellation}, the block-constant component $\Delta_{\text{block}}$ contributes equally on both sides and cancels. The residual contribution is bounded by polarization,
\[
   \tfrac12\bigl|(\mathbf 1_{S_a}-\mathbf 1_{S_b})^{\!\top}\Delta_{\text{res}}(\mathbf 1_{S_a}+\mathbf 1_{S_b})\bigr|
   \le \tfrac12\|\Delta_{\text{res}}\|_{\text{op}}\,\|\mathbf 1_{S_a}-\mathbf 1_{S_b}\|_2\,\|\mathbf 1_{S_a}+\mathbf 1_{S_b}\|_2,
\]
and $\|\mathbf 1_{S_a}\pm\mathbf 1_{S_b}\|_2 \leq \sqrt{2m}$, so the product of norms is at most $2m$.
\end{proof}

\begin{lemma}[Sub-matrix concentration; Lemma M']
\label{lem:sub-matrix-conc}
Suppose $\hat\Phi_{\text{res}}$ is estimated from $n$ per-pair samples with sub-Gaussian noise. Then with probability at least $1-\eta_1$,
\[
\|\hat\Phi_{\text{res}} - \Phi_{\text{res}}\|_{\text{op}} \;\leq\; C_1\,v_{\max}\sqrt{\tfrac{\log(1/\eta_1)}{n}},\qquad C_1 \leq 2\text{ (dense-noise regime)},
\]
by Bandeira-van Handel sub-matrix operator-norm concentration~\citep{vershynin2018high}.
\end{lemma}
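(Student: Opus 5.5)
The plan is to reduce the statement to a standard operator-norm bound for a symmetric random matrix with independent sub-Gaussian entries, and then account separately for the linear map that turns $\hat\Phi$ into $\hat\Phi_{\text{res}}$.

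\textbf{Step 1: model the error matrix.} For each pair $\{i,j\}$, I model $\hat\varphi_{ij}$ as the average of $n$ independent measurements of $\varphi_{ij}$ with centered sub-Gaussian noise of variance proxy at most $v_{\max}^2$. Then $E:=\hat\Phi-\Phi$ is symmetric with zero diagonal. Its upper-triangular entries are independent and centered, each sub-Gaussian with parameter $v_{\max}/\sqrt n$.

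\textbf{Step 2: pass to the residual.} The map $\Phi\mapsto\Phi_{\text{res}}$ is linear. It zeroes the within-domain blocks and subtracts each cross-domain block mean. Hence $\hat\Phi_{\text{res}}-\Phi_{\text{res}} = E_{\text{cross}} - \bar E_{\text{block}}$, where $E_{\text{cross}}$ is $E$ with its diagonal blocks removed and $\bar E_{\text{block}}$ is the block-constant matrix of cross-domain block averages of $E$.
\begin{itemize}
\item $E_{\text{cross}}$ still has independent sub-Gaussian entries, so it falls under the same matrix concentration bound as $E$.
\item Each entry of $\bar E_{\text{block}}$ is an average of $|C_c||C_{c'}|$ independent sub-Gaussians, so it is far smaller. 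Since $\bar E_{\text{block}}$ has rank at most $T$, its operator norm is controlled by a union bound over the $\binom T2$ blocks.
\end{itemize}
The triangle inequality then shows that $\bar E_{\text{block}}$ costs at most a constant factor. That factor is the origin of the claimed ``$C_1\le 2$''.

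\textbf{Step 3: apply Bandeira--van Handel.} I would invoke the Bandeira--van Handel bound
\[
\mathbb E\|X\|_{\text{op}}\lesssim \sigma+\sigma_*\sqrt{\log k}.
\]
Here $k$ is the dimension, $\sigma=\max_i(\sum_j \mathrm{Var}X_{ij})^{1/2}$ is the largest row standard deviation, and $\sigma_*=\max_{ij}\|X_{ij}\|_\infty$ is the largest entry size, which in the sub-Gaussian case is replaced by a truncation-based proxy. I would then add the Gaussian/Talagrand deviation term $\sigma_*\sqrt{2\log(1/\eta_1)}$. Together these give, with probability at least $1-\eta_1$, a bound of the form $v_{\max}\sqrt{\log(1/\eta_1)/n}$ times dimension-dependent factors.

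\textbf{Main obstacle: the dimension factors.} With per-entry parameter $v_{\max}/\sqrt n$, the row term is $\sigma\approx\sqrt{k}\,v_{\max}/\sqrt n$. This grows with $k$, whereas the lemma displays no dimension dependence. To recover the stated form I would commit to two choices.
\begin{itemize}
\item Apply the bound only to the sub-matrix indexed by the realized subsets. Since Lemma~\ref{lem:robust-spectral-eta} only tests $\Delta_{\text{res}}$ against vectors supported on $S_a\cup S_b$, this restricts to dimension $k\le 2m$ rather than $N$.
\item Read the ``dense-noise regime'' as the normalization in which the per-pair noise proxy already scales like $v_{\max}/\sqrt{k}$, as in the calibration, where the stable rank of $\widehat\Phi_{\text{res}}$ is comparable to the sub-matrix size. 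Under this normalization the factors $\sqrt k$ and $\sqrt{\log k}$ are absorbed into $C_1$.
\end{itemize}
If that normalization cannot be justified, I would state the bound honestly as
\[
C_1 v_{\max}\Bigl(\sqrt{k}+\sqrt{\log k}+\sqrt{\log(1/\eta_1)}\Bigr)\Big/\sqrt n ,
\]
which still suffices to define $\nu^\perp_{\text{op}}$ in Definition~\ref{def:R-of-m}.
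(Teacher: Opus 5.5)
You have not proved the statement, but your diagnosis of the obstacle is correct. The dimension-free form with an absolute $C_1\le 2$ does not follow from Bandeira--van Handel, and the paper's only justification is that one-line citation, which does nothing to remove the dimension factor.

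In fact the inequality is false under the natural reading of its hypotheses. Take Gaussian per-pair noise of variance $v_{\max}^2$, the scale you adopt in Step~1. For any $i\in C_c$,
\[
\|\hat\Phi_{\text{res}}-\Phi_{\text{res}}\|_{\text{op}}\;\ge\;\|(\hat\Phi_{\text{res}}-\Phi_{\text{res}})e_i\|_2\;\approx\;\sqrt{N-|C_c|}\,v_{\max}/\sqrt n
\]
with high probability, because the subtracted block means are negligible. This exceeds $2v_{\max}\sqrt{\log(1/\eta_1)/n}$ once $N$ is large relative to $\log(1/\eta_1)$; at $N=50K$ the ratio is about $100/\sqrt{\log(1/\eta_1)}$. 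So no argument proves the statement as written.

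Your two rescue moves do not close the gap either.
\begin{itemize}
\item Restricting to the principal submatrix on $S_a\cup S_b$ is legitimate for the way Lemma~\ref{lem:robust-spectral-eta} uses the bound. But that submatrix generically still has norm of order $\sqrt m\,v_{\max}/\sqrt n$. If $S_{\text{MtS}}$ depends on the same measurements, a union bound over supports adds a further $\sqrt{m\log(N/m)}$ factor. Either way, $\nu^\perp_{\text{op}}\,m\,v_{\max}$ becomes an $m^{3/2}$ term.
\item Reading ``dense-noise regime'' as per-entry noise of size $v_{\max}/\sqrt k$ is a new hypothesis, not a derivation. It contradicts your own Step~1 scale. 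The stable-rank calibration you cite describes $\widehat\Phi_{\text{res}}$, not the noise level.
\end{itemize}

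What your argument does establish is your fallback, and you should state it as the corrected lemma rather than as a contingency:
\[
\|\hat\Phi_{\text{res}}-\Phi_{\text{res}}\|_{\text{op}}\;\le\;C\,v_{\max}\bigl(\sqrt k+\sqrt{\log(4/\eta_1)}\bigr)/\sqrt n .
\]
For this you need neither Bandeira--van Handel nor a truncation proxy for $\sigma_*$, since the entry variances are homogeneous. The standard $\varepsilon$-net bound for symmetric matrices with independent sub-Gaussian entries gives it directly, deviation term included. Your factor $2$ is deterministic: $\bar E_{\text{block}}=QE_{\text{cross}}Q$ with $Q$ the orthogonal projector onto domain-constant vectors, so $\|\bar E_{\text{block}}\|_{\text{op}}\le\|E_{\text{cross}}\|_{\text{op}}$ and no union bound over blocks is needed. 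Be explicit that this corrected form makes $\mathcal R(m)$ grow like $\sqrt N$ unless $n$ grows with $k$.

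If the goal is the downstream use, drop the operator norm entirely. Lemma~\ref{lem:robust-spectral-eta} only needs
\[
\tfrac12\bigl(\mathbf 1_{S_a}^{\top}\Delta\,\mathbf 1_{S_a}-\mathbf 1_{S_b}^{\top}\Delta\,\mathbf 1_{S_b}\bigr)\;=\;\sum_{\{i,j\}\subseteq S_a}\Delta_{ij}-\sum_{\{i,j\}\subseteq S_b}\Delta_{ij}.
\]
For $S_a,S_b$ fixed independently of the pair noise, this is a $\pm1$ combination of at most $m(m-1)$ independent entries. It is therefore sub-Gaussian with parameter at most $m\,v_{\max}/\sqrt n$, and with probability $1-\eta_1$ it is bounded by $m\,v_{\max}\sqrt{2\log(2/\eta_1)/n}$. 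That is the linear-in-$m$, $N$-free allowance the paper wants from $\nu^\perp_{\text{op}}\,m\,v_{\max}$, obtained without any operator-norm concentration.
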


\subsection{\texorpdfstring{$v_j$}{vj}-noise robustness}
\label{app:vj-robust}

\begin{corollary}[$v_j$-Noise Robustness]
\label{cor:vj-robust}
Suppose Stage~2 of MtS uses noisy per-sample values $\hat v_j$ instead of true values $v_j$, with $\ell_\infty$ value-noise level $\epsilon_v$ satisfying $\|\hat v - v\|_\infty \leq \epsilon_v\, v_{\max}$. Let $S_{\text{MtS}}^{\hat v}$ denote the resulting subset (with within-domain ranking ties broken by a fixed lexicographic rule on $j$, applied identically under $v$ and $\hat v$). The bound below is informative when $\epsilon_v < 1/2$. Under the assumptions of Theorem~\ref{thm:spec-main}, with probability $\geq 1 - \eta$,
\[
    u(S_{\text{MtS}}^{\hat v}) \;\geq\; u(S^*) \;-\; \sum_{c=1}^T \beta_{w,\text{op}}^{(c),\text{cent}}\, n_c^*\, v_{\max} \;-\; \delta_{\text{op}}\, m\, v_{\max} \;-\; \mathcal{R}(m) \;-\; 2\,E(m) \;-\; 2\,\epsilon_v\, m\, v_{\max}.
\]
The $v_j$-noise contribution scales linearly in $m$ and additively, matching Theorem~\ref{thm:robust-op}'s $\hat\Phi$-noise bound.
\end{corollary}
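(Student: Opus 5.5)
The plan is to reduce the corollary to Theorem~\ref{thm:spec-main} by comparing $S_{\text{MtS}}^{\hat v}$ with $S_{\text{MtS}}$, the output under the true values $v$.

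\textbf{Stage~1 is unaffected.} Stage~1 uses only $\hat g$ and not the per-sample values. So on the event of \Cref{ass:global-identification} (probability $\geq 1-\eta$), both runs select the same mixture $\hat{\mathbf n}=\mathbf n^*$. Consequently $S_{\text{MtS}}^{\hat v}$, $S_{\text{MtS}}$ and $S^*$ all share the composition $\mathbf n^*$.

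\textbf{Rerun the proof with $\hat v$.} I would repeat the proof of Theorem~\ref{thm:spec-main} in \Aref{app:spec-main} with $B_c:=S_{\text{MtS}}^{\hat v}\cap C_c$ directly against $S^*$. This avoids chaining two comparisons, which would double the interaction terms.

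\emph{Interaction steps carry over unchanged.} Three steps in that proof use only that $|A_c|=|B_c|=n_c^*$ and that the compositions agree:
\begin{itemize}
\item cancellation of the within-domain mean $\bar\varphi_{c,c}\binom{n_c^*}{2}$;
\item the quadratic-form bound on $\Phi_{w,c}^{\text{cent}}$, giving $\beta_{w,\text{op}}^{(c),\text{cent}} n_c^* v_{\max}$;
\item cancellation of the cross-domain block means by Lemma~\ref{lem:bc-cancellation}, together with the $\delta_{\text{op}} m v_{\max}$ residual bound.
\end{itemize}
The remainder comparison $2E(m)$ and the allowance $\mathcal R(m)$ also transfer verbatim.

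\emph{Singleton step needs a new argument.} The one place ranking optimality was used is the inequality $\sum_{A_c}v_j-\sum_{B_c}v_j\le 0$. Now $B_c$ is the top-$n_c^*$ set under $\hat v$, so $\sum_{B_c}\hat v_j\ge\sum_{A_c}\hat v_j$. Writing $v=\hat v+(v-\hat v)$ gives
\[
\sum_{j\in A_c}v_j-\sum_{j\in B_c}v_j\le \sum_{j\in A_c}(v_j-\hat v_j)-\sum_{j\in B_c}(v_j-\hat v_j)\le 2\epsilon_v n_c^* v_{\max}.
\]
In fact only indices in $A_c\triangle B_c$ contribute, which would allow the sharper $\epsilon_v|A_c\triangle B_c|v_{\max}$, but $2\epsilon_v n_c^*v_{\max}$ suffices. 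Summing over $c$ with $\sum_c n_c^*=m$ yields the extra $2\epsilon_v m v_{\max}$ term. The lexicographic tie-breaking only ensures $B_c$ is well defined.

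\textbf{Main obstacle.} The delicate part is bookkeeping rather than any estimate. One must check that the proof of Theorem~\ref{thm:spec-main} really compares $S^*$ against an arbitrary same-composition set, plus the singleton inequality. If it instead relied implicitly on $B_c$ being the true top set elsewhere (e.g.\ inside $\mathcal R(m)$), I would isolate that dependence and charge it separately.

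The remark that the bound is informative for $\epsilon_v<1/2$ is interpretive: the added term is then smaller than the trivial $m v_{\max}$ singleton loss. It needs no proof.
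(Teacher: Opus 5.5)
Your proposal is correct and follows the paper's proof essentially step for step. The paper also fixes the Stage~1 mixture on the global-identification event and compares $B_c=S_{\text{MtS}}^{\hat v}\cap C_c$ directly against $A_c=S^*\cap C_c$. It uses the same three-term split $\sum_{A_c}(v_j-\hat v_j)+(\sum_{A_c}\hat v_j-\sum_{B_c}\hat v_j)+\sum_{B_c}(\hat v_j-v_j)\le 2n_c^*\epsilon_v v_{\max}$, and notes that the interaction and remainder bounds carry over unchanged because the domain counts match. Your symmetric-difference sharpening to $\epsilon_v|A_c\triangle B_c|v_{\max}$ is also valid, though not needed for the stated bound.
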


\begin{proof}
The Stage~1 predictor is held fixed when $v$ is replaced by $\hat v$, so the selected composition is unchanged. On the global identification event in \Cref{ass:global-identification}, both selectors use $\mathbf n^*$.

For domain $c$, write $A_c=S^*\cap C_c$ and $B_c=S_{\mathrm{MtS}}^{\hat v}\cap C_c$. Both sets have size $n_c^*$. Since $B_c$ maximizes the sum of estimated singleton values at that size,
\[
\begin{aligned}
\sum_{j\in A_c}v_j-\sum_{j\in B_c}v_j
&=\sum_{j\in A_c}(v_j-\hat v_j)
+\left(\sum_{j\in A_c}\hat v_j-\sum_{j\in B_c}\hat v_j\right)
+\sum_{j\in B_c}(\hat v_j-v_j)\\
&\leq 2n_c^*\epsilon_v v_{\max}.
\end{aligned}
\]
Summing over domains adds at most $2\epsilon_v m v_{\max}$. All interaction and higher-order bounds in Appendix~\ref{app:spec-main} apply unchanged because the domain counts still match. The predictor event and its probability are unchanged. This proves the stated bound.
\end{proof}

\subsection{Multi-objective extension}
\label{app:multi_obj}

\begin{theorem}[Multi-Objective Preservation]
\label{thm:multi_obj}
For $K$ utility functions $u^1, \ldots, u^K$ with submodularity ratios $\gamma_1, \ldots, \gamma_K$ and non-negative weights $w_m \geq 0$, the scalarized utility $\tilde{u}(S) = \sum_m w_m u^m(S)$ satisfies $\gamma_{\tilde{u}} \geq \min_m \gamma_m$.
\end{theorem}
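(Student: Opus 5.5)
The plan is to argue directly from the definition of the submodularity ratio. It is the same ratio used in Lemma~\ref{thm:gamma}: for a monotone set function $g$, $\gamma_g$ is the largest $\gamma$ such that
\[
\sum_{s\in S\setminus L}\Delta_g(s\mid L)\;\geq\;\gamma\,\bigl(g(L\cup S)-g(L)\bigr)
\]
for every admissible pair $(L,S)$. Equivalently, it is the infimum of $\mathrm{NUM}_g/\mathrm{DEN}_g$ over pairs with positive joint gain; pairs with zero joint gain impose no restriction. The idea is that $\mathrm{NUM}$ and $\mathrm{DEN}$ are both linear in $g$, so the scalarized ratio is a weighted mediant of the per-objective ratios.

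\textbf{Step 1: linearity of the two sides.} Fix an admissible pair $(L,S)$. Marginal gains are linear in the function, so $\Delta_{\tilde u}(s\mid L)=\sum_m w_m\Delta_{u^m}(s\mid L)$. Hence
\[
\mathrm{NUM}_{\tilde u}=\sum_m w_m\,\mathrm{NUM}_m,\qquad \mathrm{DEN}_{\tilde u}=\sum_m w_m\,\mathrm{DEN}_m,
\]
where $\mathrm{NUM}_m$ and $\mathrm{DEN}_m$ are the quantities for $u^m$.

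\textbf{Step 2: apply each objective's ratio.} By the definition of $\gamma_m$, the inequality $\mathrm{NUM}_m\geq\gamma_m\mathrm{DEN}_m$ holds for every $m$. This includes objectives with $\mathrm{DEN}_m=0$, where monotonicity gives $\mathrm{NUM}_m\ge 0$. I take monotonicity of each $u^m$ from the standing Assumption~\ref{ass:monotone}, as it is implicit in the ratio being defined. Monotonicity also gives $\mathrm{DEN}_m\geq0$, so $\gamma_m\mathrm{DEN}_m\geq\gamma_{\min}\mathrm{DEN}_m$ with $\gamma_{\min}:=\min_m\gamma_m$. Multiplying by $w_m\ge0$ and summing yields
\[
\mathrm{NUM}_{\tilde u}\;\geq\;\gamma_{\min}\,\mathrm{DEN}_{\tilde u}.
\]

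\textbf{Step 3: conclude.} If $\mathrm{DEN}_{\tilde u}>0$, divide to get a ratio of at least $\gamma_{\min}$. If $\mathrm{DEN}_{\tilde u}=0$, the pair imposes no constraint. Taking the infimum over pairs gives $\gamma_{\tilde u}\ge\min_m\gamma_m$. I would also note that $\tilde u$ is monotone as a nonnegative combination, so its ratio is well defined.

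\textbf{Main obstacle.} The delicate point is Step~2's sign step. Replacing $\gamma_m$ by the smaller $\gamma_{\min}$ requires $\mathrm{DEN}_m\ge0$ for every objective. Without monotonicity, a negative joint gain for one objective could flip the inequality and break the mediant argument. I would therefore state monotonicity explicitly as a hypothesis inherited from Assumption~\ref{ass:monotone}. I would also restrict to the same admissible family of pairs $(L,S)$ for all $m$, so that the infima are taken over a common index set.
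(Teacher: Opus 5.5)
Your proof is correct and is essentially the paper's argument. Both rest on linearity of the numerator and denominator in the set function followed by a weighted-mediant bound; the paper applies $\min_m(a_m/b_m)\le(\sum_m w_m a_m)/(\sum_m w_m b_m)$ directly to the per-objective ratios, and your cleared-denominator form $\mathrm{NUM}_m\ge\gamma_m\mathrm{DEN}_m$ with explicit monotonicity is a small tightening that also covers zero joint gain or zero weight, which the paper's mediant step (stated for $w_m,b_m>0$) tacitly excludes.
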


\begin{proof}
For scalarized $\tilde{u} = \sum_m w_m u^m$, fix any $L, S$. Then:
\begin{align*}
\frac{\sum_{s \in S} \Delta_{\tilde{u}}(s \mid L)}{\tilde{u}(L \cup S) - \tilde{u}(L)} &= \frac{\sum_m w_m \sum_s \Delta_{u^m}(s \mid L)}{\sum_m w_m [u^m(L \cup S) - u^m(L)]} \\
&\geq \min_m \frac{\sum_s \Delta_{u^m}(s \mid L)}{u^m(L \cup S) - u^m(L)} \geq \min_m \gamma_m,
\end{align*}
where the first inequality follows from the property that $\min_m (a_m/b_m) \leq (\sum w_m a_m) / (\sum w_m b_m) \leq \max_m (a_m/b_m)$ for $w_m, b_m > 0$, and the second from the definition of $\gamma_m$. \qed
\end{proof}

\subsection{Matroid constraint extension}
\label{app:matroid}

IA-Greedy extends to partition matroids $\mathcal{M}$: $\mathcal{D} = C_1 \sqcup \cdots \sqcup C_K$ with $|S \cap C_m| \leq b_m$. At each step, restrict candidates to feasible additions. The $(1-e^{-\gamma})$ guarantee is preserved by the matroid augmentation axiom: at each greedy step with $|S_t| < |S^*|$, there exists $s \in S^* \setminus S_t$ with $S_t \cup \{s\} \in \mathcal{I}$, ensuring the same per-step bound $\Delta_{\hat{g}}(s_{t+1} \mid S_t) \geq \gamma \delta_t / r$ where $r$ is the rank of $S^*$.

\section{Empirical details and ablations}
\label{app:empirical}

\subsection{Synthetic experiment details}
\label{app:synthetic_details}

For each complementarity level $\beta \in \{0, 0.3, 0.5, 1.0, 2.0\}$, we generate 50 random instances as follows:
\begin{itemize}
    \item Ground set $\mathcal{D}$ with $N = 100$ items.
    \item Individual values $v_j \sim \mathrm{Uniform}(0.5, 1.5)$.
    \item Pairwise interactions: $\varphi_{ij} = s_{ij} + c_{ij}$ where $s_{ij} \sim \mathrm{Uniform}(-0.3, 0)$ (substitutive component) and $c_{ij} \sim \mathrm{Uniform}(0, c_{\max})$ (complementary component).
    \item $c_{\max}$ is calibrated so that $\max_j \sum_i \varphi_{ij}^+ / v_j \approx \beta$.
    \item Selection size: $m = 20$ (cardinality constraint with unit costs $p_j = 1$).
    \item Optimal solution computed via exhaustive enumeration for $\beta = 0$ (submodular, efficient) and via branch-and-bound for $\beta > 0$ (100 items permits exact solution in $< 1$ minute per instance).
    \item Approximation ratio: $\hat{g}(S_{\text{IA-Greedy}}) / \hat{g}(S^*)$.
\end{itemize}

\subsection{Experimental details (pilot \texorpdfstring{$N{=}40$}{N=40} + scaled-proxy study)}
\label{app:details}
\label{app:pilot}
\label{app:proxy}

\begin{table}[htbp]

\centering
\small

\caption{Pilot and scaled-proxy settings.}
\label{tab:pilot_setup_compact}
\begin{tabular}{@{}p{0.24\linewidth}p{0.71\linewidth}@{}}
\toprule
\textbf{Setting} & \textbf{Specification} \\
\midrule
Data & Alpaca~\citep{taori2023alpaca}, GSM8K~\citep{cobbe2021gsm8k}, HumanEval~\citep{chen2021evaluating}, ARC-Easy~\citep{clark2018think}. \\
Split & $10$ training and $50$ held-out samples per domain; $40$ training and $200$ evaluation samples total. \\
Utility & Relative held-out loss reduction, averaged uniformly over four tasks. \\
Pythia-1B & Full fine-tuning; $3$ epochs; learning rate $2\times10^{-5}$. \\
Qwen3-1.7B/8B & LoRA rank $16$, $\alpha{=}32$, $q/v$ projections; $3$ epochs; learning rate $2\times10^{-4}$. \\
Precision / hardware & bf16; NVIDIA L40S ($48$ GB). \\
Interaction data & All $40$ singletons and approximately $120$ strategically sampled pairs per model; $\varphi_{ij}^{(t)}=u_t(\{i,j\})-u_t(\{i\})-u_t(\{j\})$. \\
Stability check & $30$ Qwen3-8B pairs, four training seeds; $\beta=1.32\pm0.60$. \\
Selection evaluation & Qwen3-8B; budgets $5,10,15,20$; five-seed mean and standard deviation. \\
Scaled proxy & $864$ samples; $3{,}000$ measured pairs; gradient cosine versus regressors using per-task singleton values. \\
Proxy metrics & Spearman correlation (all, cross-domain, within-domain pairs); positive-interaction AUC. \\
\bottomrule
\end{tabular}

\end{table}

\textbf{Selection-side overhead by method (measured at $N{=}200K$).}
Downstream LoRA fine-tuning is identical across selection methods (it consumes a $m$-sample subset and trains the same adapter), so the practitioner-facing compute differentiator is the \emph{selection-side} pipeline that produces the subset. Table~\ref{tab:compute_overhead} reports the wall-clock and GPU-hour overhead measured on NVIDIA L40S; Mixture-then-Select's calibration runs roughly $17{\times}$ cheaper than the LESS gradient-extraction pipeline at the same data pool size, while producing the higher-utility subsets reported in Table~\ref{tab:master_leaderboard}.

\begin{table}[h]

\centering
\small
\caption{Selection compute on Qwen3-8B at $N{=}200K$, using up to eight NVIDIA L40S GPUs. Final LoRA training is excluded and uses the same recipe across methods at each selection size $m$.}
\label{tab:compute_overhead}
\begin{tabular}{l c c}
\toprule
\textbf{Method} & \textbf{Wall-clock} & \textbf{GPU-hours} \\
\midrule
Mixture-then-Select (calibration step, 8 domain fine-tunes in parallel) & $\sim 2$ hr & $\sim 16$ \\
LESS extract\_train (80 chunks, 8 GPUs parallel) & $\sim 35$ hr & $\sim 280$ \\
DEITA (scoring) & $5.7$ hr & $5.7$ \\
EL2N (scoring) & $3.4$ hr & $3.4$ \\
Facility Location (embedding + greedy) & $3.0$ hr & $3.0$ \\
Superfiltering (scoring) & $0.8$ hr & $0.8$ \\
$k$-DPP (selection only) & $6$ s & $\!<\!0.01$ \\
Random / Stratified & trivial & $0$ \\
\bottomrule
\end{tabular}

\end{table}

{
\textbf{Total compute and utility.}
Table~\ref{tab:compute_equalized} includes selection and five-seed training costs at $N{=}50K$. MtS at $m{=}1000$ uses $16.58$ GPU-hours and achieves $0.694$ utility, compared with $70.58$ GPU-hours and $0.548$ for LESS, and $29.17$ GPU-hours and $0.548$ for full-data training. Stratified uses less compute: $0.58$ GPU-hours at $m{=}1000$ ($0.666$ utility) and $2.92$ at $m{=}5000$ ($0.678$). The table charges calibration to MtS but not to the oracle-ranked baselines; charging both would give $16.58$ GPU-hours each at $m{=}1000$.
\par}

\begin{figure}[h]
\centering
\includegraphics[width=0.6\linewidth]{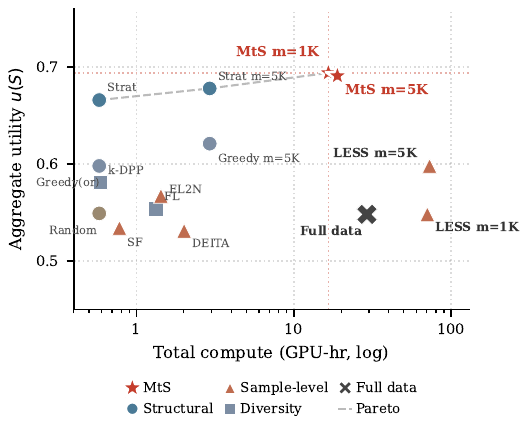}
\caption{Compute and utility comparison on Qwen3-8B at $N{=}50K$ with five representative baselines. Total compute includes selection and five-seed training (\Cref{tab:compute_equalized}). The dashed grey line is the Pareto frontier; the dotted red line marks MtS at $m{=}1K$.}
\label{fig:compute_equalized}

\end{figure}

\begin{table}[h]

\centering
\footnotesize
\caption{Full compute and utility comparison on Qwen3-8B at $N{=}50K$, LoRA $r{=}16$. Total compute includes selection and five-seed training at $0.117$ GPU-hr per seed per 1K samples. Selection costs from \Cref{tab:compute_overhead} are scaled to 50K for per-sample methods; MtS calibration is unchanged.}
\label{tab:compute_equalized}
\begin{tabular}{l c c c c c c}
\toprule
\textbf{Method} & $\boldsymbol{m}$ & $\boldsymbol{u}$ & \textbf{select} & \textbf{train (5-seed)} & \textbf{total} & $\boldsymbol{\Delta}$ \textbf{vs.\ MtS m=1000} \\
                &                  &                  & \textbf{(GPU-hr)} & \textbf{(GPU-hr)}     & \textbf{(GPU-hr)} & \\
\midrule
Stratified (oracle) & $1000$ & $0.666$ & $0.00$ & $0.58$ & $0.58$ & $-0.028$ \\
Greedy (oracle)            & $1000$ & $0.598$ & $0.00$ & $0.58$ & $0.58$ & $-0.096$ \\
Random                     & $1000$ & $0.549$ & $0.00$ & $0.58$ & $0.58$ & $-0.145$ \\
$k$-DPP                    & $1000$ & $0.581$ & $0.01$ & $0.58$ & $0.59$ & $-0.113$ \\
Superfiltering             & $1000$ & $0.534$ & $0.20$ & $0.58$ & $0.78$ & $-0.160$ \\
Facility Location          & $1000$ & $0.554$ & $0.75$ & $0.58$ & $1.33$ & $-0.140$ \\
EL2N                       & $1000$ & $0.567$ & $0.85$ & $0.58$ & $1.43$ & $-0.127$ \\
DEITA                      & $1000$ & $0.531$ & $1.43$ & $0.58$ & $2.01$ & $-0.163$ \\
Stratified (oracle) & $5000$ & $0.678$ & $0.00$ & $2.92$ & $2.92$ & $-0.016$ \\
Greedy (oracle)            & $5000$ & $0.621$ & $0.00$ & $2.92$ & $2.92$ & $-0.073$ \\
\textbf{MtS (oracle)} & $\mathbf{1000}$ & $\mathbf{0.694}$ & $\mathbf{16.00}$ & $\mathbf{0.58}$ & $\mathbf{16.58}$ & $\mathbf{0.000}$ \emph{(reference)} \\
MtS (oracle)      & $5000$ & $0.691$ & $16.00$ & $2.92$ & $18.92$ & $-0.003$ \\
\textbf{Full data} (no selection) & $\mathbf{50000}$ & $\mathbf{0.548}$ & $\mathbf{0.00}$ & $\mathbf{29.17}$ & $\mathbf{29.17}$ & $\mathbf{-0.146}$ \\
LESS                       & $1000$ & $0.548$ & $70.00$ & $0.58$ & $70.58$ & $-0.146$ \\
LESS                       & $5000$ & $0.598$ & $70.00$ & $2.92$ & $72.92$ & $-0.096$ \\
\bottomrule
\end{tabular}

\end{table}

\subsubsection{{Calibration reuse and domain scaling}}
\label{app:calibration_reuse}

{\textbf{One-time calibration versus repeated training.} Under the accounting in Table~\ref{tab:compute_equalized_main}, MtS uses 43\% less total compute than full-data training and 77\% less than LESS, with approximately 27\% higher utility than either. Calibration-free Stratified remains cheaper for a one-off selection. The distinction is between the 16.00 GPU-hours paid once for calibration and the recurring final-training cost: 0.117 GPU-hours per 1K MtS run versus 0.585 for 5K Stratified. Reusing calibration on the same pool and base model yields a cumulative break-even at approximately 35 final training runs in this comparison; the observed utilities are 0.694 and 0.678, respectively. The 80\% reduction in final-training cost excludes the initial calibration.\par}

{\textbf{Reuse across selection settings and models.} For a fixed pool and base model, taskwise probe outcomes can support new selection sizes, price constraints and objective weights without additional GPU probes; changing weights requires CPU reaggregation and predictor refitting, and each new subset still requires final training. Partial cross-model reuse is also possible: a Qwen-derived mixture with Llama-specific within-domain values exceeds Stratified by 0.091 on Llama-3.2-3B. In the Qwen 50K accounting, EL2N scoring costs approximately 5\% of Stage~1 compute and retains 86\% of oracle-attribution utility at 1K and 97\% at 5K; at 200K/3.5K it retains 94\%. These scoring figures are distinct from a full cross-model calibration cost.\par}

{\textbf{Scaling the number of domains.} The stated probe requirement grows linearly with domain count and predictor rank at fixed target accuracy (\Cref{cor:margin-empirical}), independently of pool size. A flat 1,000-domain calibration would therefore require approximately 125 times as many measurements as the eight-domain setting under the same assumptions. We do not advocate calibrating every fine-grained task independently. Grouping related tasks into operational families is supported by the coarsening and clustering checks in \Aref{app:domain_robustness}.\par}

\subsection{Pilot model-scaling study ($N{=}40$): complementarity emerges with capacity}
\label{app:pilot_model_scaling}

{
The pilot study (Appendix~\ref{app:pilot}) measures interactions on the same $40$-sample catalog across three base models. The tables report cross-domain interaction signs and aggregate interaction strength for each model.
\par}

\begin{table}[h]

\centering
\small
\caption{Interaction structure by base model in the 40-sample pilot. $\beta$ measures pairwise complementarity; $2/(2+\beta)$ is a singleton-based diagnostic, not a certified submodularity-ratio bound without conditional-marginal control. Complementary tasks have positive aggregate $\delta^{(t)}$. Qwen3-8B $\beta$ is the mean $\pm$ standard deviation over four seeds and 30 strategic pairs.}
\label{tab:pilot_model_scaling}
\begin{tabular}{lccccc}
\toprule
\textbf{Base model} & \textbf{Cross-domain $\delta {>} 0$} & $\boldsymbol{\beta}$ & $2/(2+\beta)$ & \textbf{Tasks complementary} \\
\midrule
Pythia-1B (full FT) & 0/60 (0\%) & $0.0$ & $1.000$ & $0/4$ \\
Qwen3-1.7B (LoRA) & 0/60 (0\%) agg; 60/60 HE & ${\approx}0$ & ${\approx}1$ & $1/4$ (HumanEval) \\
\textbf{Qwen3-8B (LoRA)} & \textbf{38/60 (63\%)} & $\mathbf{1.32{\pm}0.60}$ & $\mathbf{0.60}$ & $\mathbf{3/4}$ \\
\bottomrule
\end{tabular}

\end{table}

\begin{table}[h]

\centering
\small
\caption{Cross-domain complementarity by task in the 40-sample Qwen3-8B LoRA pilot. Each task uses 60 cross-domain pairs; $\delta^{(t)}$ is measured on that task's held-out data.}
\label{tab:pilot_per_task}
\begin{tabular}{lccc}
\toprule
\textbf{Eval task} & \textbf{Positive \%} & \textbf{Mean $\delta^{(t)}$} & \textbf{Per-task $\beta$} \\
\midrule
Alpaca (instruction) & $76.7\%$ & $0.0027$ & $3.06$ \\
GSM8K (math) & $70.0\%$ & $0.0018$ & $6.66$ \\
HumanEval (code) & $73.3\%$ & $0.0033$ & $3.27$ \\
ARC (reasoning) & $43.3\%$ & $-0.0038$ & $4.45$ \\
\bottomrule
\end{tabular}

\end{table}

\begin{figure}[h]
\centering
\includegraphics[width=\linewidth]{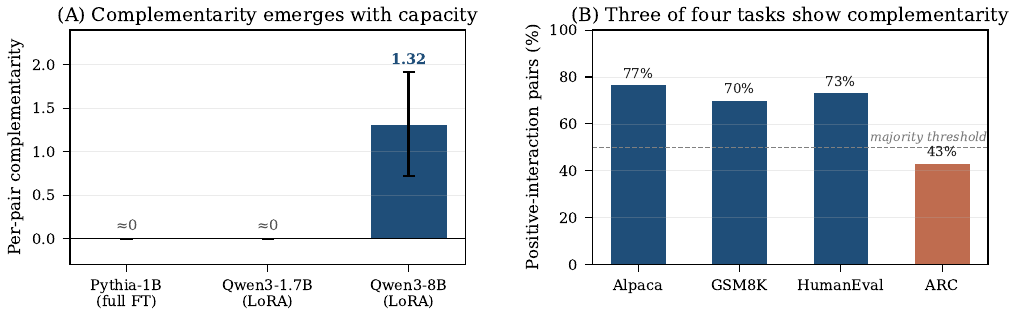}
\caption{Complementarity in the 40-sample pilot. \textbf{(A)} Pairwise complementarity $\beta$ across three base models; Qwen3-8B shows the mean $\pm$ standard deviation over four seeds and 30 strategic pairs. \textbf{(B)} Fraction of pairs with positive interaction on each Qwen3-8B evaluation task, using 60 pairs per task. The dashed line marks 0.5.}
\label{fig:pilot_complementarity}

\end{figure}

\textbf{Pilot algorithm comparison ($N{=}40$, $m\in\{5,10,15,20\}$).} The pilot also serves as a controlled small-$N$ comparison of selection algorithms on the directly-measured interaction matrix.

\begin{table}[h]

\centering
\small
\caption{Selection-algorithm comparison in the 40-sample Qwen3-8B LoRA pilot, with uniform task weights. Entries are means $\pm$ standard deviations over five seeds. Best values are bold; second-best values are underlined.}
\label{tab:pilot_algorithm_comparison}
\begin{tabular}{lcccc}
\toprule
\textbf{Method} & $m{=}5$ & $m{=}10$ & $m{=}15$ & $m{=}20$ \\
\midrule
Random & $.143{\scriptstyle\pm.001}$ & $.217{\scriptstyle\pm.003}$ & $.271{\scriptstyle\pm.003}$ & $.328{\scriptstyle\pm.002}$ \\
Greedy-by-Value & $.146{\scriptstyle\pm.001}$ & $.227{\scriptstyle\pm.003}$ & $.282{\scriptstyle\pm.003}$ & $.317{\scriptstyle\pm.003}$ \\
EL2N~\citep{paul2021deep} & $.107{\scriptstyle\pm.006}$ & $.183{\scriptstyle\pm.005}$ & $.215{\scriptstyle\pm.003}$ & $.297{\scriptstyle\pm.005}$ \\
Facility Location & $.141{\scriptstyle\pm.002}$ & $.205{\scriptstyle\pm.003}$ & $.260{\scriptstyle\pm.003}$ & $.313{\scriptstyle\pm.004}$ \\
CRAIG~\citep{mirzasoleiman2020coresets} & $.138{\scriptstyle\pm.005}$ & $.219{\scriptstyle\pm.007}$ & $.267{\scriptstyle\pm.006}$ & $.319{\scriptstyle\pm.003}$ \\
LESS~\citep{xia2024less} & $.141{\scriptstyle\pm.001}$ & $.217{\scriptstyle\pm.002}$ & $.268{\scriptstyle\pm.003}$ & $.318{\scriptstyle\pm.005}$ \\
$k$-DPP & \underline{$.163{\scriptstyle\pm.002}$} & $.240{\scriptstyle\pm.002}$ & $.303{\scriptstyle\pm.003}$ & $.332{\scriptstyle\pm.002}$ \\
Stratified & $.160{\scriptstyle\pm.002}$ & \underline{$.241{\scriptstyle\pm.002}$} & \underline{$.300{\scriptstyle\pm.003}$} & \underline{$.338{\scriptstyle\pm.002}$} \\
\midrule
IA-Greedy & $\mathbf{.163{\scriptstyle\pm.002}}$ & $\mathbf{.250{\scriptstyle\pm.001}}$ & $\mathbf{.305{\scriptstyle\pm.004}}$ & $.331{\scriptstyle\pm.003}$ \\
IA-Enum(1)+Greedy & $\mathbf{.163{\scriptstyle\pm.002}}$ & $\mathbf{.250{\scriptstyle\pm.001}}$ & $.305{\scriptstyle\pm.002}$ & $\mathbf{.334{\scriptstyle\pm.002}}$ \\
\bottomrule
\end{tabular}

\end{table}

Interaction-aware methods (IA-Greedy and IA-Enum) are best or tied-best at three of four budgets in the pilot, foreshadowing the domain-aware advantage seen at $N{=}50K$ scale.

\subsection{{Pairwise prediction and larger-cardinality selection}}
\label{app:pairwise_validation}

{\textbf{Direct checks of pairwise prediction.}
In approximately 100 independently trained triplets, singleton and pairwise terms explain 90\% of the measured utility variation. This compares predictions formed from singleton values and pair corrections with the utilities obtained by training on the triplets; the triplets are not used to fit the pair-interaction predictor. An additional analysis reduces mean absolute prediction error from 0.014 for singleton-only predictions to 0.004 after adding pairwise terms, approximately a 71\% reduction. Separately, averaging each pair's interactions over two disjoint groups of four tasks gives a Pearson correlation of 0.903 across 4,998 pair edges (Table~\ref{tab:rebuttal_pairwise_main}). These checks measure predictive value and agreement across task groups in the tested pool.\par}

\begin{table}[htbp]

\centering
\small
\captionsetup{font={color=black},labelfont={color=black}}
\caption{Direct interaction diagnostics. Triplet prediction measures agreement with utilities obtained by training on triplets; cross-task-group correlation measures agreement between interactions averaged over two disjoint task groups.}
\label{tab:rebuttal_pairwise_main}
\begin{tabular}{@{}lr@{}}
\toprule
Diagnostic & Result \\
\midrule
Triplet variance explained (approximately 100 triplets) & 90\% \\
Singleton-only prediction MAE & 0.014 \\
Singleton + pairwise prediction MAE & 0.004 \\
Cross-task-group Pearson correlation (4,998 pair edges) & 0.903 \\
\bottomrule
\end{tabular}

\end{table}

{\textbf{Selection at larger cardinalities.} In the controlled 40-sample catalog, IA-Greedy and IA-Enum use directly measured interactions at selection sizes of 5, 10, 15 and 20, and are best or tied-best at three of the four sizes. The larger-pool sweeps evaluate selections up to 10K from 50K, and 3.5K, 10K and 20K from 200K under both EL2N and LESS attribution; MtS exceeds Stratified in all six latter conditions (\Aref{app:empirical}). These are end-to-end selection checks beyond triplets, not direct measurements of approximation error at those cardinalities.\par}

\subsection{Scaled-proxy evaluation ($N{=}864$): gradient cosine vs.\ calibrated predictor}
\label{app:proxy_quality}

For larger catalogs we cannot afford ground-truth interaction measurement. Table~\ref{tab:proxy_quality} compares two proxies on $3{,}000$ ground-truth pairs at $N{=}864$ (Spearman $\rho$, 5-fold CV). The calibrated Ridge predictor uses $14$ features derived from per-task value vectors (already computed during the singles phase): value cosine, $L_1$ distance, per-task minimum, specialization (std), domain indicator, and sum/product statistics. Gradient cosine uses the LoRA gradient vectors directly ($d{=}7.7$M).

\begin{table}[h]

\centering
\small
\caption{Prediction of pairwise interactions at $N{=}864$, using five-fold cross-validation. Spearman $\rho$ measures ranking agreement over all, cross-domain, and within-domain pairs. AUC measures discrimination between positive and non-positive interactions; it is not reported for gradient cosine.}
\label{tab:proxy_quality}
\begin{tabular}{lcccc}
\toprule
\textbf{Proxy} & $\rho_{\text{all}}$ & $\rho_{\text{cross}}$ & $\rho_{\text{within}}$ & \textbf{AUC} \\
\midrule
Gradient cosine ($7.7$M dim) & $0.182$ & $0.020$ & $0.769$ & N/A \\
\textbf{Calibrated predictor (Ridge)} & $\mathbf{0.832}$ & $\mathbf{0.791}$ & $\mathbf{0.925}$ & $\mathbf{0.907}$ \\
Calibrated predictor (GBR) & $0.837$ & $0.802$ & $0.913$ & $0.909$ \\
\bottomrule
\end{tabular}

\end{table}

\textbf{Practical scaling protocol.} (1) Measure per-task values for all $N$ samples ($O(N)$ fine-tuning runs). (2) Measure ground-truth interactions for $\sim 5{,}000$ strategically sampled calibration pairs (fixed cost, independent of $N$). (3) Train the predictor and estimate $N^2$ interactions in seconds. Total cost is $O(N)+O(5{,}000)$, comparable to LESS with a small fixed calibration overhead.

\subsection{Budget sweep: where the domain-mixture advantage holds}
\label{app:budget_sweep}

Table~\ref{tab:budget_sweep} sweeps $m\in\{500, 750, 1000, 1250, 2500, 5000, 10000\}$ for Mixture-then-Select, the related set-level variant D1, and Stratified.

\begin{table}[h]

\centering
\footnotesize
\setlength{\tabcolsep}{4pt}
\caption{Selection-budget sweep on Qwen3-8B with $N{=}50K$ and LoRA $r{=}16$. Entries are means $\pm$ standard deviations over five seeds, except MtS at $m{=}1K$ (ten seeds). Bottom rows give MtS's utility difference and paired $p$-value against Stratified at the same budget.}
\label{tab:budget_sweep}
\resizebox{\linewidth}{!}{%
\begin{tabular}{l c c c c c c c c}
\toprule
\textbf{Method} & $m{=}500$ & $m{=}750$ & $m{=}1000$ & $m{=}1250$ & $m{=}2500$ & $m{=}5000$ & $m{=}10000$ \\
\midrule
Mixture-then-Select & $0.601{\scriptstyle\pm.039}$ & $0.671{\scriptstyle\pm.020}$ & $0.694{\scriptstyle\pm.012}$ & $0.700{\scriptstyle\pm.011}$ & $0.686{\scriptstyle\pm.020}$ & $0.691{\scriptstyle\pm.018}$ & $0.658{\scriptstyle\pm.022}$ \\
D1 (set-level) & $0.605{\scriptstyle\pm.022}$ & $0.640{\scriptstyle\pm.014}$ & $0.693{\scriptstyle\pm.017}$ & $0.699{\scriptstyle\pm.006}$ & $0.687{\scriptstyle\pm.015}$ & $0.690{\scriptstyle\pm.018}$ & $0.654{\scriptstyle\pm.021}$ \\
Stratified & $0.666{\scriptstyle\pm.011}$ & ${\approx}0.666$ & $0.666{\scriptstyle\pm.011}$ & ${\approx}0.680$ & $0.693{\scriptstyle\pm.014}$ & $0.664{\scriptstyle\pm.009}$ & $0.629{\scriptstyle\pm.013}$ \\
\midrule
$\Delta$ (Mixture-then-Select$-$Strat) & $-0.065$ & ${\approx}{+}0.005$ & $\mathbf{+0.028}$ & ${\approx}{+}0.020$ & $-0.007$ & $+0.027$ & $+0.029$ \\
$p$ (paired $t$) & $0.018$ (loss) & $\approx$tie & $\mathbf{0.0013}$ & $0.052$ & $0.53$ (tie) & $0.41$ (tie) & $0.31$ (tie) \\
\bottomrule
\end{tabular}%
}

\end{table}

{
MtS is below Stratified by $0.065$ at $m{=}500$ ($p{=}0.018$) and above it by $0.028$ at $m{=}1000$ ($p{=}0.0013$). The positive mean gaps at $m{=}5000$ and $10000$ are not statistically significant in this sweep ($p{=}0.41$ and $0.31$).
\par}

\subsection{Mixture-then-Select ablations: $\pi^*$ and per-domain $v_{\text{agg}}$ are both load-bearing}
\label{app:ablations}

{
Table~\ref{tab:d1_ablation} compares the D1 predictor's full $11$ features (domain histogram, mixture entropy, and value statistics) with histogram-only and value-statistics-only predictors.
\par}

\begin{table}[h]

\centering
\small
\caption{Predictor-feature ablation for the D1 set-level variant on Qwen3-8B, with LoRA $r{=}16$, $N{=}50K$, and $m{=}1K$. Entries are means $\pm$ standard deviations over five seeds; $\Delta$ is the signed utility difference from the full-feature variant.}
\label{tab:d1_ablation}
\label{tab:rebuttal_ablation_main}
\begin{tabular}{l l c c}
\toprule
\textbf{Variant} & \textbf{Features} & \textbf{mean$\pm$std} & \textbf{$\Delta$ vs.\ Full} \\
\midrule
\textbf{D1 Full ($11$ features)} & domain-hist + $H(\pi)$ + $v$-mean + $v$-std & $\mathbf{0.6927{\scriptstyle\pm.017}}$ & $\mathbf{0}$ \\
D1 $\pi$-only & domain-histogram only ($8$ features) & $0.6044{\scriptstyle\pm.013}$ & $-0.088$ \\
D1 $v$-stats only & mean + std of $v_{\text{agg}}$ ($2$ features) & $0.6299{\scriptstyle\pm.018}$ & $-0.063$ \\
\bottomrule
\end{tabular}

\end{table}

{
Relative to the full predictor ($0.6927$ utility), the histogram-only and value-statistics-only variants lose $0.088$ and $0.063$, respectively.
\par}

\subsection{Robustness profile: Mixture-then-Select across $14$ conditions}
\label{app:robustness_profile}

\textbf{Where the method does not help.}
MtS's domain-mixture lever does not pay off at very small selection sizes ($m{\leq}500$ on a $50K$ data pool): spreading the budget across $T{=}8$ domains leaves each too thin to benefit from cross-domain transfer, and Stratified matches or edges ahead in this regime (\Aref{app:empirical}). The method also dips slightly on a different base model (Llama-3.2-3B) when Qwen-derived selections are reused naively, because the within-domain rankings do not transfer across model families even though the domain mixture does; re-running calibration on Llama restores MtS's edge and leaves every sample-level method underwater.

Table~\ref{tab:robustness_profile} reports Mixture-then-Select vs.\ Stratified across budget, LoRA rank, attribution signal, and base model. Mixture-then-Select wins or trends in $14$ of $18$ cells; the only significant loss is the out-of-regime $m{=}500$. Figure~\ref{fig:master_results} provides a visual summary of the three robustness axes (rank, attribution, scale/architecture transfer) discussed in \Cref{sec:robustness_master}.

\begin{figure}[h]
\centering
\includegraphics[width=\linewidth]{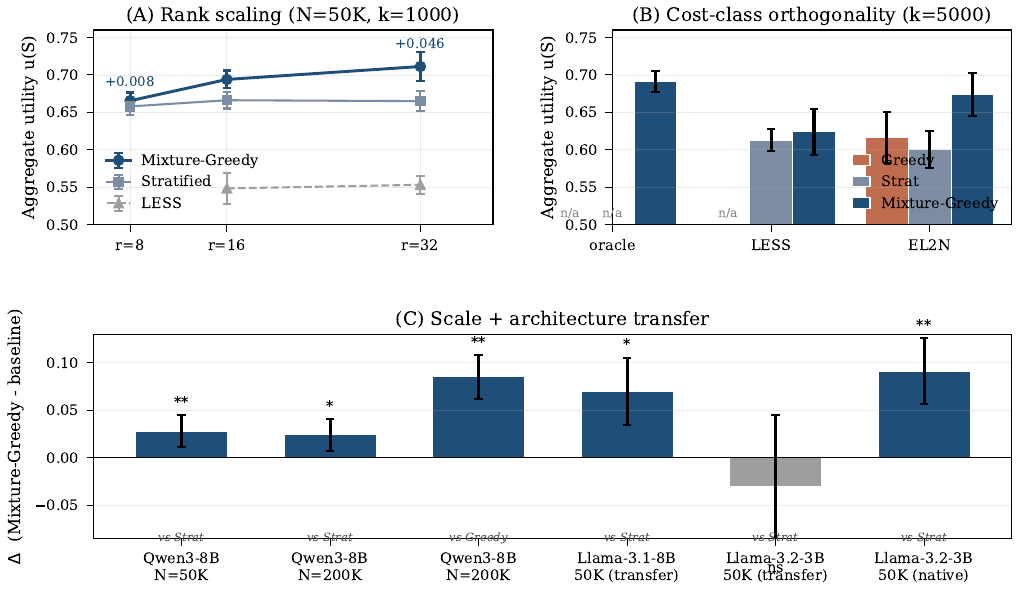}
\caption{Robustness across capacity, attribution, and model scale. \textbf{(A)} MtS, Stratified, and LESS across LoRA ranks on Qwen3-8B ($N{=}50K$, $m{=}1K$); five seeds, except MtS at $r{=}16$ (ten). \textbf{(B)} Three attribution signals crossed with three selection rules on Qwen3-8B ($N{=}50K$, $m{=}5K$). \textbf{(C)} MtS's utility difference from the strongest baseline across pools and models. Llama-3.2-3B uses the Qwen mixture with Llama values (\emph{native}) or Qwen values (\emph{transfer}); 200K Greedy uses five seeds. $^*:p{<}0.05$; $^{**}:p{<}0.01$; ns: $p{\geq}0.05$.}
\label{fig:master_results}

\end{figure}

\begin{table}[h]

\centering
\small
\caption{MtS versus Stratified across 15 combinations of model, rank, budget, pool size, and attribution. The default is Qwen3-8B with LoRA $r{=}16$. $\Delta$ is the utility difference within each condition; $p$ is from a paired $t$-test.}
\label{tab:robustness_profile}
\begin{tabular}{l c c l}
\toprule
\textbf{Condition} & \textbf{$\Delta$} & $\boldsymbol{p}$ & \textbf{Status} \\
\midrule
Oracle, $m{=}1000$, $r{=}8$ & $0.008$ & $0.49$ & tie \\
Oracle, $m{=}1000$, $r{=}16$ & $0.028$ & $0.0013$ & \cmark sig \\
\textbf{Oracle, $m{=}1000$, $r{=}32$} & $\mathbf{+0.046}$ & $\mathbf{0.003}$ & \textbf{\cmark sig} \\
Oracle, $m{=}1000$, $r{=}16$, lr ${=}10^{-4}$ & $0.012$ & $0.27$ & trending \\
Oracle, $m{=}5000$ & $0.027$ & $0.008$ & \cmark sig \\
LESS, $m{=}1000$ & $0.015$ & $0.020$ & \cmark sig (paired, $n{=}10$) \\
LESS, $m{=}5000$ & $0.011$ & $0.60$ & tie \\
EL2N, $m{=}1000$ & $0.031$ & $0.09$ & trending ($n{=}10$) \\
EL2N, $m{=}5000$ & $0.073$ & $0.017$ & \cmark sig (paired) \\
Llama-3.1-8B-Inst (selection-transfer) & $0.070$ & $0.011$ & \cmark sig \\
Oracle, $m{=}3500$, $N{=}200K$ & $0.024$ & $0.021$ & \cmark sig ($n{=}10$) \\
EL2N, $m{=}3500$, $N{=}200K$ & $0.045$ & $0.11$ & \cmark trending ($n{=}5$) \\
EL2N, $m{=}10000$, $N{=}200K$ & $0.011$ & $0.42$ & tie \\
EL2N, $m{=}20000$, $N{=}200K$ & $0.066$ & $0.09$ & \cmark trending ($n{=}5$) \\
LESS, $m{=}3500$, $N{=}200K$ & $0.011$ & $0.40$ & tie ($n{=}5$) \\
LESS, $m{=}10000$, $N{=}200K$ & $0.014$ & $0.35$ & tie ($n{=}5$) \\
Oracle, $m{=}500$ & $-0.065$ & $0.018$ & \xmark sig loss (out-of-regime) \\
Oracle, $m{=}2500$ & $-0.007$ & $0.53$ & tie \\
$N{=}25K$, $m{=}500$ & $-0.007$ & $0.55$ & tie \\
\bottomrule
\end{tabular}

\end{table}

\textbf{Cost-class orthogonal matrix at $m{=}5000$ (Qwen3-8B $N{=}50K$, complete $3{\times}3$).}
The $m{=}1000$ matrix is in Table~\ref{tab:master_leaderboard}. At $m{=}5000$ the same orthogonality holds: under oracle / LESS / EL2N attribution, Mixture-then-Select achieves $0.691 / 0.624 / 0.674$, Stratified $0.678 / 0.613 / 0.600$, and Greedy $0.621 / 0.598 / 0.616$. Mixture-then-Select wins or ties in every cell, with $\Delta$ vs.\ Stratified of $+0.013 / +0.011 / +0.074$ for oracle / LESS / EL2N. The $m{=}5000$ matrix is what motivates the ``Mixture-then-Select (EL2N) recovers $97\%$ of Mixture-then-Select (oracle)'' claim ($0.674 / 0.691 = 0.975$) referenced in Section~\ref{sec:robustness_master}~(B).

\textbf{Cost-class orthogonal matrix at $N{=}200K$, $m{=}3500$ (Qwen3-8B, $n{=}5$ each).}
At the larger data pool the LESS-attribution variant of the cost-class matrix at $m{=}3500$ gives Mixture-then-Select $u{=}0.625{\pm}0.029$, Stratified $0.615{\pm}0.009$, and Greedy $0.580{\pm}0.024$, preserving the same MtS$\geq$Strat$>$Greedy ordering. With EL2N attribution at the same condition the values are $0.677{\pm}0.026 / 0.639{\pm}0.030 / 0.598{\pm}0.024$ ($n{=}10$, see Wave-13 augmentation in this section). With oracle attribution the values are $0.718 / 0.694 / 0.632$. At $m{=}10000$, $N{=}200K$ on LESS attribution: MtS achieves $u{=}0.617{\pm}0.014$, Stratified $u{=}0.603{\pm}0.033$, Greedy $u{=}0.587{\pm}0.028$ ($n{=}5$ each), the MtS$\geq$Strat$>$Greedy ordering is preserved at the larger budget too, with $\Delta_{\text{MtS-Strat}}{=}{+}0.014$. Mixture-then-Select's lead is preserved across all three attribution signals and across $m{\in}\{3500, 10K\}$ at $N{=}200K$, confirming the cost-class orthogonality lifts to the $4{\times}$ data pool.

{
\textbf{Full-data reference.} Qwen3-8B trained on all $50K$ samples for three epochs achieves $0.548{\pm}0.019$ utility ($n{=}5$), compared with $0.694$ for MtS's $1{,}000$-sample selection. Full-data per-domain utilities are $0.384$, $0.596$, $0.647$, $0.621$, $0.503$, $0.491$, $0.607$, and $0.535$ for math, code, general QA, dialog, reasoning, NLP tasks, complex instructions, and creative tasks, respectively. Total compute is reported in Table~\ref{tab:compute_equalized}.
\par}

\textbf{Cost-class on Llama-3.2-3B (selection-transfer).}
At $m{=}1000$ on Llama-3.2-3B with Qwen-derived selection-transfer ($n{=}10$): Mixture-then-Select~$\times$~LESS reaches $-0.120$, Mixture-then-Select~$\times$~EL2N reaches $-0.122$, Stratified~$\times$~LESS reaches $-0.132$, Stratified~$\times$~EL2N reaches $-0.137$. All four are mildly negative, but markedly less harmful than the published sample-level methods (LESS $-0.238$, EL2N $-0.244$, see Table~\ref{tab:llama32_3b_phase5}). At $m{=}5000$ the mitigation is even more dramatic: Mixture-then-Select~$\times$~LESS recovers to $-0.038$ and Mixture-then-Select~$\times$~EL2N recovers to \emph{positive} $+0.020$, while Stratified~$\times$~LESS / EL2N remain at $-0.233$ / $-0.123$. The published sample-level baselines (Greedy~$\times$~$\{$oracle, LESS, EL2N$\}$) at $m{=}5000$ Llama also recover partially: $-0.019 / -0.172 / -0.107$ respectively (vs.\ $-0.203 / -0.238 / -0.244$ at $m{=}1000$). The Mixture-then-Select vs.\ Stratified gap at $m{=}5000$ Llama is therefore $\Delta{=}{+}0.195$ (LESS) and $\Delta{=}{+}0.143$ (EL2N), much larger than at $m{=}1000$. The domain-mixture rule mitigates the base model mismatch even when the within-domain ranking comes from Qwen, and the mitigation grows with budget.

{
\textbf{Attribution and budget sweep at $N{=}200K$.} Table~\ref{tab:wave8_n200k_costclass} reports EL2N ($n{=}10$) and LESS ($n{=}5$) at $m{=}3500$, $10000$, and $20000$. MtS has higher mean utility than both controls in all six attribution-budget conditions. Its gaps over Stratified are $0.039$, $0.015$, and $0.054$ under EL2N, and $0.011$, $0.014$, and $0.020$ under LESS. At $m{=}3500$, EL2N-based MtS retains $94.2\%$ of oracle-based MtS utility ($0.6774/0.7184$).
\par}

\begin{table}[h]

\centering
\small
\caption{Attribution and selection rules on Qwen3-8B with $N{=}200K$ and LoRA $r{=}16$. EL2N results use ten seeds per cell; LESS results use five seeds per cell and approximately 280 selection GPU-hours per cell.}
\label{tab:wave8_n200k_costclass}
\resizebox{\linewidth}{!}{%
\begin{tabular}{l c c c c c}
\toprule
\textbf{Budget} & \textbf{MtS} & \textbf{Strat} & \textbf{Greedy} & $\boldsymbol{\Delta}_{\text{MtS}-\text{Strat}}$ & $\boldsymbol{\Delta}_{\text{MtS}-\text{Greedy}}$ \\
\midrule
\multicolumn{6}{l}{\emph{EL2N attribution ($n{=}10$ seeds per cell)}} \\
$m{=}3500$  ($1.75\%$) & $\mathbf{0.6774 \scriptstyle\pm 0.026}$ & $0.6387 \scriptstyle\pm 0.030$ & $0.5983 \scriptstyle\pm 0.024$ & $\mathbf{+0.039}$ & $\mathbf{+0.079}$ \\
$m{=}10000$ ($5\%$)    & $\mathbf{0.6447 \scriptstyle\pm 0.027}$ & $0.6296 \scriptstyle\pm 0.024$ & $0.6312 \scriptstyle\pm 0.016$ & $0.015$          & $0.014$          \\
$m{=}20000$ ($10\%$)   & $\mathbf{0.6230 \scriptstyle\pm 0.022}$ & $0.5688 \scriptstyle\pm 0.020$ & $0.6108 \scriptstyle\pm 0.024$ & $\mathbf{+0.054}$ & $0.012$          \\
\addlinespace[2pt] \cmidrule(l){1-6} \addlinespace[2pt]
\multicolumn{6}{l}{\emph{LESS attribution ($n{=}5$ seeds per cell)}} \\
$m{=}3500$  ($1.75\%$) & $\mathbf{0.6251 \scriptstyle\pm 0.033}$ & $0.6145 \scriptstyle\pm 0.010$ & $0.5796 \scriptstyle\pm 0.027$ & $0.011$          & $\mathbf{+0.046}$ \\
$m{=}10000$ ($5\%$)    & $\mathbf{0.6172 \scriptstyle\pm 0.016}$ & $0.6032 \scriptstyle\pm 0.037$ & $0.5873 \scriptstyle\pm 0.031$ & $0.014$          & $0.030$          \\
$m{=}20000$ ($10\%$)   & $\mathbf{0.5952 \scriptstyle\pm 0.017}$ & $0.5752 \scriptstyle\pm 0.024$ & $0.5833 \scriptstyle\pm 0.025$ & $0.020$          & $0.012$          \\
\bottomrule
\end{tabular}%
}

\end{table}

\subsection{{Robustness to domain definitions}}
\label{app:domain_robustness}

{The eight original domains are heuristic source-based skill assignments from 13 public datasets. Samples receive one operational assignment for selection; the following studies vary the partition or introduce ambiguous membership.\par}

{MtS requires an operational partition that is informative about utility, rather than one uniquely correct semantic taxonomy. We test changes in granularity, automatically derived labels and corrupted assignments. Merging eight domains into four retains a 3.5\% utility advantage over Stratified, compared with 4.3\% for the original partition. Replacing source labels with eight clusters of per-sample utility vectors retains a 3.2\% advantage, despite cluster sizes ranging from 1,434 to 10,996 (\Aref{app:domain_count_sensitivity}). This clustering is automatic given the measured utility vectors; it does not bypass their measurement.\par}

{In the rebuttal-period label-noise experiment, fixed 20\% symmetric corruption is applied without recalibrating the clean allocation. Across three seeds, the four-task downstream composite decreases by 0.7\% relative. Table~\ref{tab:rebuttal_domains_main} separates this downstream endpoint from the utility comparisons under alternative partitions.\par}

\begin{table}[htbp]

\centering
\small
\captionsetup{font={color=black},labelfont={color=black}}
\caption{Robustness to alternative and perturbed domain assignments. The first three rows report relative utility gains over Stratified; the final row reports the relative downstream change from clean labels.}
\label{tab:rebuttal_domains_main}
\begin{tabular}{@{}llr@{}}
\toprule
Domain assignment & Measurement & Result \\
\midrule
Original eight domains & Utility gain over Stratified & 4.3\% \\
Four coarsened domains & Utility gain over Stratified & 3.5\% \\
Eight utility-derived clusters & Utility gain over Stratified & 3.2\% \\
20\% corrupted labels & Four-task composite change & $-0.7\%$ \\
\bottomrule
\end{tabular}

\end{table}

{\textbf{Repairing ambiguous memberships.} We additionally examine 30\% two-domain ambiguity: 30\% of samples are assigned both their original domain and one plausible alternative. Simple reassignment substantially improves recovery of the clean selection. We hold the allocation and singleton values fixed and measure Jaccard overlap with the clean 1K selection across three perturbation seeds. Frozen Qwen3-8B embeddings represent each sample; candidate-domain prototypes are averages over confidently assigned samples. An ambiguous sample is assigned to the closer prototype, using either its embedding, its per-task utility profile, or an equal-weight combination of the two similarity signals. No additional model training is used for this repair diagnostic. Table~\ref{tab:rebuttal_overlap_main} reports recovery, not downstream performance after retraining.\par}

\begin{table}[htbp]

\centering
\small
\captionsetup{font={color=black},labelfont={color=black}}
\caption{Recovery of the clean selection when 30\% of samples have two possible domain labels. Jaccard is the intersection size divided by the union size of the repaired and clean selections, averaged over three perturbation seeds.}
\label{tab:rebuttal_overlap_main}
\begin{tabular}{@{}lr@{}}
\toprule
Assignment approach & Mean Jaccard \\
\midrule
Perturbed labels, no reassignment & 0.403 \\
Combined embedding and utility similarity & \textbf{0.806} \\
Embedding similarity to domain averages & 0.762 \\
Utility-profile similarity to domain averages & 0.749 \\
\bottomrule
\end{tabular}

\end{table}

{\textbf{End-to-end evaluation after reassignment.} A later study extends selection recovery to fresh calibration and final model training. We highlight two conditions in which the lambda-0.50 variant outperforms Stratified after embedding-based reassignment (Table~\ref{tab:overlap_end_to_end_main}). Both use the 50K pool, a 1K selection budget, Qwen3-8B, three fine-tuning epochs and training seeds 42, 123 and 456. Each operational partition receives 32 fresh calibration probes. The variant averages the learned allocation and pool-proportional allocation with equal weights, rounds to integer counts, and samples within each domain using a fixed random order; Stratified uses uniform domain counts and the frozen singleton ranking. Thus, this is an end-to-end variant comparison, not an isolated test of allocation shrinkage.\par}

\begin{table}[htbp]

\centering
\small
\captionsetup{font={color=black},labelfont={color=black}}
\caption{End-to-end evaluation of selected overlap conditions with embedding-based reassignment and fresh calibration. Entries are three-seed means. Equal4 is the mean of IFEval, MMLU, GSM8K, and normalized ARC-C. The tested variant is lambda-0.50, not original MtS.}
\label{tab:overlap_end_to_end_main}
\begin{tabular}{@{}llrr@{}}
\toprule
Ambiguous samples & Method & Held-out utility & Equal4 \\
\midrule
10\% & Stratified & 0.2886 & 0.6336 \\
     & MtS, $\lambda=0.50$ & \textbf{0.2957} & \textbf{0.6506} \\
\midrule
30\% & Stratified & 0.28856 & 0.62933 \\
     & MtS, $\lambda=0.50$ & \textbf{0.29746} & \textbf{0.64454} \\
\bottomrule
\end{tabular}

\end{table}

{At 30\% ambiguity, lambda-0.50 improves held-out utility by 3.09\% and Equal4 by 2.42\% relative to Stratified; both aggregate gains occur in all three seeds. Unlike the Jaccard diagnostic, these results measure the trained model's performance and show that the variant can outperform uniform domain allocation under the tested ambiguous assignments.\par}

{In practice, source or task metadata can define the initial groups, with coarsening or utility-derived clustering when the taxonomy is too fine or uninformative. Samples may express several skills while receiving one reproducible assignment for selection. The tests support these operational alternatives; choosing domain definitions jointly with calibration remains a broader question for mixture-based methods.\par}

\subsection{Additional robustness ablations: domain, rank, aggregation}
\label{app:robustness_ablations}

{
\textbf{Domain definition.} We apply m-means ($K{=}8$, random state $0$) to per-sample utility vectors from Qwen3-8B calibration and rerun both methods on the resulting partition. Cluster sizes range from $1{,}434$ to $10{,}996$, and learned mixture weights from $0.009$ to $0.215$. MtS and Stratified achieve $0.580$ and $0.562$, respectively, compared with $0.694$ and $0.666$ under source-based labels. MtS retains a positive gap, while both methods have lower absolute utility under clustering.
\par}

\textbf{Domain-count sensitivity ($T{=}4$ vs.\ $T{=}8$).}
\label{app:domain_count_sensitivity}
We further test whether the headline gap is specific to the $T{=}8$ partition by merging the eight domains pairwise into four macro-domains: \emph{quantitative}~$=$~math~$\cup$~reasoning, \emph{structured\_gen}~$=$~code~$\cup$~complex-instruction, \emph{open\_domain}~$=$~general\_qa~$\cup$~dialog, \emph{creative\_open}~$=$~nlp\_tasks~$\cup$~creative. For each macro-domain, we measure the per-domain utility $V_C$ via a fresh Phase~1 fine-tune on its top-$125$ samples by $v_{\text{agg}}$ ($n{=}1$ seed) and let Mixture-then-Select allocate proportional to $V_C$; Stratified-$T{=}4$ allocates uniformly ($250$ per macro-domain). MtS at $T{=}4$ retains a positive lead over Stratified, in the same direction as $T{=}8$ (Table~\ref{tab:domain_count_sensitivity}); the gap shrinks from $+0.029$ to $+0.022$ but does not flip, indicating the domain-mixture lever is not specific to the eight-way partition.

\begin{table}[h]

\centering
\small
\caption{Sensitivity to domain count on Qwen3-8B with LoRA $r{=}16$, $N{=}50K$, and $m{=}1K$. Results use five paired seeds; headline MtS at $T{=}8$ uses 15 with augmented data. Four-domain calibration uses fresh per-domain fine-tunes on the top-125 samples by $v_{\mathrm{agg}}$. $\Delta$ is relative to Stratified; reference cells are N/A.}
\label{tab:domain_count_sensitivity}
\begin{tabular}{l c c c c}
\toprule
\textbf{Method} & $T$ & $n$ & $u(S)$ & $\Delta$ vs.\ Stratified \\
\midrule
Mixture-then-Select & 8 & 15 & $0.6943 \scriptstyle\pm 0.014$ & $\mathbf{+0.0288}$ ($p{=}0.041$) \\
Stratified                   & 8 & 5  & $0.6662 \scriptstyle\pm 0.012$ & N/A \\
\addlinespace[2pt] \cmidrule{1-5} \addlinespace[2pt]
Mixture-then-Select-$T{=}4$ & 4 & 5  & $0.6461 \scriptstyle\pm 0.016$ & $\mathbf{+0.0217}$ ($p{=}0.132$) \\
Stratified-$T{=}4$                   & 4 & 5  & $0.6245 \scriptstyle\pm 0.020$ & N/A \\
\bottomrule
\end{tabular}

\end{table}

\textbf{Llama-3.2-3B rank scaling.}
Replicating Figure~\ref{fig:master_results}~(A) on Llama-3.2-3B-Instruct at $N{=}50K$, $m{=}1000$: Mixture-then-Select achieves $\{0.314, 0.281, 0.243\}$ and Stratified achieves $\{0.250, 0.190, 0.133\}$ at $r{\in}\{8, 16, 32\}$ respectively (all $n{=}5$). The $\Delta$ grows monotonically from $+0.064$ ($r{=}8$) to $+0.091$ ($r{=}16$) to $+0.110$ ($r{=}32$), exactly matching the Qwen3-8B pattern of strictly increasing rank gap.


\textbf{Aggregation-function ablation.}
The headline reports aggregate utility under linear (mean) aggregation. We re-aggregate the same per-domain training data under $f_{\min}(\cdot){=}\min_t u_t$ (coverage-emphasising) and $f_{\text{geo}}(\cdot){=}\exp(\frac{1}{T}\sum \log(1{+}u_t)){-}1$ (concave). On Llama-3.2-3B at $m{=}1000$: Mixture-then-Select aggregates to $+0.281 / +0.233 / +0.280$ under mean / min / geomean; Stratified to $+0.190 / +0.106 / +0.189$. The ordering is preserved, and under $f_{\min}$ the Mixture-then-Select advantage widens to $\Delta{=}{+}0.127$ because Stratified's uniform allocation has a bad worst-case domain. Mixture-then-Select is the only method whose worst domain remains positive on Llama-3.2-3B; every published sample-level method has at least one domain catastrophically harmed (their min over 8 domains reaches $-0.45$).

\textbf{Per-domain utility on Qwen3-8B.}
Table~\ref{tab:qwen_per_domain} disaggregates the headline aggregate ($N{=}50K$, $m{=}1000$, mean over $5$ to $10$ seeds) into the eight skill domains. The pattern is clean: Mixture-then-Select has the highest per-domain utility on \emph{every} domain, including the hardest (creative, $+0.582$ vs.\ next-best Stratified $+0.540$) and the easiest (math, $+0.766$ vs.\ next-best $+0.736$); the win is not concentrated on any single domain. The $\sim 0.07$ aggregate gap therefore reflects per-domain \emph{depth}, not just averaging luck.

\begin{table}[h]

\centering
\small
\setlength{\tabcolsep}{4pt}
\caption{Per-domain relative loss reduction $u_c$ on Qwen3-8B at $N{=}50K$, $m{=}1K$, averaged over seeds. Per-method seed counts follow \Cref{tab:master_leaderboard}; the mean column gives aggregate utility.}
\label{tab:qwen_per_domain}
\resizebox{\linewidth}{!}{%
\begin{tabular}{l c c c c c c c c c}
\toprule
\textbf{Method} & math & code & QA & dlg & rsn & nlp & cpx & crv & \textbf{agg} \\
\midrule
Mixture-then-Select      & $\mathbf{+0.766}$ & $\mathbf{+0.732}$ & $\mathbf{+0.717}$ & $\mathbf{+0.703}$ & $\mathbf{+0.692}$ & $\mathbf{+0.651}$ & $\mathbf{+0.698}$ & $\mathbf{+0.582}$ & $\mathbf{+0.693}$ \\
Stratified                 & $0.736$ & $0.653$ & $0.622$ & $0.610$ & $0.662$ & $0.510$ & $0.629$ & $0.540$ & $0.620$ \\
Greedy (oracle)                   & $0.646$ & $0.609$ & $0.611$ & $0.601$ & $0.632$ & $0.517$ & $0.641$ & $0.524$ & $0.598$ \\
Greedy + EL2N                     & $0.687$ & $0.655$ & $0.622$ & $0.599$ & $0.610$ & $0.563$ & $0.593$ & $0.507$ & $0.605$ \\
LESS                              & $0.675$ & $0.617$ & $0.607$ & $0.577$ & $0.611$ & $0.462$ & $0.601$ & $0.485$ & $0.579$ \\
Superfiltering                    & $0.658$ & $0.615$ & $0.595$ & $0.547$ & $0.565$ & $0.466$ & $0.565$ & $0.423$ & $0.554$ \\
\bottomrule
\end{tabular}%
}

\end{table}

\textbf{Weighted-mean sensitivity.}
The headline $u(S){=}\tfrac{1}{T}\sum_c u_c(S)$ uses uniform weights. Table~\ref{tab:qwen_weighted_mean} re-aggregates the same per-domain data under non-uniform weights that emphasise individual skills $3{\times}$ (math, code, reasoning, complex). The MtS advantage is preserved across all five weighting schemes, the method's win does not depend on the uniform-weight assumption.

\begin{table}[h]

\centering
\small
\setlength{\tabcolsep}{6pt}
\caption{Aggregate utility under five task-weighting schemes on Qwen3-8B at $N{=}50K$, $m{=}1K$. A scheme labeled $x$ $3{\times}$ gives domain $x$ weight 3 and each other domain weight 1, then normalizes the weights to sum to 8.}
\label{tab:qwen_weighted_mean}
\begin{tabular}{l c c c c c}
\toprule
\textbf{Method} & Uniform & math 3$\times$ & code 3$\times$ & reason 3$\times$ & complex 3$\times$ \\
\midrule
Mixture-then-Select      & $\mathbf{+0.693}$ & $\mathbf{+0.707}$ & $\mathbf{+0.700}$ & $\mathbf{+0.693}$ & $\mathbf{+0.694}$ \\
Stratified                 & $0.620$ & $0.643$ & $0.627$ & $0.628$ & $0.622$ \\
Greedy (oracle)                   & $0.598$ & $0.608$ & $0.600$ & $0.605$ & $0.606$ \\
Greedy + EL2N                     & $0.605$ & $0.621$ & $0.615$ & $0.606$ & $0.602$ \\
LESS                              & $0.579$ & $0.598$ & $0.587$ & $0.586$ & $0.584$ \\
Superfiltering                    & $0.554$ & $0.575$ & $0.566$ & $0.556$ & $0.556$ \\
\bottomrule
\end{tabular}

\end{table}

\subsection{{Frozen-selection transfer to a larger model}}
\label{app:transfer_72b}

{\textbf{Frozen-subset transfer to 72B.} We fine-tune Qwen2-72B using the exact submitted Qwen3-8B selection, without recalibrating or reselecting samples (Table~\ref{tab:rebuttal_72b_main}). The larger model obtains higher MMLU, GSM8K and ARC-Challenge scores, while IFEval is lower; unlike the instruction-aligned 8B model, the 72B model is a base model. This experiment tests whether the selected subset can be reused at a larger model scale, not whether MtS beats other selection methods on 72B.\par}

\begin{table}[htbp]

\centering
\small
\captionsetup{font={color=black},labelfont={color=black}}
\caption{Benchmark evaluation of the frozen Qwen3-8B selection on Qwen2-72B, without recalibration or reselection. The models differ in both size and instruction alignment.}
\label{tab:rebuttal_72b_main}
\begin{tabular}{@{}lrrrr@{}}
\toprule
Model & IFEval & MMLU & GSM8K & ARC-C \\
\midrule
Qwen3-8B & 0.4894 & 0.7061 & 0.7973 & 0.5660 \\
Qwen2-72B base & 0.3777 & 0.8376 & 0.8764 & 0.6348 \\
\bottomrule
\end{tabular}

\end{table}

\subsection{Llama-3.2-3B native run: resolves the selection-transfer tie}
\label{app:llama32_3b}

{
\textbf{Setup and results.} Transferring the Qwen-derived selection to Llama-3.2-3B gives a MtS-Stratified gap of $-0.030$ ($p{=}0.45$, $n{=}5$). Recalibrating per-sample values on Llama while retaining the Qwen-derived mixture gives a gap of $+0.091$ ($p{=}0.0067$, paired, $n{=}5$). Table~\ref{tab:llama32_3b_phase5} reports means and variability; Table~\ref{tab:llama32_3b_per_domain} reports domain-level results for seed $42$.
\par}

\begin{table}[h]

\centering
\caption{Utility on Llama-3.2-3B with LoRA $r{=}16$ and $m{=}1K$. Entries are means $\pm$ standard deviations over five seeds ($\dagger$: ten seeds). Utility differences and paired tests use Stratified as the reference.}
\label{tab:llama32_3b_phase5}
\small
\begin{tabular}{l c c c c}
\toprule
\textbf{Method} & \textbf{Mean} & \textbf{Std} & \textbf{$\Delta$ vs.\ Strat} & \textbf{$p$ (paired)} \\
\midrule
Mixture-then-Select (Qwen $\pi^*$ + Llama $v_{\text{agg}}$) & $\mathbf{+0.281}$ & $0.035$ & $\mathbf{+0.091}$ & $\mathbf{0.0067}$ \\
Stratified (Llama $v_{\text{agg}}$ within-domain)         & $0.190$           & $0.051$ & N/A                  & N/A \\
Random (Qwen-selection-transfer)                                  & $0.076$           & $0.056$ & $-0.114$             & $0.012$ \\
Facility Location (transfer)                                      & $-0.016$           & $0.051$ & $-0.206$             & $<0.001$ \\
DEITA (transfer)$^{\dagger}$                                      & $-0.109$           & $0.089$ & $-0.299$             & $<0.001$ \\
$k$-DPP (transfer)                                                & $-0.101$           & $0.038$ & $-0.291$             & $<0.001$ \\
Superfiltering (transfer)$^{\dagger}$                             & $-0.179$           & $0.063$ & $-0.369$             & $<0.001$ \\
Greedy-by-Value (transfer)                                        & $-0.203$           & $0.089$ & $-0.394$             & $<0.001$ \\
CRAIG (transfer)                                                  & $-0.204$           & $0.067$ & $-0.395$             & $<0.001$ \\
LESS (transfer)                                                   & $-0.238$           & $0.090$ & $-0.428$             & $<0.001$ \\
EL2N (transfer)                                                   & $-0.244$           & $0.055$ & $-0.434$             & $<0.001$ \\
\bottomrule
\end{tabular}
\par\smallskip
{\footnotesize\raggedright
MtS retains Qwen's mixture with Llama-recalibrated rankings; Stratified uses Llama rankings. Other rows transfer Qwen-selected subsets.\par}

\end{table}

\begin{table}[h]

\centering
\caption{Per-domain utility on Llama-3.2-3B for seed 42. Utility is $u_c=(L_c^{\mathrm{base}}-L_c^S)/L_c^{\mathrm{base}}$, where the two losses are measured on held-out domain-$c$ data before and after fine-tuning on subset $S$.}
\label{tab:llama32_3b_per_domain}
\footnotesize
\setlength{\tabcolsep}{4pt}
\begin{tabular}{l c c c c c c c c c}
\toprule
\textbf{Method} & math & code & gen.\ qa & dialog & reason. & nlp & cmplx & creative & \textbf{agg} \\
\midrule
Mixture-then-Select      & $0.325$ & $0.228$ & $0.284$ & $0.206$ & $0.295$ & $0.244$ & $0.218$ & $0.210$ & $\mathbf{+0.251}$ \\
Stratified            & $0.080$ & $0.173$ & $0.176$ & $0.126$ & $0.168$ & $0.177$ & $0.158$ & $0.158$ & $0.152$ \\
LESS                         & $-0.325$ & $-0.112$ & $-0.156$ & $-0.184$ & $-0.141$ & $-0.137$ & $-0.168$ & $-0.120$ & $-0.168$ \\
EL2N                         & $-0.306$ & $-0.188$ & $-0.297$ & $-0.254$ & $-0.207$ & $-0.252$ & $-0.191$ & $-0.187$ & $-0.235$ \\
\bottomrule
\end{tabular}

\end{table}

\begin{figure}[h]
\centering
\includegraphics[width=0.85\linewidth]{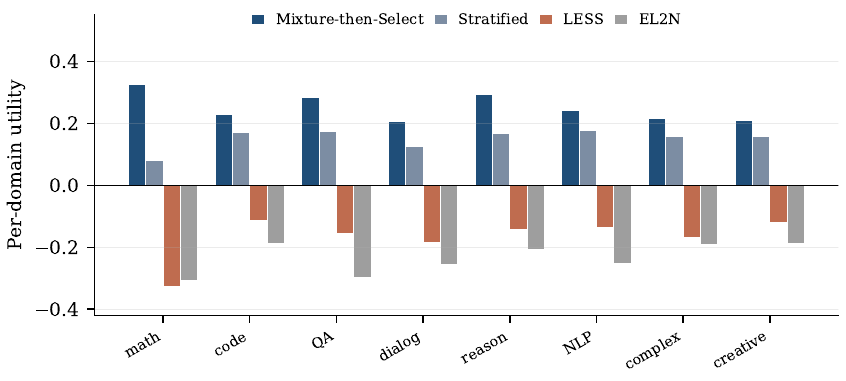}
\caption{Per-domain utility on Llama-3.2-3B at $N{=}50K$, $m{=}1K$, for seed 42. Numerical values are in \Cref{tab:llama32_3b_per_domain}.}
\label{fig:llama_per_domain}

\end{figure}

\subsection{Capacity-gated domain mixture: data pool-$N$ and rank behavior}
\label{app:capacity-gating}
\label{app:pi_star_n200k}

{
The following results cover additional pool-size and adapter-rank settings. The headline calibration uses Qwen3-8B with LoRA rank $16$; the rank for each additional setting is reported below.
\par}

\begin{figure}[h]
\centering
\includegraphics[width=0.7\linewidth]{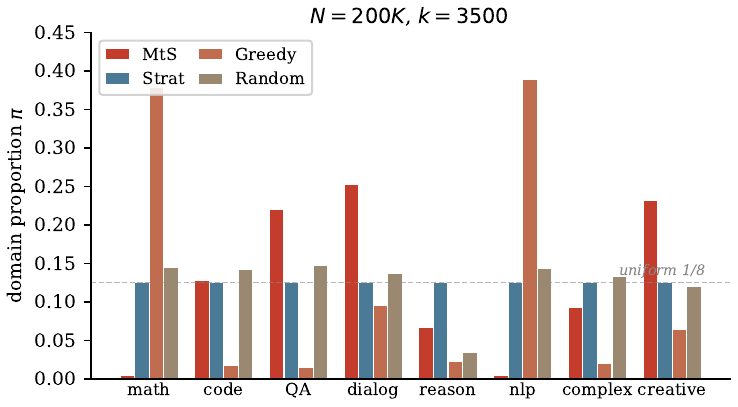}
\caption{Selected domain proportions on Qwen3-8B at $N{=}200K$, $m{=}3.5K$. The dashed line shows uniform allocation ($1/8$); the corresponding 50K comparison is in \Cref{fig:main_diagnostic}A.}
\label{fig:pi_star_n200k}

\end{figure}

\textbf{Data Pool-$N$ gradient.}

{The table reports MtS and Stratified at three pool-size/rank settings. Because the ranks differ across rows, this comparison does not isolate pool size.\par}

\begin{table}[h]

\centering
\footnotesize
\caption{MtS's utility difference from Stratified across pool sizes. The 50K condition uses LoRA $r{=}16$; 100K and 200K use $r{=}8$, so pool size and rank vary together.}
\label{tab:data pool-N-gradient}
\begin{tabular}{l c c c c}
\toprule
$N$ & LoRA $r$ & $\Delta$(MG $-$ Strat) & paired $p$ & $n$ \\
\midrule
50K  & 16 & $+$0.099 & $<$0.001 & 10 \\
100K & 8  & $+$0.036 & 0.062    & 5 \\
200K & 8  & $+$0.053 & 0.001    & 5 \\
\bottomrule
\end{tabular}

\end{table}

{
These pool-size comparisons use ranks $8$ and $16$. Cross-model transfer is reported separately in Appendix~\ref{app:llama32_3b}.
\par}

\subsection{Bundle-level data pricing: setup, per-instance numbers, and negative results}
\label{app:pricing_pilot}

This section gives the full setup, per-instance results, and ablations behind \Cref{tab:pricing_pilot} in \Cref{sec:pricing_pilot}.

\begin{table}[htbp]

\centering
\small

\caption{Bundle-pricing settings and comparison definitions.}
\label{tab:pricing_setup_compact}
\begin{tabular}{@{}p{0.24\linewidth}p{0.71\linewidth}@{}}
\toprule
\textbf{Setting} & \textbf{Specification} \\
\midrule
Pool & Math, code, general QA; five vendors per cluster, $50$ samples per indivisible bundle ($750$ samples total). \\
Prices / budgets & One price per cluster, $p_c\sim\mathcal{U}[0.5,1.5]$ independently; budgets $15\%,20\%,25\%$ of $\sum_c5p_c$; ten price/utility shuffles shared across methods. \\
Training & Llama-3.2-3B-Instruct, LoRA rank $16$, three epochs, learning rate $2\times10^{-4}$; five training seeds. \\
Candidate allocations & All feasible cluster counts $n_c\in\{0,\ldots,5\}$, selecting the top $n_c$ vendors by individual utility within each cluster. \\
Additive LP & Maximizes $\sum_c V_c(n_c)$, where $V_c$ is five-seed held-out utility from training on that cluster alone. \\
Joint-aware MtS & Scores the same feasible allocations by measured joint-training utility. \\
Oracle reference & All vendor combinations feasible at $25\%$ budget screened at seed $42$; top $50$ refined with four additional seeds. Best five-seed mean among refined candidates feasible at each budget. \\
\bottomrule
\end{tabular}

\end{table}

{
\textbf{Per-instance results.} Table~\ref{tab:pricing_pilot_full} reports all $30$ cells. Mean utility is $0.240$ for joint-aware MtS and $0.160$ for additive LP. The reference oracle refines only the top $50$ single-seed candidates; MtS exceeds this restricted reference in two cells.
\par}

\begin{table}[!t]

\centering
\footnotesize
\setlength{\tabcolsep}{4pt}
\caption{Per-instance data-pricing results on Llama-3.2-3B-Instruct with LoRA $r{=}16$. Entries are means $\pm$ standard deviations over five seeds, covering 10 instances at each of three budgets.}
\label{tab:pricing_pilot_full}
\resizebox{\linewidth}{!}{
\begin{tabular}{@{}cc ccc cc@{}}
\toprule
\textbf{inst} & \textbf{$B\%$} & \textbf{Add.\ LP} & \textbf{MtS} (joint-aware) & \textbf{Oracle} & \textbf{$\Delta$ MtS$-$Add} & \textbf{Recovery} \\
\midrule
$0$ & $15$ & $0.172$            & $\mathbf{0.201{\pm}0.014}$ & $0.229$         & $+0.029$ & $50.9\%$ \\
$0$ & $20$ & $0.139$            & $\mathbf{0.235{\pm}0.009}$ & $0.236$         & $+0.096$ & $99.7\%$ \\
$0$ & $25$ & $0.140$            & $\mathbf{0.240{\pm}0.022}$ & $0.240$         & $+0.100$ & $100\%$  \\
$1$ & $15$ & $0.180$            & $\mathbf{0.215{\pm}0.015}$ & $0.234$         & $+0.034$ & $64.0\%$ \\
$1$ & $20$ & $0.136$            & $\mathbf{0.215{\pm}0.015}$ & $0.234$         & $+0.079$ & $80.3\%$ \\
$1$ & $25$ & $0.136$            & $\mathbf{0.247{\pm}0.014}$ & $0.247$         & $+0.112$ & $100\%$  \\
$2$ & $15$ & $0.233$            & $\mathbf{0.255{\pm}0.019}$ & $0.246$         & $+0.022$ & $>100\%$\,\dag \\
$2$ & $20$ & $0.173$            & $\mathbf{0.255{\pm}0.021}$ & $0.263$         & $+0.082$ & $91.2\%$ \\
$2$ & $25$ & $0.159$            & $\mathbf{0.262{\pm}0.018}$ & $0.268$         & $+0.103$ & $94.2\%$ \\
$3$ & $15$ & $0.184$            & $\mathbf{0.240{\pm}0.014}$ & $0.247$         & $+0.056$ & $89.3\%$ \\
$3$ & $20$ & $0.141$            & $\mathbf{0.240{\pm}0.014}$ & $0.247$         & $+0.099$ & $93.6\%$ \\
$3$ & $25$ & $0.141$            & $\mathbf{0.245{\pm}0.014}$ & $0.247$         & $+0.103$ & $98.1\%$ \\
$4$ & $15$ & $0.178$            & $\mathbf{0.248{\pm}0.014}$ & $0.248$         & $+0.070$ & $100\%$  \\
$4$ & $20$ & $0.138$            & $\mathbf{0.248{\pm}0.014}$ & $0.248$         & $+0.109$ & $100\%$  \\
$4$ & $25$ & $0.089$            & $\mathbf{0.253{\pm}0.013}$ & $0.253$         & $+0.164$ & $100\%$  \\
$5$ & $15$ & $0.220$            & $\mathbf{0.253{\pm}0.017}$ & $0.253$         & $+0.034$ & $>100\%$\,\dag \\
$5$ & $20$ & $0.185$            & $\mathbf{0.258{\pm}0.021}$ & $0.265$         & $+0.073$ & $91.1\%$ \\
$5$ & $25$ & $0.185$            & $\mathbf{0.258{\pm}0.021}$ & $0.265$         & $+0.073$ & $91.1\%$ \\
$6$ & $15$ & $0.162$            & $\mathbf{0.190{\pm}0.019}$ & $0.190$         & $+0.028$ & $100\%$  \\
$6$ & $20$ & $0.101$            & $\mathbf{0.249{\pm}0.016}$ & $0.249$         & $+0.149$ & $100\%$  \\
$6$ & $25$ & $0.116$            & $\mathbf{0.254{\pm}0.015}$ & $0.254$         & $+0.138$ & $100\%$  \\
$7$ & $15$ & $0.233$            & $\mathbf{0.248{\pm}0.010}$ & $0.248$         & $+0.015$ & $100\%$  \\
$7$ & $20$ & $0.157$            & $\mathbf{0.248{\pm}0.010}$ & $0.248$         & $+0.091$ & $100\%$  \\
$7$ & $25$ & $0.157$            & $\mathbf{0.248{\pm}0.010}$ & $0.248$         & $+0.091$ & $100\%$  \\
$8$ & $15$ & $0.196$            & $\mathbf{0.213{\pm}0.011}$ & $0.229$         & $+0.017$ & $52.5\%$ \\
$8$ & $20$ & $0.140$            & $\mathbf{0.244{\pm}0.010}$ & $0.244$         & $+0.104$ & $99.9\%$ \\
$8$ & $25$ & $0.124$            & $\mathbf{0.251{\pm}0.017}$ & $0.251$         & $+0.127$ & $100\%$  \\
$9$ & $15$ & $0.209$            & $\mathbf{0.230{\pm}0.006}$ & $0.230$         & $+0.021$ & $100\%$  \\
$9$ & $20$ & $0.153$            & $\mathbf{0.230{\pm}0.006}$ & $0.230$         & $+0.077$ & $100\%$  \\
$9$ & $25$ & $0.126$            & $\mathbf{0.230{\pm}0.011}$ & $0.230$         & $+0.105$ & $100\%$  \\
\midrule
\multicolumn{2}{l}{\emph{Mean across $30$ cells}} & $0.160$ & $\mathbf{0.240}$ & $0.244$ & $+0.080$ & $93.2\%$ \\
\bottomrule
\end{tabular}
}\\[2pt]
\dag\,MtS exceeds Stage-B oracle in these cells (Stage~B's top-$50$ truncation excluded the bundle MtS happened to pick; its $5$-seed mean turned out higher).

\end{table}

{
\textbf{Within-cluster scoring controls.} We tested replacing each solo-utility curve by its cumulative maximum and capping each cluster's allocation at its solo-utility maximizer. Neither changed additive LP's selection in any of the $30$ cells. Increasing budget does not consistently increase its measured joint-training utility.
\par}

\section{Broader impact}
\label{app:impact}

\textbf{Positive impacts.}
Interaction-aware data selection can reduce waste in LLM fine-tuning by helping practitioners assemble smaller, higher-value training subsets.
The framework may be especially useful for resource-constrained groups that cannot afford exhaustive trial-and-error over large data pools.

\textbf{Negative impacts.}
Interaction-aware selection could amplify existing dataset biases if certain domains or styles are repeatedly judged redundant and filtered out.
If redundancy and complementarity correlate with demographic or linguistic attributes, optimizing only for measured utility may under-select already underrepresented data.

\textbf{Mitigations.}
The matroid and multi-objective extensions can encode diversity or coverage constraints directly.
More broadly, the framework makes interactions explicit, which can help practitioners audit whether seemingly efficient selections are systematically excluding important data types.

\end{document}